\newcites{app}{Appendix References}
\newtheorem*{rep@theorem}{\rep@title}
\newcommand{\newreptheorem}[2]{%
\newenvironment{rep#1}[1]{%
\def\rep@title{#2 \ref{##1}}%
\begin{rep@theorem}}%
{\end{rep@theorem}}}
\def\Nset{\mathbb{N}}
\def\Rset{\mathbb{R}}
\DeclareMathOperator*{\E}{\mathbb E}
\DeclareMathOperator*{\argmax}{argmax}
\DeclareMathOperator*{\argmin}{argmin}
\DeclareMathOperator{\orig}{orig}
\DeclareMathOperator{\dest}{dest}
\DeclareMathOperator{\weight}{weight}
\DeclareMathOperator*{\Det}{Det}
\DeclareMathOperator{\conv}{conv}
\newcommand{\h}{\widehat}
\newcommand{\tl}{\widetilde}
\newcommand{\ov}{\overline}
\newcommand{\cD}{{\mathcal D}}
\newcommand{\cF}{{\mathcal F}}
\newcommand{\cH}{{\mathcal H}}
\newcommand{\cX}{{\mathcal X}}
\newcommand{\cY}{{\mathcal Y}}
\newcommand{\cU}{{\mathcal U}}
\newcommand{\I}{\mathbf 1}
\newcommand{\mat}[1]{{\mathbf #1}}
\newcommand{\bh}{\mat{h}}
\newcommand{\bu}{\mat{u}}
\newcommand{\bw}{\mat{w}}
\newcommand{\bx}{\mat{x}}
\newcommand{\bz}{\mat{z}}
\newcommand{\bPsi}{\mat{\Psi}}
\newcommand{\Alpha}{{\boldsymbol \alpha}}
\newcommand{\bpsi}{{\boldsymbol \psi}}
\newcommand{\bsigma}{{\boldsymbol \sigma}}
\newcommand{\be}{{\boldsymbol \epsilon}}
\newcommand{\bup}{{\boldsymbol \upsilon}}
\renewcommand{\sf}{\mathsf}
\newcommand{\hh}{{\sf h}}
\newcommand{\qq}{{\sf q}}
\newcommand{\QQ}{{\sf Q}}
\newcommand{\scrA}{{\mathscr A}}
\newcommand{\scrB}{{\mathscr B}}
\newcommand{\scrC}{{\mathscr C}}
\newcommand{\scrM}{{\mathscr M}}
\newcommand{\scrN}{{\mathscr N}}
\newcommand{\scrT}{{\mathscr T}}
\newcommand{\scrU}{{\mathscr U}}
\newcommand{\scrY}{{\mathscr Y}}
\newcommand{\Rad}{\mathfrak R}
\newcommand{\loss}{{\mathsf L}}
\newcommand{\VCRF}{VCRF}
\newcommand{\StructBoost}{StructBoost}
\newcommand{\VStructBoost}{VStructBoost}
\newcommand{\n}{\mat{n}}
\newcommand{\N}{\mat{N}}
\newcommand{\F}{\Phi^*}
\newcommand{\e}{\epsilon}
\newcommand{\ve}{\varepsilon}
\renewcommand{\d}{\delta}
\newcommand{\set}[2][]{#1 \{ #2 #1 \} }
\newcommand{\ignore}[1]{}
\newtheorem{theorem}{Theorem}
\newtheorem{lemma}[theorem]{Lemma}
\newtheorem{corollary}[theorem]{Corollary}
\title{Structured Prediction Theory Based on\\ Factor Graph Complexity}
\author{
Corinna Cortes \\
\phantom{xXX} Google Research \phantom{XXXx}\\
New York, NY 10011 \\
\texttt{\small corinna@google.com} \\
\And
Vitaly Kuznetsov \\
Google Research\\
New York, NY 10011 \\
\texttt{\small vitaly@cims.nyu.edu} \\
\AND
Mehryar Mohri \\
Courant Institute and Google\\
New York, NY 10012 \\
\texttt{\small mohri@cims.nyu.edu} \\
\And
Scott Yang \\
Courant Institute\\
\ignore{New York University\\}
New York, NY 10012 \\
\texttt{\small yangs@cims.nyu.edu}
}
\begin{document}

\maketitle

\begin{abstract}
  We present a general theoretical analysis of structured prediction
  with a series of new results. We give new data-dependent margin
  guarantees for structured prediction for a very wide family of loss
  functions and a general family of hypotheses, with an arbitrary
  factor graph decomposition. These are the tightest margin bounds
  known for both standard multi-class and general structured
  prediction problems.  Our guarantees are expressed in terms of a
  data-dependent complexity measure, \emph{factor graph complexity},
  which we show can be estimated from data and bounded in terms of
  familiar quantities for several commonly used hypothesis sets along
  with a sparsity measure for features and graphs. Our proof
  techniques include generalizations of Talagrand's contraction lemma
  that can be of independent interest.

  We further extend our theory by leveraging the principle of Voted
  Risk Minimization (VRM) and show that learning is possible even with
  complex factor graphs.  We present new learning bounds for this
  advanced setting, which we use to design two new algorithms,
  \emph{Voted Conditional Random Field} (VCRF) and \emph{Voted
    Structured Boosting} (StructBoost).  These algorithms can make use
  of complex features and factor graphs and yet benefit from
  favorable learning guarantees. We also report the results of
  experiments with VCRF on several datasets to validate our theory.

\end{abstract}

\section{Introduction}
\label{sec:intro}

Structured prediction covers a broad family of important learning
problems. These include key tasks in natural language processing such
as part-of-speech tagging, parsing, machine translation, and
named-entity recognition, important areas in computer
vision such as image segmentation and object recognition, and also
crucial areas in speech processing such as pronunciation modeling and
speech recognition.

In all these problems, the output space admits some structure. This
may be a sequence of tags as in part-of-speech tagging, a parse tree
as in context-free parsing, an acyclic graph as in dependency parsing,
or labels of image segments as in object detection. Another property
common to these tasks is that, in each case, the natural loss function
admits a decomposition along the output substructures.  As an example,
the loss function may be the Hamming loss as in part-of-speech
tagging, or it may be the edit-distance, which is widely used in
natural language and speech processing.

The output structure and corresponding loss function make these
problems significantly different from the (unstructured) binary
classification problems extensively studied in learning theory. In
recent years, a number of different algorithms have been designed for
structured prediction, including Conditional Random Field (CRF)
\citep{LaffertyMcCallumPereira2001}, StructSVM
\citep{TsochantaridisJoachimsHofmannAltun2005}, Maximum-Margin Markov
Network (M3N) \citep{TaskarGuestrinKoller2003}, a kernel-regression
algorithm \citep{CortesMohriWeston2007}, and search-based approaches
such as
\citep{HalDaumeIIILangfordMarcu2009,DoppaFernTadepalli2014,LamDoppaTodorovicDietterich2015,ChangKrishnamurthyAgarwalHalDaumeIIILangford15,RossGordonBagnell2011}.
More recently, deep learning techniques have also been developed for
tasks including part-of-speech tagging
\citep{JurafskyMartin2009, VinyalsKaiserKooPetrovSutskeverHinton2015},
named-entity recognition \citep{NadeauSekine2007}, machine translation
\citep{ZhangSunLi2008}, image segmentation \citep{LucchiLiFua2013},
and image annotation \citep{VinyalsToshevBengioErhan2015}.

However, in contrast to the plethora of algorithms, there have been
relatively few studies devoted to the theoretical understanding of
structured prediction
\citep{BakirHofmannScholkopfSmolaTaskarVishwanathan2007}.  Existing
learning guarantees hold primarily for simple losses such as
the Hamming loss
\citep{TaskarGuestrinKoller2003,CortesKuznetsovMohri2014,Collins2001}
and do not cover other natural losses such as the edit-distance. They
also typically only apply to specific factor graph models.  The main
exception is the work of \cite{McAllester2007}, which provides
PAC-Bayesian guarantees for arbitrary losses, though only in the
special case of randomized algorithms using linear (count-based)
hypotheses.  \ignore{ Two exceptions are the work of
  \cite{McAllester2007}, which provides PAC-Bayesian guarantees for
  arbitrary losses in the special case of randomized algorithms with
  linear (count-based) hypotheses, and the recent work of
  \cite{LondonHuangGetoor2016}, which gives PAC-Bayesian guarantees
  for a specific family of stable losses.}

This paper presents a general theoretical analysis of structured
prediction with a series of new results.  We give new data-dependent
margin guarantees for structured prediction for a broad family of loss
functions and a general family of hypotheses, with an arbitrary factor
graph decomposition.  These are the tightest margin bounds known for
both standard multi-class and general structured prediction
problems. For special cases studied in the past, our learning bounds
match or improve upon the previously best bounds (see
Section~\ref{sec:special}). In particular, our bounds improve upon
those of \cite{TaskarGuestrinKoller2003}. Our guarantees are expressed
in terms of a data-dependent complexity measure, \emph{factor graph
  complexity}, which we show can be estimated from data and bounded in
terms of familiar quantities for several commonly used hypothesis
sets along with a sparsity measure for features and graphs.

We further extend our theory by leveraging the principle of Voted Risk
Minimization (VRM) and show that learning is possible even with
complex factor graphs.  We present new learning bounds for this
advanced setting, which we use to design two new 
algorithms, \emph{Voted Conditional Random Field} (VCRF) and
\emph{Voted Structured Boosting} (StructBoost).  These algorithms can
make use of complex features and factor graphs and yet benefit
from favorable learning guarantees.  As a proof of concept validating
our theory, we also report the results of experiments with VCRF on
several datasets.

The paper is organized as follows. In Section~\ref{sec:scenario} we
introduce the notation and definitions relevant to our discussion of
structured prediction. In Section~\ref{sec:bounds}, we derive a series
of new learning guarantees for structured prediction, which are then
used to prove the VRM principle in
Section~\ref{sec:vrm}. Section~\ref{sec:algo} develops the algorithmic
framework which is directly based on our theory.  In
Section~\ref{sec:experiments}, we provide some preliminary
experimental results that serve as a proof of concept for our theory.

\section{Preliminaries}
\label{sec:scenario}

Let $\cX$ denote the input space and $\cY$ the output space. In
structured prediction, the output space may be a set of sequences,
images, graphs, parse trees, lists, or some other (typically discrete)
objects admitting some possibly overlapping structure.  Thus, we
assume that the output structure can be decomposed into $l$
substructures.  For example, this may be positions along a sequence,
so that the output space $\cY$ is decomposable along these
substructures: $\cY = \cY_1 \times \cdots \times \cY_l$. Here, $\cY_k$
is the set of possible labels (or classes) that can be assigned to
substructure $k$.

{\bf Loss functions}. We denote by
$\loss\colon \cY \times \cY \to \Rset_+$ a loss function measuring the
dissimilarity of two elements of the output space $\cY$.  We will
assume that the loss function $\loss$ is \emph{definite}, that is
$\loss(y, y') = 0$ iff $y = y'$. This assumption holds for all
loss functions commonly used in structured prediction.  A key aspect
of structured prediction is that the loss function can be decomposed
along the substructures $\cY_k$.  As an example, $\loss$ may be
the Hamming loss defined by
$\loss(y, y') = \frac{1}{l} \sum_{k = 1}^l 1_{y_k \neq y'_k}$ for
all $y = (y_1, \ldots, y_l)$ and $y' = (y'_1, \ldots, y'_l)$, with
$y_k, y'_k \in \cY_k$.  In the common case where $\cY$ is a set of
sequences defined over a finite alphabet, $\loss$ may be the
edit-distance, which is widely used in natural language and speech
processing applications, with possibly different costs associated to
insertions, deletions and substitutions.  $\loss$ may also be a loss
based on the negative inner product of the vectors of $n$-gram counts
of two sequences, or its negative logarithm. Such losses have been
used to approximate the BLEU score loss in machine translation. There are
other losses defined in computational biology based on various
string-similarity measures. Our theoretical analysis is general and
applies to arbitrary bounded and definite loss functions.

{\bf Scoring functions and factor graphs}.  We will adopt the common
approach in structured prediction where predictions are based on a
\emph{scoring function} mapping $\cX \times \cY$ to $\Rset$. Let $\cH$
be a family of scoring functions. For any $h \in \cH$, we denote by
$\hh$ the predictor defined by $h$: for any $x \in \cX$,
$\hh(x) = \argmax_{y \in \cY} h(x, y)$.

Furthermore, we will assume, as is standard in structured prediction,
that each function $h \in \cH$ can be decomposed as a sum.  We will
consider the most general case for such decompositions, which can be
made explicit using the notion of \emph{factor
  graphs}.\footnote{Factor graphs are typically used to indicate the
  factorization of a probabilistic model. We are not assuming
  probabilistic models, but they would be also captured by our general
  framework: $h$ would then be $\text{-}\log$ of a probability.}  A factor
graph $G$ is a tuple $G = (V, F, E)$, where $V$ is a set of variable
nodes, $F$ a set of factor nodes, and $E$ a set of undirected edges
between a variable node and a factor node. In our context, $V$ can be
identified with the set of substructure indices, that is
$V = \set{1, \ldots, l}$.

For any factor node $f$, denote by $\scrN(f) \subseteq V$ the set of
variable nodes connected to $f$ via an edge and define $\cY_f$ as
the substructure set cross-product $\cY_f = \prod_{k \in \scrN(f)} \cY_k$.
Then, $h$ admits the following decomposition as a sum of functions
$h_f$, each taking as argument an element of the input space
$x \in \cX$ and an element of $\cY_f$, $y_f \in \cY_f$:
\begin{equation}
\label{eq:decomposition}
h(x, y) = \sum_{f \in F} h_f(x, y_f).
\end{equation}
Figure~\ref{fig:factor_graph} illustrates this definition with two
different decompositions. More generally, we will consider the setting
in which a factor graph may depend on a particular example $(x_i, y_i)$:
$G(x_i, y_i) = G_i = ([l_i], F_i, E_i)$. A special case of this
setting is for example when the size $l_i$ (or length) of each example
is allowed to vary and where the number of possible labels $|\cY|$ is
potentially infinite.

\begin{figure}[t]
\centering
\begin{tabular}{c@{\hspace{1cm}}c}
\includegraphics[scale=0.4]{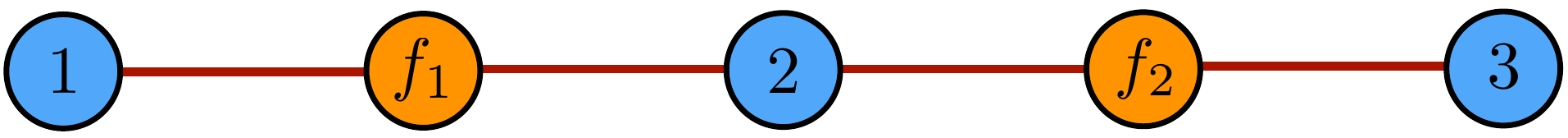} &
\includegraphics[scale=0.4]{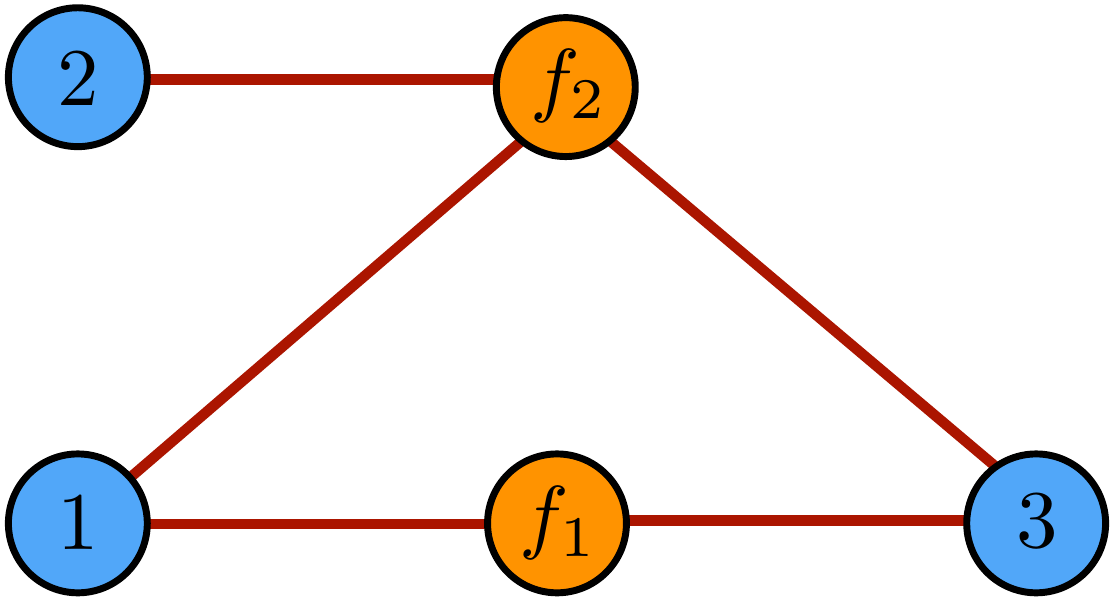} \\
(a) & (b)
\end{tabular}
\caption{Example of factor graphs. (a) Pairwise Markov network decomposition:
$h(x, y) =  h_{f_1}(x, y_1, y_2) + h_{f_2}(x, y_2, y_3)$
(b) Other decomposition $h(x, y) =
  h_{f_1}(x, y_1, y_3) + h_{f_2}(x, y_1, y_2, y_3)$.}
\label{fig:factor_graph}
\end{figure}

We present other examples of such hypothesis sets and their
decomposition in Section~\ref{sec:bounds}, where we discuss our
learning guarantees. Note that such hypothesis sets $\cH$ with an
additive decomposition are those commonly used in most structured
prediction algorithms \citep{TsochantaridisJoachimsHofmannAltun2005,
  TaskarGuestrinKoller2003,LaffertyMcCallumPereira2001}. This is
largely motivated by the computational requirement for efficient
training and inference. Our results, while very general,
further provide a statistical learning motivation for such
decompositions.

{\bf Learning scenario}.
We consider the familiar supervised learning scenario where the
training and test points are drawn i.i.d.\ according to some
distribution $\cD$ over $\cX \times \cY$. We will further adopt the
standard definitions of margin, generalization error and empirical
error. The margin $\rho_{h}(x, y)$ of a hypothesis $h$ for a labeled
example $(x, y) \in \cX \times \cY$ is defined by
\begin{equation}
\rho_{h}(x, y) = h(x, y) - \max_{y' \neq y} h(x, y').
\end{equation}
Let $S = ((x_1, y_1), \ldots, (x_m, y_m))$ be a training sample of
size $m$ drawn from $\cD^m$. We denote by $R(h)$ the generalization
error and by $\h R_S(h)$ the empirical error of $h$ over $S$:
\begin{align}
  R(h) = \E_{(x, y) \sim \cD} [\loss(\hh(x), y)] \quad\text{ and }\quad
  \h R_S(h) = \E_{(x, y) \sim S} [\loss(\hh(x), y)], 
\end{align}
where $\hh(x) = \argmax_y h(x, y)$ and where the notation
$(x, y) \!\!\sim\!\! S$ indicates that $(x, y)$ is drawn according
to the empirical distribution defined by $S$. The learning problem
consists of using the sample $S$ to select a hypothesis $h \in \cH$
with small expected loss $R(h)$.

Observe that the definiteness of the loss function implies, for
all $x \in \cX$, the following equality:
\begin{equation}
\label{eq:lossrho}
\loss(\hh(x), y) = \loss(\hh(x), y) \, 1_{\rho_h(x, y) \leq 0}.
\end{equation}
We will later use this identity in the derivation of surrogate loss
functions.

\section{General learning bounds for structured prediction}
\label{sec:bounds}

In this section, we present new learning guarantees for structured
prediction.  Our analysis is general and applies to the broad family
of definite and bounded loss functions described in the previous
section.  It is also general in the sense that it applies to general
hypothesis sets and not just sub-families of linear functions. For
linear hypotheses, we will give a more refined analysis that holds for
arbitrary norm-$p$ regularized hypothesis sets.

\ignore{
Our learning guarantees are based on a new notion of Rademacher
complexity, and they are somewhat finer than the previously best-known
bounds of \citet{TaskarGuestrinKoller2003}, which were given for the
special case of the Hamming loss with norm-2 regularization using
covering numbers. Our bounds are data-dependent, admit a favorable
dependency in terms of sparsity and the \emph{effective} factor graph,
and hold for general loss functions and regularizations. For the same
norm-2 regularization, they also admit a logarithmic term
improvement. We present below a more detailed comparison with previous
work.
}

The theoretical analysis of structured prediction is more complex than
for classification since, by definition, it depends on the properties
of the loss function and the factor graph. These attributes capture
the combinatorial properties of the problem which must be exploited
since the total number of labels is often exponential in the
size of that graph. To tackle this problem, we first introduce a new
complexity tool.

\subsection{Complexity measure}

A key ingredient of our analysis is a new data-dependent notion of
complexity that extends the classical Rademacher complexity. We define the
\emph{empirical factor graph Rademacher complexity} $\h \Rad^G_S(\cH)$
of a hypothesis set $\cH$ for a sample $S = (x_1, \ldots, x_m)$ and
factor graph $G$ as follows:
\begin{align*}
\h \Rad^G_S(\cH) = \frac{1}{m}
 \E_{\be}\Bigg[\sup_{h \in \cH}
\sum_{i = 1}^m \sum_{f \in F_i} \sum_{y \in \cY_f} \sqrt{|F_i|} \,
  \e_{i, f, y} \, h_f(x_i, y) \Bigg],
\end{align*}
where $\be = (\e_{i, f, y})_{i \in [m], f \in F_i, y \in \cY_f}$ and
where $\e_{i, f, y}$s are independent Rademacher random variables
uniformly distributed over $\set{\pm 1}$. The \emph{factor graph
  Rademacher complexity} of $\cH$ for a factor graph $G$ is defined as
the expectation:
$\Rad^G_m(\cH) = \E_{S \sim \cD^m} \mspace{-5mu} \big[ \h
\Rad^G_S(\cH) \big]$. It can be shown that the empirical factor graph 
Rademacher complexity is concentrated around its mean
(Lemma~\ref{lemma:empFGRCconcentration}).
The factor graph Rademacher complexity is a
natural extension of the standard Rademacher complexity to
vector-valued hypothesis sets (with one coordinate per factor in our
case). For binary classification, the factor graph and standard
Rademacher complexities coincide.  Otherwise, the factor graph
complexity can be upper bounded in terms of the standard one. As with
the standard Rademacher complexity, the factor graph Rademacher
complexity of a hypothesis set can be estimated from data in many
cases.  In some important cases, it also admits explicit upper bounds
similar to those for the standard Rademacher complexity but with an
additional dependence on the factor graph quantities. We will prove
this for several families of functions which are
commonly used in structured prediction
(Theorem~\ref{th:bound_linear}).

\subsection{Generalization bounds}

In this section, we present new margin bounds for structured
prediction based on the factor graph Rademacher complexity of $\cH$.
Our results hold both for the additive and the multiplicative
empirical margin losses defined below:
\begin{align}
\label{eq:add}
& \h R^\text{add}_{S, \rho}(h) =  \mspace{-2mu} \E_{(x, y) \sim S}
\bigg[\F \bigg(\max_{y' \neq y} \loss(y', y)  -  \tfrac{1}{\rho}
  \big[ h(x, y)  -  h(x, y') \big] \bigg) \bigg]\\
\label{eq:mult}
& \h R^\text{mult}_{S, \rho}(h) =  \mspace{-2mu} \E_{(x, y) \sim S}
\bigg[ \F \bigg( \max_{y' \neq y} \loss(y', y) \Big(1  -   \tfrac{1}{\rho}
 [h(x, y)  -  h(x, y')] \Big) \bigg) \bigg].
\end{align}
Here, $\F(r) = \min( M, \max(0, r))$ for all $r$, with
$M = \max_{y, y'} \loss(y, y')$.\ignore{
Lemma~\ref{lemma:surrogate},
which we present later in Section~\ref{sec:algo}, can be used to show
that the following holds:
\begin{align*}
 \loss(\hh(x), y) 
&\leq \max_{y' \neq y} \F( \loss(y', y)
                          (1 - \tfrac{1}{\rho} [h(x, y) - h(x, y')])) \\
&= \F(\max_{y' \neq y}\loss(y', y)
 (1 - \tfrac{1}{\rho} [h(x, y) - h(x, y')]))
\end{align*}
and similarly for the additive variant. Furthermore,}
As we show in Section~\ref{sec:algo}, convex upper bounds
on $\h R^\text{add}_{S, \rho}(h)$ and $\h R^\text{mult}_{S, \rho}(h)$
directly lead to many existing structured prediction algorithms. 
The following is our general data-dependent margin bound
for structured prediction.\\

\begin{theorem}
\label{th:bound}
Fix $\rho > 0$.
For any $\delta > 0$, with probability at least $1-\delta$ over the
draw of a sample $S$ of size $m$, the following holds
for all $h \in \cH$,
\begin{align*}
& R(h) \leq R^\text{add}_{\rho}(h) \leq \h R^\text{add}_{S, \rho}(h) + \frac{4\sqrt{2}}{\rho} \Rad^G_m(\cH)
+ M\sqrt{\frac{\log\frac{1}{\delta}}{2m}},\\
& R(h) \leq R^\text{mult}_{\rho}(h) \leq \h R^\text{mult}_{S, \rho}(h) + \frac{4\sqrt{2}M}{\rho} \Rad^G_m(\cH)
+ M\sqrt{\frac{\log\frac{1}{\delta}}{2m}}.
\end{align*}
\end{theorem}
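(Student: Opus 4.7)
The plan is to follow the classical Rademacher-complexity recipe for margin bounds, with the novelty concentrated in a vector-valued contraction step that reveals the factor-graph structure. First, I would use the definiteness of $\loss$ together with identity (\ref{eq:lossrho}) to bound the true loss by its margin surrogate: whenever $\hh(x) \neq y$ one has $\rho_h(x,y) \leq 0$, and taking $y' = \hh(x)$ in the maximum shows that
\[
\loss(\hh(x), y) \leq \F\!\Big(\max_{y' \neq y}\loss(y',y)\big(1 - \tfrac{1}{\rho}[h(x,y)-h(x,y')]\big)\Big),
\]
with the analogous inequality for the additive variant. Taking expectations yields $R(h) \leq R^{\text{mult}}_\rho(h)$ and $R(h) \leq R^{\text{add}}_\rho(h)$. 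Applying McDiarmid's inequality and the standard symmetrization argument to the $[0,M]$-valued class $\cG = \big\{(x,y) \mapsto \F(\max_{y'\neq y}\loss(y',y)(1-\tfrac{1}{\rho}[h(x,y)-h(x,y')])) : h \in \cH\big\}$ then produces the concentration term $M\sqrt{\log(1/\delta)/(2m)}$ together with a Rademacher-complexity term $2\Rad_m(\cG)$.

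The crux of the argument is then the contraction step that converts $\Rad_m(\cG)$ into the factor-graph Rademacher complexity. Since $\F$ is a truncation and hence $1$-Lipschitz, the classical Talagrand contraction lemma strips it off at no cost. The remaining inner function $(u_{y'})_{y' \neq y} \mapsto \max_{y' \neq y}\loss(y',y)(1 - \tfrac{1}{\rho}u_{y'})$ depends on the entire vector of $|\cY|-1$ coordinates, and the scalar contraction lemma is insufficient. Here I would invoke a generalized, vector-valued contraction lemma -- the one the authors advertise as of independent interest -- whose Lipschitz constant is of order $M/\rho$ in the multiplicative case and $1/\rho$ in the additive case. This peels off the max while producing the constants $4\sqrt{2}M$ and $4\sqrt{2}$, respectively, and reduces the problem to bounding a Rademacher process of the form
\[
\frac{1}{m}\,\E_{\be}\sup_{h \in \cH}\sum_{i=1}^m \sum_{y \in \cY}\e_{i,y}\,h(x_i, y),
\]
after the $h(x_i, y_i)$ terms drop out by symmetry of the Rademacher signs.

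Finally, I would substitute the factor decomposition $h(x_i, y) = \sum_{f \in F_i} h_f(x_i, y_f)$ into the above expression, regroup the Rademacher variables per factor, and apply a Cauchy--Schwarz step across the $|F_i|$ factors of example $i$. This step is precisely what inserts the $\sqrt{|F_i|}$ weight appearing in the definition of $\h \Rad^G_S(\cH)$, after which taking expectation over $S$ yields $\Rad^G_m(\cH)$ and completes the argument. The main obstacle is the vector contraction step: one must show that the max over the $|\cY|-1$ coordinates can be peeled off with only the correct Lipschitz constant and with no spurious direct dependence on $|\cY|$, since any such dependence would make the resulting bound vacuous in structured prediction, where $|\cY|$ is typically exponentially large in the size of the factor graph.
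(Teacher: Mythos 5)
Your overall skeleton (surrogate bound via Lemma~\ref{lemma:surrogate}, standard Rademacher-plus-McDiarmid step giving $2\Rad_m(\cdot)+M\sqrt{\log(1/\delta)/2m}$, Talagrand to strip the $1$-Lipschitz truncation $\F$, then a vector-valued contraction) matches the paper's proof up to the contraction step, but the way you deploy the vector contraction has a genuine gap. You propose to contract with one coordinate per full output, arriving at an intermediate process of the form $\frac{1}{m}\E_{\be}\sup_{h}\sum_{i}\sum_{y\in\cY}\e_{i,y}\,h(x_i,y)$, and only afterwards to substitute $h(x_i,y)=\sum_{f\in F_i}h_f(x_i,y_f)$, ``regroup the Rademacher variables per factor,'' and apply Cauchy--Schwarz to insert the $\sqrt{|F_i|}$ weight. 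This last passage fails. After regrouping, the coefficient attached to $h_f(x_i,z)$ is $\sum_{y:\,y_f=z}\e_{i,y}$, a sum of exponentially many independent signs: it is not a Rademacher variable, and its typical magnitude is of order $\sqrt{|\cY|/|\cY_f|}$, vastly larger than $\sqrt{|F_i|}$. Indeed the intermediate quantity is already vacuous: for the linear class $\cH_2$ it equals $\Lambda_2\,\E\|\sum_{i,y}\e_{i,y}\bPsi(x_i,y)\|_2\approx \Lambda_2 r_2 |F_i|\sqrt{m|\cY|}$, exponentially large in the graph size, whereas $\h\Rad^G_S(\cH_2)$ is polynomial. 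Since every subsequent step can only upper bound further, no regrouping or Cauchy--Schwarz argument can bring you back down to $\Rad^G_m(\cH)$; the inequality you need at that point simply goes the wrong way.

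The fix, and what the paper actually does, is to perform the factor decomposition \emph{before} invoking the contraction lemma: one verifies that $h\mapsto\max_{y'\neq y_i}\big(\loss(y',y_i)+h(x_i,y')/\rho\big)$ is Lipschitz with constant $\sqrt{|F_i|}/\rho$ with respect to the Euclidean norm on the vector $\big(h_f(x_i,y)\big)_{f\in F_i,\,y\in\cY_f}$, which has only $\sum_{f\in F_i}|\cY_f|$ coordinates per example; the triangle inequality across factors followed by Cauchy--Schwarz over the $|F_i|$ factors is exactly where $\sqrt{|F_i|}$ is produced, inside the Lipschitz constant. A single application of the vector contraction lemma (Lemma~\ref{lemma:contraction}) then directly yields Rademacher variables $\e_{i,f,y}$ indexed by factor--assignment pairs, i.e.\ the factor graph complexity, with no dependence on $|\cY|$. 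A secondary inaccuracy: the $h(x_i,y_i)$ term does not ``drop out by symmetry.'' The paper splits the complexity of the class $\cH_1$ by sub-additivity of the supremum into one term involving $\max_{y'\neq y_i}\big(\loss(y',y_i)+h(x_i,y')/\rho\big)$ and one involving $h(x_i,y_i)/\rho$ (sign symmetry only flips $-\sigma_i$ to $\sigma_i$); each is contracted separately, contributing $\tfrac{\sqrt 2}{\rho}\h\Rad^G_S(\cH)$, and this splitting is where half of the final constant $4\sqrt 2$ comes from, so an argument in which the $h(x_i,y_i)$ term vanishes would not account correctly for the stated constants.
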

The full proof of Theorem~\ref{th:bound} is given in
Appendix~\ref{app:bounds}. It is based on a new contraction lemma
(Lemma~\ref{lemma:contraction}) generalizing Talagrand's lemma that
can be of independent interest.\footnote{A result
  similar to Lemma~\ref{lemma:contraction} has also been recently
  proven independently in \citep{Maurer2016}.} We also present a more refined
contraction lemma (Lemma~\ref{lemma:contraction2}) that can be used to
improve the bounds of Theorem~\ref{th:bound}.  Theorem~\ref{th:bound}
is the first data-dependent generalization guarantee for structured
prediction with general loss functions, general hypothesis sets, and
arbitrary factor graphs for both multiplicative and additive
margins. We also present a version of this result with empirical
complexities as Theorem~\ref{th:bound_emp_complexity} in the supplementary material. We will compare these guarantees to known special cases below.

The margin bounds above can be extended to hold uniformly over
$\rho \in (0, 1]$ at the price of an additional term of the form
$\sqrt{\smash[b]{(\log \log_2 \tfrac{2}{\rho}) / m }}$ in the bound,
using known techniques (see for example
\citep{MohriRostamizadehTalwalkar2012}).

\ignore{
  Existing structured prediction bounds hold only for a specific
  decomposable loss such as the Hamming loss
  \citep{TaskarGuestrinKoller2003,CortesKuznetsovMohri2014,
    LondonHuangTaskarGetoor2013,Collins2001}, and do not cover many
  other natural loss functions, such as the edit-distance that are
  commonly used in applications. Also, they typically hold only for a
  particular factor graph structure.  The only exception is the work
  of \cite{McAllester2007} who provides PAC-Bayesian guarantees for
  arbitrary losses, in the special case of randomized algorithms using
  linear (count-based) hypotheses.  We will compare it to known
  special cases below.
}

The hypothesis set used by convex structured prediction algorithms
such as StructSVM \citep{TsochantaridisJoachimsHofmannAltun2005},
Max-Margin Markov Networks (M3N) \citep{TaskarGuestrinKoller2003} or
Conditional Random Field (CRF) \citep{LaffertyMcCallumPereira2001} is
that of linear functions. More precisely, let $\bPsi$ be a feature
mapping from $(\cX \times \cY)$ to $\Rset^{N}$ such that
$\bPsi(x, y) = \sum_{f\in F} \bPsi_f(x, y_f)$. For any $p$, define
$\cH_p$ as follows:
\begin{align*}
  \cH_p = \set{x \mapsto \bw \cdot \bPsi(x, y) \colon \bw \in
  \Rset^{N}, \| \bw \|_p \leq \Lambda_p}.
\end{align*}
Then, $\h \Rad^G_m(\cH_p)$ can be efficiently estimated using random
sampling and solving LP programs. Moreover, one can obtain explicit
upper bounds on $\h \Rad^G_m(\cH_p)$.  To simplify our presentation,
we will consider the case $p = 1, 2$, but our results can be extended
to arbitrary $p \geq 1$ and, more generally, to
arbitrary group norms.\\

\begin{theorem}
\label{th:bound_linear}
For any sample $S = (x_1, \ldots, x_m)$, the following upper bounds
hold for the empirical factor graph complexity of $\cH_1$ and $\cH_2$:
\begin{align*}
\h \Rad^G_S(\cH_1) \leq \frac{\Lambda_1 r_\infty}{m} \sqrt{s \log(2N)},\quad\quad \h \Rad^G_S(\cH_2) \leq \frac{\Lambda_2 r_2}{m}
\sqrt{\textstyle \sum_{i = 1}^m \sum_{f \in F_i} \sum_{y \in \cY_f} |F_i|},
\end{align*}
where $r_\infty = \max_{i, f, y} \| \Psi_f(x_i, y) \|_\infty$,
$r_2 = \max_{i, f, y} \| \Psi_f(x_i, y) \|_2$ and where $s$ is a
sparsity factor defined by
$s = \max_{j \in [1, N]} \sum_{i = 1}^m \sum_{f \in F_i} \sum_{y \in \cY_f}
|F_i| 1_{\Psi_{f, j}(x_i, y) \neq 0}$.
\end{theorem}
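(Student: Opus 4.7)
The plan is to exploit the linearity of $\cH_p$ to reduce both bounds to standard inequalities on Rademacher sums of feature vectors. Substituting $h_f(x, y_f) = \bw \cdot \bPsi_f(x, y_f)$ and pulling $\bw$ outside the Rademacher expectation gives
\begin{equation*}
\h \Rad^G_S(\cH_p) = \frac{1}{m}\, \E_\be \bigg[\sup_{\|\bw\|_p \le \Lambda_p} \bw \cdot \bz\bigg], \qquad \bz \; := \; \sum_{i=1}^m \sum_{f \in F_i} \sum_{y \in \cY_f} \sqrt{|F_i|}\, \e_{i,f,y}\, \bPsi_f(x_i, y) \in \Rset^N.
\end{equation*}
Hölder's inequality identifies the inner supremum with $\Lambda_p \|\bz\|_q$, where $1/p + 1/q = 1$, so the remaining task is to bound $\E_\be \|\bz\|_2$ in the case $p=2$ and $\E_\be \|\bz\|_\infty$ in the case $p=1$.

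For $\cH_2$, the plan is a direct second-moment computation. Jensen's inequality gives $\E_\be \|\bz\|_2 \le \sqrt{\E_\be \|\bz\|_2^2}$. Since the $\e_{i,f,y}$ are independent with zero mean and unit variance, expanding the square kills all cross terms and leaves $\E_\be \|\bz\|_2^2 = \sum_{i,f,y} |F_i|\, \|\bPsi_f(x_i, y)\|_2^2 \le r_2^2 \sum_{i,f,y} |F_i|$, which yields the announced bound after dividing by $m$.

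For $\cH_1$, the plan is to apply Massart's finite-class lemma to the $2N$ random variables $\{\pm z_j : j \in [N]\}$, since $\|\bz\|_\infty = \max_{j,\, \s \in \{\pm 1\}} \s z_j$. Each $\s z_j$ is a Rademacher sum with coefficients $\s\sqrt{|F_i|}\,\Psi_{f,j}(x_i, y)$ indexed by $(i,f,y)$; the key calculation is that the squared $\ell_2$ norm of these coefficients satisfies
\begin{equation*}
\sum_{i,f,y} |F_i| \, \Psi_{f,j}(x_i, y)^2 \;\le\; r_\infty^2 \sum_{i,f,y} |F_i|\, 1_{\Psi_{f,j}(x_i, y) \neq 0} \;\le\; r_\infty^2\, s,
\end{equation*}
using $|\Psi_{f,j}(x_i,y)| \le \|\bPsi_f(x_i,y)\|_\infty \le r_\infty$ together with the definition of the sparsity factor $s$. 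Massart's lemma then yields $\E_\be \|\bz\|_\infty \le r_\infty \sqrt{2 s \log(2N)}$, from which the stated bound follows (the factor $\sqrt{2}$ being absorbed into the constant).

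The main obstacle is the $\cH_1$ bound, where three rather disparate quantities—the factor-graph weights $\sqrt{|F_i|}$, the coordinate-wise feature bound $r_\infty$, and the combinatorial sparsity factor $s$—must be combined inside a single application of Massart's lemma via the pointwise indicator identity $\Psi_{f,j}^2 = \Psi_{f,j}^2 \cdot 1_{\Psi_{f,j} \neq 0}$. The $\cH_2$ bound, by contrast, is essentially a one-line variance computation once duality is applied.
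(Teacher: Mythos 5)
Your proposal is correct and follows essentially the same route as the paper: duality (Hölder) to reduce to $\E_\be\|\bz\|_q$, then Jensen plus a cross-term-free second-moment computation for $\cH_2$, and Massart's finite-class lemma over the $2N$ signed coordinates with the indicator identity $\Psi_{f,j}^2 = \Psi_{f,j}^2\,1_{\Psi_{f,j}\neq 0}$ to bring in the sparsity factor $s$ for $\cH_1$. Even your closing remark about the extra $\sqrt{2}$ matches the paper, whose own derivation likewise ends with $\Lambda_1 r_\infty\sqrt{2 s\log(2N)}$.
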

Plugging in these factor graph complexity upper bounds into
Theorem~\ref{th:bound} immediately yields explicit data-dependent
structured prediction learning guarantees for linear hypotheses with
general loss functions and arbitrary factor graphs (see
Corollary~\ref{cor:guarantee_linear}).  Observe that, in the worst case,
the sparsity factor can be bounded as follows:
\begin{align*}
s \leq \sum_{i = 1}^m \sum_{f \in F_i} \sum_{y \in \cY_f} |F_i|
\leq \sum_{i = 1}^m |F_i|^2 d_i \leq m \max_i |F_i|^2
d_i, 
\end{align*}
where $d_i = \max_{f \in F_i} |\cY_f|$.  Thus, the factor graph
Rademacher complexities of linear hypotheses in $\cH_1$ scale as
$O(\sqrt{\log(N) \max_i |F_i|^2 d_i / m})$. An important observation
is that $|F_i|$ and $d_i$ depend on the observed sample.
This shows that the \emph{expected
  size} of the factor graph is crucial for learning in this
scenario. This should be contrasted with other existing structured
prediction guarantees that we discuss below, which assume a fixed
upper bound on the size of the factor graph. Note that our result
shows that learning is possible even with an infinite set $\cY$. To
the best of our knowledge, this is the first learning guarantee for
learning with infinitely many classes.

Our learning guarantee for $\cH_1$ can additionally benefit from the
sparsity of the feature mapping and observed data. In particular, in
many applications, $\Psi_{f, j}$ is a binary indicator function that is
non-zero for a single $(x, y) \in \cX \times \cY_f$. For instance, in
NLP, $\Psi_{f, j}$ may indicate an occurrence of a certain $n$-gram in
the input $x_i$ and output $y_i$. In this case,
$s = \sum_{i = 1}^m |F_i|^2 \leq m \max_i |F_i|^2$ and the complexity
term is only in $O(\max_i |F_i| \sqrt{\log(N) / m})$, where $N$ may depend
linearly on $d_i$.

\subsection{Special cases and comparisons}
\label{sec:special}

\ignore{
We discuss some special cases and compare
our guarantees with those given by previous work.}

{\bf Markov networks}. For the pairwise Markov networks with a fixed
number of substructures $l$ studied by
\cite{TaskarGuestrinKoller2003}, our equivalent factor graph admits $l$
nodes, $|F_i| = l$, and the maximum size of $\cY_f$ is $d_i = k^2$ if
each substructure of a pair can be assigned one of $k$ classes. Thus, if we apply
Corollary~\ref{cor:guarantee_linear} with Hamming distance as our loss
function and divide the bound through by $l$, to
normalize the loss to interval $[0,1]$ as in \citep{TaskarGuestrinKoller2003},
we obtain the following explicit form of our guarantee for an
additive empirical margin loss, for all $h \in \cH_2$:
\begin{align*}
& R(h) \leq \h R^\text{add}_{S, \rho}(h) + \frac{4 \Lambda_2 r_2}{\rho} 
\sqrt{\frac{2 k^2}{m}}
+ 3 \sqrt{\frac{\log\frac{1}{\delta}}{2m}}.
\end{align*}
This bound can be further improved by eliminating the dependency on
$k$ using an extension of our contraction
Lemma~\ref{lemma:contraction} to $\|\cdot\|_{\infty,2}$ (see
Lemma~\ref{lemma:contraction2}).  The complexity term of
\cite{TaskarGuestrinKoller2003} is bounded by a quantity that varies as
$\tl O(\sqrt{\Lambda^2_2 q^2 r^2_2 /m})$, where $q$ is the maximal
out-degree of a factor graph. Our bound has the same dependence on
these key quantities, but with no logarithmic term in our case.  Note
that, unlike the result of \cite{TaskarGuestrinKoller2003}, our bound
also holds for general loss functions and different $p$-norm
regularizers. Moreover, our result for a multiplicative empirical
margin loss is new, even in this special case.

\ignore{
Furthermore, note that the particular contraction lemma
that we proved is for a $\| \cdot \|_{2}$-norm, but it can be extended
to an $\| \cdot \|_{p}$ result. Optimizing over $p$ then eliminates
the $k$-dependence in our bound.
The complexity term of \cite{TaskarGuestrinKoller2003} bounds depends
on the following $\tl O(\sqrt{\Lambda^2_2 q^2 /m})$. Our
bound has the same dependence on these key quantities, with no logarithmic
term in our case.   Note
that, unlike the result of \cite{TaskarGuestrinKoller2003}, our bound
also holds for general loss functions and different $p$-norm
regularizers. Moreover, our result for a multiplicative empirical
margin loss is new, even in this special case.}
\ignore{
This slightly improves upon the state-of-the-art bound of
\cite{TaskarGuestrinKoller2003} (by a logarithmic factor) for the same
setting: the complexity term in the learning guarantee of
\cite{TaskarGuestrinKoller2003} is of the form
$O(\Lambda_2 r_2 \sqrt{k^2\log(l k) / \rho^2 m})$. The comparison is
similar for other Markov networks (not necessarily pairwise).}

{\bf Multi-class classification}.  For standard (unstructured)
multi-class classification, we have $|F_i| = 1$ and $d_i = c$, where
$c$ is the number of classes. In that case, for linear hypotheses with norm-2 regularization, the
complexity term of our bound varies as
$O(\Lambda_2 r_2 \sqrt{c / \rho^2 m})$ 
(Corollary~\ref{cor:guarantee_linear_multiclass}).
This improves upon the best known general margin
bounds of \cite{KuznetsovMohriSyed2014}, who provide a guarantee that
scales linearly with the number of classes instead. Moreover, in the
special case where an individual $\bw_y$ is learned for each class
$y \in [c]$, we retrieve the recent favorable bounds given by
\cite{LeiDoganBinderKloft2015}, albeit with a somewhat simpler
formulation. In that case, for any $(x, y)$, all components of the
feature vector $\Psi(x, y)$ are zero, except (perhaps) for the $N$
components corresponding to class $y$, where $N$ is the dimension of
$\bw_y$.  In view of that, for example for a group-norm
$\| \cdot \|_{2, 1}$-regularization, the complexity term of our bound
varies as $O(\Lambda r \sqrt{(\log c) / \rho^2 m})$, which matches the
results of \cite{LeiDoganBinderKloft2015} with a logarithmic
dependency on $c$ (ignoring some complex exponents of $\log c$ in
their case). Additionally, note that unlike existing multi-class
learning guarantees, our results hold for arbitrary loss functions.
See Corollary~\ref{cor:guarantee_linear_multiclass_norm12} for further details.
Our sparsity-based bounds can also be used to give bounds
with logarithmic dependence on the number of classes when the features
only take values in $\set{0,1}$. Finally, using Lemma~\ref{lemma:contraction2}
instead of Lemma~\ref{lemma:contraction}, the dependency on the
number of classes can be further improved.

We conclude this section by observing that, since our guarantees are
expressed in terms of the average size of the factor graph over a
given sample, this invites us to search for a hypothesis set $\cH$ and
predictor $h \in \cH$ such that the tradeoff between the empirical
size of the factor graph and empirical error is optimal. In the next
section, we will make use of the recently developed principle of Voted
Risk Minimization (VRM) \citep{\ignore{CortesMohriSyed2014,
    KuznetsovMohriSyed2014,
    CortesKuznetsovMohriSyed2015,}CortesGoyalKuznetsovMohri2015} to
reach this objective.

\section{Voted Risk Minimization}
\label{sec:vrm}

In many structured prediction applications such as natural language
processing and computer vision, one may wish to exploit very rich
features. However, the use of rich families of hypotheses could lead
to overfitting.  In this section, we show that it may be possible to
use rich families in conjunction with simpler families, provided that
\emph{fewer} complex hypotheses are used (or that they are used with
less mixture weight). We achieve this goal by deriving learning
guarantees for ensembles of structured prediction rules that explicitly
account for the differing complexities between families.  This will
motivate the algorithms that we present in Section~\ref{sec:algo}.

Assume that we are given $p$ families $H_1, \ldots, H_p$ of functions
mapping from $\cX \times \cY$ to $\Rset$. Define the ensemble family
$\cF = \conv(\cup_{k = 1}^p H_k)$, that is the family of functions $f$
of the form $f = \sum_{t = 1}^T \alpha_t h_t$, where
$\Alpha = (\alpha_1, \ldots, \alpha_T)$ is in the simplex $\Delta$ and
where, for each $t \in [1, T]$, $h_t$ is in $H_{k_t}$ for some
$k_t \in [1, p]$.  We further assume that
$\Rad^G_m(H_1) \leq \Rad^G_m(H_2) \leq \ldots \leq \Rad^G_m(H_p)$. As
an example, the $H_k$s may be ordered by the size of the corresponding
factor graphs.

The main result of this section is a generalization of the VRM theory
to the structured prediction setting. The learning guarantees that we
present are in terms of upper bounds on $\h R^\text{add}_{S, \rho}(h)$
and $\h R^\text{mult}_{S, \rho}(h)$, which are defined as follows for
all $\tau \geq 0$:
\begin{align}
\label{eq:add_tau}
& \h R^\text{add}_{S, \rho, \tau}(h) =  \mspace{-2mu} \E_{(x, y) \sim S}
\bigg[\F \bigg(\max_{y' \neq y} \loss(y', y) + \tau -  \tfrac{1}{\rho}
  \big[ h(x, y)  -  h(x, y') \big] \bigg) \bigg]\\
\label{eq:mult_tau}
& \h R^\text{mult}_{S, \rho, \tau}(h) =  \mspace{-2mu} \E_{(x, y) \sim S}
\bigg[ \F \bigg( \max_{y' \neq y} \loss(y', y) \Big(1 + \tau - \tfrac{1}{\rho}
 [h(x, y)  -  h(x, y')] \Big) \bigg) \bigg].
\end{align}
Here, $\tau$ can be interpreted as a margin term that acts in conjunction with $\rho$. For simplicity, we assume in this section that $|\cY| = c < +\infty$.\\

\begin{theorem}
\label{th:main}
  Fix $\rho > 0$. For any $\delta > 0$, with probability at least
  $1 - \delta$ over the draw of a sample $S$ of size $m$, each of
  the following inequalities holds for all $f \in \cF$:
\begin{align*}
 &R(f) - \h R^\text{add}_{S, \rho, 1}(f)
\leq \frac{4\sqrt{2}}{\rho}
\sum_{t= 1}^T \alpha_t \Rad^G_m(H_{k_t}) + C(\rho, M, c, m, p), \\
 &R(f) - \h R^\text{mult}_{S, \rho, 1}(f)
\leq \frac{4\sqrt{2}M}{\rho}
\sum_{t= 1}^T \alpha_t \Rad^G_m(H_{k_t}) + C(\rho, M, c, m, p),
\end{align*}
where
$C(\rho, M, c, m, p) = \frac{2M}{\rho} \sqrt{\frac{\log p}{m}}
+ 3 M \sqrt{\Big\lceil \tfrac{4}{\rho^2} \log \big(\tfrac{c^2  \rho^2 m}{4 \log p}\big)
\Big\rceil \frac{\log p}{m} + \frac{\log \frac{2}{\delta}}{2m}}$.
\ignore{Thus, $R(f) \leq \h R_{S, \rho}(f) + 8 \frac{M}{\rho} \sqrt{\frac{ \pi \log(2pc \ov N)}{m}} \sum_{k = 1}^p
\|\bw^k\|_k r_k + O\left(\sqrt{\dfrac{\log
    p}{\rho^2 m} \log \Big[ \frac{\rho^2 M^2c^2 m}{4 \log p} \Big] }\right)$.}
\end{theorem}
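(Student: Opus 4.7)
The plan is to adapt the Voted Risk Minimization proof technique of \cite{CortesGoyalKuznetsovMohri2015} to the structured-prediction setting, using Theorem~\ref{th:bound} as the baseline generalization guarantee. I describe the additive case; the multiplicative case follows by an entirely analogous argument with an extra factor of $M$ wherever the loss enters linearly. The first step is to introduce, for each integer $N \geq 1$ and sequence $\mathbf{k} = (k_1,\ldots,k_N) \in [1,p]^N$, the sub-family of uniform $N$-averages $\cF_{N,\mathbf{k}} = \bigl\{ \tfrac{1}{N}\sum_{n=1}^N h_n : h_n \in H_{k_n} \bigr\}$. By sub-additivity of Rademacher averages, $\Rad^G_m(\cF_{N,\mathbf{k}}) \leq \tfrac{1}{N}\sum_{n=1}^N \Rad^G_m(H_{k_n})$. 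Applying Theorem~\ref{th:bound} to each $\cF_{N,\mathbf{k}}$ at confidence $\delta/(2p^N N(N+1))$ and taking a union bound over all $(N,\mathbf{k})$ produces a single event of probability at least $1-\delta$ on which every $g \in \cF_{N,\mathbf{k}}$ satisfies the margin guarantee, at the cost of an additional $\sqrt{N\log p/m}$ term.

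Next, given any $f = \sum_t \alpha_t h_t \in \cF$, I sample $\tilde h_1,\ldots,\tilde h_N$ independently with $\Pr[\tilde h_n = h_t] = \alpha_t$ and set $g = \tfrac{1}{N}\sum_n \tilde h_n$. Then $g$ lies in some $\cF_{N,\mathbf{k}}$ with (random) $\mathbf{k}$, $\E_g[g(x,y)] = f(x,y)$ pointwise, and $\E_g\bigl[\tfrac{1}{N}\sum_n \Rad^G_m(H_{k_n})\bigr] = \sum_t \alpha_t \Rad^G_m(H_{k_t})$, which produces the weighted Rademacher term in the theorem. The crux is then to transfer the margin quantities between $f$ and $g$: Hoeffding's inequality applied to $g(x,y) - g(x,y') - (f(x,y) - f(x,y'))$, combined with a union bound over the at most $c$ alternatives $y' \neq y$, shows that with probability at least $1 - 2c\exp(-N\rho^2/8)$ over the draw of $g$, the deviation is at most $\rho/2$ uniformly in $y'$. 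Together with the definiteness identity~\eqref{eq:lossrho} and the monotonicity of $\F$, this yields
\begin{align*}
R(f) \,\leq\, \E_g[R(g)], \qquad \E_g\bigl[\h R^{\text{add}}_{S,\rho/2}(g)\bigr] \,\leq\, \h R^{\text{add}}_{S,\rho,1}(f) + 2Mc\exp(-N\rho^2/8),
\end{align*}
where the slack $\tau = 1$ in the empirical term is exactly what is needed to absorb the $\rho/2$ gap created by the concentration.

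Substituting these two inequalities into the uniform bound on $g$ with margin $\rho/2$ from the first step, then taking expectation over $g$, yields
\begin{align*}
R(f) \leq \h R^{\text{add}}_{S,\rho,1}(f) + \frac{4\sqrt{2}}{\rho}\sum_t \alpha_t \Rad^G_m(H_{k_t}) + O\!\left(\sqrt{\tfrac{N\log p}{m}}\right) + 2Mc\exp(-N\rho^2/8),
\end{align*}
plus the standard $M\sqrt{\log(2/\delta)/(2m)}$ confidence term. Choosing $N = \bigl\lceil (4/\rho^2) \log(c^2\rho^2 m/(4\log p)) \bigr\rceil$ balances the last two terms; collecting constants, and including a residual $\frac{2M}{\rho}\sqrt{\log p/m}$ contribution arising from the family-level union bound, recovers exactly $C(\rho,M,c,m,p)$.

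The main obstacle is the margin-transfer step: the additive empirical loss mixes the $\max$ over $y'$ with the clipping $\F$, so the concentration has to be sharp enough that the slack $\tau = 1$ precisely absorbs the $\rho/2$ deviation introduced by replacing $f$ with the random $g$. This is why the union bound is taken over the $c$ alternative labels rather than a cruder supremum, and it is also the source of the $\log c$ dependence inside $C(\rho,M,c,m,p)$. The remaining steps are largely mechanical given Theorem~\ref{th:bound}, and the multiplicative-margin case follows from the same scheme with the extra factor $M$ appearing in the leading Rademacher coefficient.
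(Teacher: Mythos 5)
Your overall architecture is the same as the paper's: discretize $\conv(\cup_k H_k)$ into $n$-fold averages, union-bound over the family assignments and over $n$, sample $g$ according to $\Alpha$, transfer margins via Hoeffding plus a union bound over the $c$ labels, and choose $n$ to balance terms. The genuine gap is in the margin-transfer step. You run it at the halved margin $\rho/2$ (the empirical term $\h R^{\text{add}}_{S,\rho/2}(g)$ and ``the uniform bound on $g$ with margin $\rho/2$''), but in this paper the margin parameter enters the clipped losses as a \emph{scaling} $\tfrac1\rho$ of the score differences inside $\F$, not as a threshold, and halving it breaks both transfer inequalities you assert. On the population side, on the good event and with $y'$ the predictor of $f$, you only get $\loss(y',y)-\tfrac{2}{\rho}[g(x,y)-g(x,y')]\ge \loss(y',y)-1$, which does not dominate $\loss$ of the prediction, so $R(f)$ is \emph{not} bounded by $\E_g[R^{\text{add}}_{\rho/2}(g)]$ plus the small term (and the stated inequality $R(f)\le\E_g[R(g)]$ is itself invalid: $R$ involves an argmax and admits no Jensen-type argument). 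On the empirical side, the deviation bound compares $\tfrac{2}{\rho}$-scaled $g$-differences with $\tfrac{2}{\rho}$-scaled $f$-differences, so you land on a margin-$\rho/2$ offset loss of $f$, not on $\h R^{\text{add}}_{S,\rho,1}(f)$; for misranked pairs ($f(x,y)-f(x,y')<0$) the claimed inequality $\E_g[\h R^{\text{add}}_{S,\rho/2}(g)]\le \h R^{\text{add}}_{S,\rho,1}(f)+2Mc e^{-N\rho^2/8}$ can fail outright when $M$ exceeds the clipping level of the relevant terms. Finally, even if both transfers were patched, invoking Theorem~\ref{th:bound} at margin $\rho/2$ yields the coefficient $\tfrac{8\sqrt2}{\rho}$, not the $\tfrac{4\sqrt2}{\rho}$ claimed in the statement.

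The paper's bookkeeping avoids all three problems: keep the margin at $\rho$ throughout and move the additive offset $\tau$ instead. One controls $R^{\text{add}}_{\rho,1/2}(g)-\h R^{\text{add}}_{S,\rho,1/2}(g)$ by the offset version of Theorem~\ref{th:bound} (a constant shift inside the $1$-Lipschitz clipping $\F$ leaves the Rademacher argument, and hence the $\tfrac{4\sqrt2}{\rho}$ coefficient, unchanged); the deviation event then passes from the raw loss of $f$ to the $\tau=\tfrac12$ loss of $g$ on the population side, and from the $\tau=\tfrac12$ loss of $g$ to the $\tau=1$ loss of $f$ on the empirical side, the latter using sub-additivity of the max and of $\F$ (Lemma~\ref{lemma:F}). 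With that correction, your remaining steps (union bound over the at most $p^n$ assignments and over $n$, expectation over $\Alpha$ producing $\sum_t\alpha_t\Rad^G_m(H_{k_t})$, the $cMe^{-n\rho^2/8}$ term from Hoeffding with the union bound over labels, and the choice $n=\big\lceil\tfrac{4}{\rho^2}\log\big(\tfrac{c^2\rho^2 m}{4\log p}\big)\big\rceil$) coincide with the paper's and give the stated $C(\rho,M,c,m,p)$.
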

The proof of this theorem crucially depends on the theory we developed in
Section~\ref{sec:bounds} and is given in
Appendix~\ref{app:bounds}. As with Theorem~\ref{th:bound}, we also
present a version of this result with empirical complexities as Theorem~\ref{th:main_emp_complexity}
in the supplementary material. The explicit dependence of this bound on
the parameter vector $\Alpha$ suggests that learning even with highly
complex hypothesis sets could be possible so long as the complexity
term, which is a weighted average of the factor graph complexities, is
not too large. The theorem provides a quantitative way of determining
the mixture weights that should be apportioned to each family.
Furthermore, the dependency on the number of distinct feature map
families $H_k$ is very mild and therefore suggests that a large number
of families can be used. These properties will be useful for 
motivating new algorithms for structured prediction.

\ignore{
We conclude this section by analyzing a special case, where each
$H_k$ is a set of linear hypotheses. In this case,
\begin{align*}
\cF_q = \set[\Big]{x \mapsto \argmax_{y} \sum_{k=1}^p \alpha_k
\bw_k \cdot \bPsi_k(x, y)\colon \|\Alpha\|_1 = 1, \Alpha \geq 0, \|\bw\|_q \leq \Lambda_q}.
\end{align*}}

\ignore{
Suppose, however, 
the components of the feature vector $\bPsi_y$ are selected from $p$
families of features functions $\cF_1, \ldots, \cF_p$.
$\cF_k$ could be for example the family of regression trees of depth
$k$ or that of monomials of degree $k$ based on the input variables,
or \emph{$k$-Markovian features}, which we later describe in
Section~\ref{sec:features}. Thus, for any
$y$, $\bPsi_y$ can be decomposed into blocks as follows:
$\bPsi_y = (\bPsi_{y, 1}, \ldots, \bPsi_{y, p})^\top$. For any $k \in
[1, p]$, we define $r_k > 0$ by
\begin{equation*}
r_k = \sup_{\substack{\bPsi_{y, k} \in \cF_k \\ (x, y) \in \cX \times \cY}} \big \| \bPsi_{y, k}(x) \big \|_\infty.
\end{equation*}
$r_k$ can be viewed as a measure of the complexity of the family $\cF_k$.
The \emph{Voted Risk Minimization}
(VRM) theory developed by
\citep{CortesMohriSyed2014,KuznetsovMohriSyed2014,CortesKuznetsovMohriSyed2015,CortesGoyalKuznetsovMohri2015},
suggests making use of all feature families but assigning less weight
to complex features than simpler ones in a way that is quantitatively
determined by the complexities of the corresponding family
of functions $r_k$.
We extend VRM theory and algorithms to the setting of structured prediction.

Algorithmically,
this amounts to augmenting the objective function with the following
regularization term:
\begin{equation}
\label{eq:vrm}
\lambda \sum_{k = 1}^p \|\bw^k \|_1 \, r_k,
\end{equation}
where $\bw^k$ denote the vector $ (\bw_{1, k}, \ldots, \bw_{c, k})^T$
and $\lambda \geq 0$ a hyperparameter.
That is, the hypothesis set $\cH$ that we consider is defined by
\begin{align*}
\cH = \set[\Big]{x \mapsto \max_{y} \bw \cdot \bPsi(x, y)\colon & \sum_{k = 1}^p
  r_k \| \bw^k \|_1 \leq \Lambda, \\ & \|\bw\|_1 = 1, \bw \geq 0}.
\end{align*}
The requirement that $\bw \geq 0$ is not necessary if the feature vectors
are symmetric. Note that extending the VRM theory to the setting of structured
prediction is a non-trivial task. One cannot simply appeal to
multi-class classification guarantees, even the recent favorable ones
of \cite{KuznetsovMohriSyed2014}, since these guarantees scale
linearly with $|\cY|$ rendering these bounds vacuous in our
setting. Furthermore, in structured prediction, the loss functions
used are often more general than the zero-one loss used in multi-class
classification.

  We will denote by $N_{y, k}$
the dimension of vector $\bw_{y, k}$ for $y \in [c]$ and will denote
by $\ov N$ the average dimension, that is
$\ov N = \frac{1}{pc} \sum_{y = 1}^c \sum_{k = 1}^p N_{y, k}$. We
define the margin and empirical margin losses by
\begin{align*}
& R_\rho(h) =  \mspace{-2mu} \E_{(x, y) \sim \cD} [\F(\max_{y' \neq y} \loss(y', y) (\rho \! -  \!h(x, y) \! + \! h(x, y')))]  \\
& \h R_{S, \rho}(h) = \mspace{-14mu} \E_{(x, y) \sim S} [\F(\max_{y' \neq y} \loss(y', y) (\rho \! - \! h(x, y) \! + \! h(x, y')))], 
\end{align*}
where $\F(u) = \min( M, \max(0, u))$ for all $u$, with
$M = \max_{y, y'} \loss(y, y')$. 

Lemma~\ref{lemma:surrogate}, which we present later in Section~\ref{sec:algo}, 
can be used to show that the following holds:
\begin{align*}
\loss(\hh(x), y) 
& \leq \max_{y' \neq y} \F( \loss(y', y)
                          (\rho - h(x, y) + h(x, y')) \\  
& = \F(\max_{y' \neq y}\loss(y', y) (\rho - h(x, y) + h(x, y')).
\end{align*} 

We use these notions to derive the following bound.}

\section{Algorithms}
\label{sec:algo}

In this section, we derive several algorithms for structured
prediction based on the VRM principle discussed in
Section~\ref{sec:vrm}. We first give general convex upper bounds
(Section~\ref{sec:surrogate}) on the structured prediction loss which
recover as special cases the loss functions used in StructSVM
\citep{TsochantaridisJoachimsHofmannAltun2005}, Max-Margin Markov
Networks (M3N) \citep{TaskarGuestrinKoller2003}, and Conditional Random
Field (CRF) \citep{LaffertyMcCallumPereira2001}. Next, we introduce 
a new algorithm, Voted Conditional Random Field (\VCRF) Section~\ref{sec:VCRF},
with accompanying experiments as proof of concept.
We also present another algorithm, Voted StructBoost (\VStructBoost), in
Appendix~\ref{sec:VStructBoost}.

\subsection{General framework for convex surrogate losses}
\label{sec:surrogate}

Given $(x, y) \in \cX \times \cY$, the mapping
$h \mapsto \loss(\hh(x), y)$ is typically not a convex function of
$h$, which leads to computationally hard optimization problems. This
motivates the use of convex surrogate losses. We first introduce a
general formulation of surrogate losses for structured prediction
problems.

\begin{lemma}
\label{lemma:surrogate}
For any $u \in \Rset_+$, let $\Phi_u\colon \Rset \to \Rset$ be an
upper bound on $v \mapsto u \I_{v \leq 0}$\ignore{ such that
$u \mapsto \Phi_u(v)$ is increasing for a fixed $v$}. Then, the
following upper bound holds for any $h \in \cH$ and
$(x, y) \in \cX \times \cY$,
\begin{align}
\label{eq:convex-bound}
\loss(\hh(x), y)
\leq \max_{y' \neq y} \Phi_{\loss( y', y)} (h(x, y) - h(x, y')).
\end{align}
\end{lemma}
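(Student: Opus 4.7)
The plan is to reduce the bound to a pointwise check keyed off the identity~\eqref{eq:lossrho}, $\loss(\hh(x), y) = \loss(\hh(x), y)\,\I_{\rho_h(x,y)\le 0}$, which follows from the definiteness of $\loss$. First I would observe that the right-hand side of~\eqref{eq:convex-bound} is always non-negative: by hypothesis $\Phi_u(v) \ge u\,\I_{v\le 0}\ge 0$ for every $u\in\Rset_+$, so $\max_{y'\neq y}\Phi_{\loss(y',y)}(h(x,y)-h(x,y'))\ge 0$. This non-negativity is what makes the trivial case go through.

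I would then split on whether $\hh(x)=y$. If $\hh(x)=y$, definiteness of $\loss$ forces $\loss(\hh(x),y)=0$, and by the observation above the bound holds trivially. If $\hh(x)\neq y$, set $y^\star=\hh(x)$. Since $\hh(x)=\argmax_{y'\in\cY}h(x,y')$, we have $h(x,y^\star)\ge h(x,y)$, so $v^\star := h(x,y)-h(x,y^\star)\le 0$. Applying the defining property of $\Phi_u$ at $u=\loss(y^\star,y)$ and $v=v^\star$ gives
\[
\Phi_{\loss(y^\star,y)}(v^\star)\;\ge\;\loss(y^\star,y)\,\I_{v^\star\le 0}\;=\;\loss(y^\star,y)\;=\;\loss(\hh(x),y).
\]
Because $y^\star\neq y$, the max over $y'\neq y$ on the right-hand side of~\eqref{eq:convex-bound} is at least $\Phi_{\loss(y^\star,y)}(v^\star)$, which yields the claim.

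There is essentially no hard step: the statement is a pointwise check resting only on definiteness of $\loss$ and the one-sided upper-bound hypothesis on $\Phi_u$. The only subtlety worth flagging is that Case~2 also absorbs argmax ties: if $\hh(x)\neq y$ but $h(x,y)=h(x,\hh(x))$, then $v^\star=0\le 0$ so the indicator is still $1$ and the same inequality applies, regardless of how ties in $\argmax$ are broken. The real payoff of the lemma is downstream, where specific choices of $\Phi_u$ (hinge-, exponential-, or logistic-type) will be plugged into~\eqref{eq:convex-bound} to recover the StructSVM, M$^3$N, and CRF surrogate objectives, and to motivate \VCRF{} and \VStructBoost{} in Section~\ref{sec:algo}.
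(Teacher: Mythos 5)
Your proof is correct and follows essentially the same route as the paper's: a case split on whether $\hh(x)=y$, handling the trivial case via non-negativity of $\Phi_u$, and otherwise instantiating $y'=\hh(x)$ so that $h(x,y)-h(x,\hh(x))\le 0$ activates the indicator bound before passing to the max over $y'\neq y$. The paper phrases the second case through the margin identity~\eqref{eq:lossrho} and $\rho_h(x,y)=h(x,y)-\max_{y'\neq y}h(x,y')$, but this is the same argument as yours.
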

The proof is given in Appendix~\ref{app:bounds}. This result defines a
general framework that enables us to straightforwardly recover many of
the most common state-of-the-art structured prediction algorithms via
suitable choices of $\Phi_u(v)$: (a) for
$\Phi_u(v) = \max(0, u(1 - v))$, the right-hand side of
\eqref{eq:convex-bound} coincides with the surrogate loss defining
StructSVM \citep{TsochantaridisJoachimsHofmannAltun2005}; (b) for
$\Phi_u(v) = \max(0, u - v)$, it coincides with the surrogate loss
defining Max-Margin Markov Networks (M3N) \citep{TaskarGuestrinKoller2003}
when using for $\loss$ the Hamming loss; and (c) for
$\Phi_u(v) = \log (1 + e^{u - v})$, it coincides with the surrogate
loss defining the Conditional Random Field (CRF)
\citep{LaffertyMcCallumPereira2001}.

Moreover, alternative choices of $\Phi_u(v)$ can help define new
algorithms. In particular, we will refer to the algorithm based on the
surrogate loss defined by $\Phi_u(v) = u e^{-v}$ as {\em
  \StructBoost}, in reference to the exponential loss used in
AdaBoost.  Another related alternative is based on the choice
$\Phi_u(v) = e^{u - v}$.  See Appendix~\ref{sec:VStructBoost}, for
further details on this algorithm.  In fact, for each $\Phi_u(v)$
described above, the corresponding convex surrogate is an upper bound
on either the multiplicative or additive margin loss introduced in
Section~\ref{sec:bounds}.  Therefore, each of these algorithms seeks a
hypothesis that minimizes the generalization bounds presented in
Section~\ref{sec:bounds}.  To the best of our knowledge, this
interpretation of these well-known structured prediction algorithms is
also new.  In what follows, we derive new structured prediction
algorithms that minimize finer generalization bounds presented in
Section~\ref{sec:vrm}.

\subsection{Voted Conditional Random Field (\VCRF)}
\label{sec:VCRF}

We first consider the convex surrogate loss based on
$\Phi_u(v) = \log (1 + e^{u - v})$, which corresponds to the loss
defining CRF models. Using the monotonicity of the logarithm
and upper bounding the maximum by a sum gives the following
upper bound on the surrogate loss holds:
\begin{align*}
\max_{ y' \neq y} \log(1 + e^{\loss(y, y') -
\bw \cdot (\bPsi(x, y) - \bPsi(x, y'))})
\ignore{
& =  \log(1 + \max_{y' \neq y} e^{\loss(y, y') -
\bw \cdot ( \bPsi(x, y) - \bPsi(x, y'))}) \\
& \leq
\log\Big(1 + \sum_{y' \neq y} e^{\loss(y, y') -
\bw \cdot ( \bPsi(x, y) - \bPsi(x, y')) }\Big),\\
}
& \leq
\log\Big(\sum_{y' \in \cY} e^{\loss(y, y') -
\bw \cdot ( \bPsi(x, y) - \bPsi(x, y')) }\Big),
\end{align*}
which, combined with
VRM principle leads to the following optimization problem:
\begin{align}
\label{eq:crf-opt1} 
\min_{\bw} \frac{1}{m} \sum_{i = 1}^m \log\bigg(\sum_{y \in \cY}
             e^{\loss(y, y_i) - \bw \cdot (\bPsi(x_i, y_i) - \bPsi(x_i, y)) }
             \bigg) + \sum_{k = 1}^p (\lambda r_k + \beta) \| \bw_k \|_1,
\end{align}
where $r_k = r_\infty |F(k)| \sqrt{\log N}$.  We refer to the learning
algorithm based on the optimization problem~\eqref{eq:crf-opt1} as
\VCRF. Note that for $\lambda = 0$, \eqref{eq:crf-opt1} coincides with
the objective function of $L_1$-regularized CRF.  Observe that we can
also directly use
$\max_{y' \neq y} \log(1 + e^{\loss(y, y') - \bw \cdot \d
  \bPsi(x, y, y')})$ or its upper bound
$\sum_{y' \neq y} \log(1 + e^{\loss(y, y') - \bw \cdot \d
  \bPsi(x, y, y')})$ as a convex surrogate.  We can similarly
derive an $L_2$-regularization formulation of the VCRF algorithm.  In
Appendix~\ref{app:opt}, we describe efficient algorithms for solving
the VCRF and VStructBoost optimization problems.\ignore{ Our
  algorithms can be further used with a large family of structured
  prediction losses that include the edit-distance and the $n$-gram
  loss function. To the best of our knowledge these results are also
  new.}

\section{Experiments}
\label{sec:experiments}

In Appendix~\ref{app:experiments}, we corroborate our theory by
reporting experimental results suggesting that the \VCRF\ algorithm
can outperform the CRF algorithm on a number of part-of-speech (POS)
datasets.

\section{Conclusion}

We presented a general theoretical analysis of structured prediction.
Our data-dependent margin guarantees for structured prediction can be
used to guide the design of new algorithms or to derive guarantees for
existing ones. Its explicit dependency on the properties of the factor
graph and on feature sparsity can help shed new light on the role
played by the graph and features in generalization. Our extension of
the VRM theory to structured prediction provides a new analysis of
generalization when using a very rich set of features, which is common
in applications such as natural language processing and leads to new
algorithms, VCRF and VStructBoost. Our experimental results for VCRF
serve as a proof of concept and motivate more extensive empirical
studies of these algorithms.

\subsubsection*{Acknowledgments} 

This work was partly funded by NSF CCF-1535987 and IIS-1618662 and NSF
GRFP DGE-1342536.

\newpage 
\bibliographystyle{abbrvnat} 
{\small \bibliography{vcrf}}

\newpage
\appendix

\section{Proofs}
\label{app:bounds}

This appendix section gathers detailed proofs of all of our main
results.  In Appendix~\ref{app:contraction}, we prove a contraction
lemma used as a tool in the proof of our general factor graph
Rademacher complexity bounds (Appendix~\ref{app:learning_bounds}). In
Appendix~\ref{app:VRM_bounds}, we further extend our bounds to the
Voted Risk Minimization setting.  Appendix~\ref{app:rademacher_bound}
gives explicit upper bounds on the factor graph Rademacher complexity
of several commonly used hypothesis sets.  In
Appendix~\ref{app:convex_surrogates}, we prove a general upper bound
on a loss function used in structured prediction in terms of a convex
surrogate.

\subsection{Contraction lemma}
\label{app:contraction}

The following contraction lemma will be a key tool used in the proofs
of our generalization bounds for structured prediction.\\

\begin{lemma}
\label{lemma:contraction}
Let $\cH$ be a hypothesis set of functions mapping $\cX$ to $\Rset^c$.
Assume that for all $i = 1, \ldots, m$, $\Psi_i: \Rset^c \to \Rset$ is $\mu_i$-Lipschitz
for $\Rset^c$ equipped with the 2-norm. That is:
\begin{equation*}
|\Psi_i(\bx') - \Psi_i(\bx) | 
\leq \mu_i \| \bx' -  \bx \|_2,
\end{equation*}
for all  $(\bx, \bx') \in (\Rset^c)^2$. 
  Then, for any sample $S$ of $m$ points
$x_1, \ldots, x_m  \in \cX$, the following inequality holds
\begin{equation}
\label{eq:contraction}
\frac{1}{m} \E_{\bsigma} \left[\sup_{\bh \in \cH} \sum_{i = 1}^m \sigma_i
\Psi_i(\bh(x_i)) \right] 
\leq \frac{ \sqrt{2}}{m} \E_{\be}
\left[\sup_{\bh \in \cH} \sum_{i = 1}^m \sum_{j = 1}^c \e_{ij} \, \mu_i h_j(x_i) \right],
\end{equation}
where $\be = (\e_{ij})_{i, j}$ and $\e_{ij}$s are independent
Rademacher variables uniformly distributed over $\set{\pm 1}$.
\end{lemma}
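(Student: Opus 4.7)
The plan is to prove the inequality by induction on the coordinates $i = 1, \ldots, m$, peeling off one Rademacher variable at a time. At each step I replace a single term $\sigma_i \Psi_i(\bh(x_i))$ with its vector-valued analogue $\sqrt{2}\,\mu_i \sum_j \e_{ij} h_j(x_i)$, and the factor $\sqrt{2}$ arises from a single application of the lower Khintchine-Kahane inequality for Rademacher sums, namely $\|v\|_2 \leq \sqrt{2}\,\E_\e |\sum_j \e_j v_j|$ for any $v \in \Rset^c$. Since independent fresh Rademacher variables are introduced in each step, the final bound accumulates all the $\sqrt{2}\,\mu_i \sum_j \e_{ij} h_j(x_i)$ into one double sum, giving the right-hand side of \eqref{eq:contraction} without any further amplification of the constant.

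Concretely, fix $i$, and condition on $\sigma_j$ for $j \neq i$. Writing $A(h) = \sum_{j \neq i} \sigma_j \Psi_j(\bh(x_j))$, the standard identity
\begin{equation*}
\E_{\sigma_i} \sup_h \bigl[A(h) + \sigma_i \Psi_i(\bh(x_i))\bigr]
= \tfrac{1}{2} \sup_{h, h'} \bigl[A(h) + A(h') + \Psi_i(\bh(x_i)) - \Psi_i(\bh'(x_i))\bigr]
\end{equation*}
reduces the $\sigma_i$-expectation to a symmetric supremum over pairs. Applying the Lipschitz hypothesis on $\Psi_i$ bounds the difference by $\mu_i \|\bh(x_i) - \bh'(x_i)\|_2$, and Khintchine gives
\begin{equation*}
\|\bh(x_i) - \bh'(x_i)\|_2 \leq \sqrt{2}\,\E_{\e_i}\biggl|\sum_j \e_{ij}\bigl(h_j(x_i) - h'_j(x_i)\bigr)\biggr|.
\end{equation*}

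Pulling the sup under the $\e_i$-expectation (by Jensen) produces an expression that is antisymmetric in $(h,h')$ inside the absolute value; swapping $h$ and $h'$ when needed removes the absolute value without loss. The remaining supremum splits as a sum of two terms, and the distributional symmetry $\e_{ij} \stackrel{d}{=} -\e_{ij}$ makes them equal in expectation, collapsing the $\tfrac{1}{2}$ and yielding
\begin{equation*}
\E_{\sigma_i} \sup_h \bigl[A(h) + \sigma_i \Psi_i(\bh(x_i))\bigr]
\leq \E_{\e_i} \sup_h \Bigl[A(h) + \sqrt{2}\,\mu_i \sum_j \e_{ij} h_j(x_i)\Bigr].
\end{equation*}
Iterating this single-coordinate contraction $m$ times (independently over $i$) and dividing by $m$ gives exactly \eqref{eq:contraction}.

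The main obstacle is the third step: justifying the passage from $\|\bh(x_i)-\bh'(x_i)\|_2$ to a Rademacher absolute value and then removing the absolute value via the swap argument, while keeping the constant tight at $\sqrt{2}$. A naive route through Gaussian complexities (Slepian / Sudakov-Fernique) would pick up an extra $\sqrt{\pi/2}$ when converting back to Rademacher, and a direct use of the identity $\|v\|_2^2 = \E_\e(\sum_j \e_j v_j)^2$ only bounds the square, not the absolute value. Using the sharp Khintchine constant $\tfrac{1}{\sqrt{2}}$ (attained e.g.\ on two-dimensional vectors) is exactly what makes the $\sqrt{2}$ final constant come out correctly, and the antisymmetry/swap trick is what turns the absolute value into a signed sum compatible with splitting the supremum.
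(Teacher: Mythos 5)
Your proposal is correct and follows essentially the same route as the paper's proof: peel off one Rademacher variable at a time, rewrite the conditional expectation as an average (equivalently a supremum over pairs), apply the Lipschitz hypothesis to get a $\|\cdot\|_2$ difference, invoke the sharp Khintchine--Kahane lower bound with constant $\tfrac{1}{\sqrt{2}}$, remove the absolute value by a sign/swap argument, and recombine the two suprema using the symmetry of the Rademacher variables. The paper implements the sign removal via an explicit sign variable $s(\be_m)$ and explicit maximizers $\bh_1,\bh_2$ (with an $\e$-approximation when the suprema are not attained), but these are cosmetic differences from your pair-supremum and swap formulation.
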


\begin{proof}
  Fix a sample $S = (x_1, \ldots, x_m)$. Then, we can rewrite the
  left-hand side of \eqref{eq:contraction} as follows:
\begin{equation*}
\frac{1}{m} \E_{\bsigma} \left[\sup_{\bh \in \cH} \sum_{i = 1}^m \sigma_i
\Psi_i(\bh(x_i)) \right]  = 
\frac{1}{m} \E_{\sigma_1, \ldots, \sigma_{m - 1}} \Big[ \E_{\sigma_m}\Big[
\sup_{\bh \in \cH} U_{m - 1}(\bh)  + \sigma_m \Psi_m(\bh(x_m)) \Big]
\Big] \,,
\end{equation*}
where $U_{m - 1}(\bh) = \sum_{i = 1}^{m - 1} \sigma_i
\Psi_i(\bh(x_i))$.
Assume that the suprema can be attained and let $\bh_1, \bh_2\in \cH$ be the
hypotheses satisfying
\begin{flalign*}
& U_{m - 1}(\bh_1) + \Psi_m(\bh_1(x_m))  =   \sup_{\bh \in \cH} U_{m - 1}(\bh) +  \Psi_m (\bh(x_m)) \\
& U_{m - 1}(\bh_2) - \Psi_m(\bh_2(x_m))  = \sup_{\bh \in \cH} U_{m - 1}(\bh)  -  \Psi_m(\bh(x_m)).
\end{flalign*}
When the suprema are not reached, a similar argument to what follows
can be given by considering instead hypotheses that are $\e$-close to
the suprema for any $\e > 0$.  By definition of expectation, since
$\sigma_m$ is uniformly distributed over $\set{\pm 1}$, we can write
\begin{align*}
& \mspace{-40mu} \E_{\sigma_m}\Big[ \sup_{\bh \in \cH} U_{m - 1}(\bh)  + \sigma_m \Psi_m(\bh(x_m)) \Big] \\
& = \frac{1}{2} \sup_{\bh \in \cH} U_{m - 1}(\bh)  +  \Psi_m(\bh(x_m))
+ \frac{1}{2} \sup_{\bh \in \cH} U_{m - 1}(\bh)  -  \Psi_m(\bh(x_m)) \\
& = \frac{1}{2} [U_{m - 1}(\bh_1) + \Psi_m(\bh_1(x_m))]
+ \frac{1}{2} [U_{m - 1}(\bh_2) - \Psi_m(\bh_2(x_m))].
\end{align*}
Next, using the $\mu_m$-Lipschitzness of $\Psi_m$ and the
Khintchine-Kahane inequality,
we can write
\begin{align*}
& \E_{\sigma_m}\Big[ \sup_{\bh \in \cH} U_{m - 1}(\bh)  + \sigma_m \Psi_m(\bh(x_m))
 \Big]\\
& \leq  \frac{1}{2} [U_{m - 1}(\bh_1) + U_{m - 1}(\bh_2) + \mu_m \| \bh_1(x_m) -
\bh_2(x_m) \|_2 ]\\
& \leq  \frac{1}{2} \Bigg[ U_{m - 1}(\bh_1) + U_{m - 1}(\bh_2) + \mu_m \sqrt{2}
\E_{\be_{m1}, \ldots, \be_{mc}}
\bigg[\Big| \sum_{j = 1}^c \e_{mj} \big( h_{1j}(x_m) -
h_{2j}(x_m) \big) \Big| \bigg] \Bigg].
\end{align*}
Now, let $\be_m$ denote
$(\be_{m1}, \ldots, \be_{mc})$ and let
$s(\be_m) \in \set{\pm 1}$ denote the sign of
$\sum_{j = 1}^c \e_{mj} \big( h_{1j}(x_m) - h_{2j}(x_m) \big)$. Then,
the following holds:
\begin{align*}
& \E_{\sigma_m}\Big[ \sup_{\bh \in \cH} U_{m - 1}(\bh)  + \sigma_m (\Psi_m
\circ h)(x_m) \Big]\\
& \leq \frac{1}{2} \E_{\be_m} \bigg[U_{m - 1}(\bh_1) + U_{m - 1}(\bh_2) + \mu_m \sqrt{2}
\Big| \sum_{j = 1}^c \e_{mj} \big( h_{1j}(x_m) -
h_{2j}(x_m) \big) \Big| \bigg]\\
& = \frac{1}{2} \E_{\be_m} \bigg[ U_{m - 1}(\bh_1) + \mu_m
  \sqrt{2} \, s(\be_m) \sum_{j = 1}^c \e_{mj} h_{1j}(x_m)  
 \\ & \qquad\qquad +  U_{m - 1}(\bh_2) - \mu_m
  \sqrt{2} \, s(\be_m) \sum_{j = 1}^c \e_{mj} h_{2j}(x_m)
  \bigg]\\
& \leq \frac{1}{2} \E_{\be_m} \bigg[\sup_{\bh \in \cH} \Big( U_{m - 1}(\bh) + \mu_m
  \sqrt{2} \, s(\be_m) \sum_{j = 1}^c \e_{mj} h_{j}(x_m) \Big)
 \\ & \qquad\qquad + \sup_{\bh \in \cH} \Big( U_{m - 1}(\bh) - \mu_m
  \sqrt{2} \, s(\be_m) \sum_{j = 1}^c \e_{mj} h_{j}(x_m) \Big)
  \bigg]\\
& = \E_{\be_m} \bigg[\E_{\sigma_m} \Big[ \sup_{\bh \in \cH} U_{m - 1}(\bh) + \mu_m
  \sqrt{2} \, \sigma_m \sum_{j = 1}^c \e_{mj} h_{j}(x_m) \Big]
  \bigg]\\
& = \E_{\be_m} \bigg[\sup_{\bh \in \cH} U_{m - 1}(\bh) + \mu_m
  \sqrt{2} \,  \sum_{j = 1}^c \e_{mj} h_{j}(x_m) \Big]
  \bigg],
\end{align*}
Proceeding in the same way for all other $\sigma_i$s ($i < m$)
completes the proof.
\end{proof}

\subsection{Contraction lemma for $\|\cdot\|_{\infty,2}$-norm}

In this section, we present an extension of the contraction Lemma~\ref{lemma:contraction}, that can be used to remove the dependency on the alphabet size in all of our bounds. 

\begin{lemma}
\label{lemma:contraction2}
Let $\cH$ be a hypothesis set of functions mapping $\cX \times [d]$ to
$\Rset^{c}$.
Assume that for all $i = 1, \ldots, m$, $\Psi_i$ is $\mu_i$-Lipschitz
for $\Rset^{c \times d}$ equipped with the norm-($\infty$, 2) for some $\mu_i >0$. That is
\begin{equation*}
|\Psi_i(\bx') - \Psi_i(\bx) | 
\leq \mu_i \| \bx' -  \bx \|_{\infty,2},
\end{equation*}
for all  $(\bx, \bx') \in (\Rset^{c \times d})^2$.
  Then, for any sample $S$ of $m$ points
$x_1, \ldots, x_m  \in \cX$, there exists a distribution
$\cU$ over $[d]^{c \times m}$ such that the following inequality holds:
\begin{equation}
\label{eq:contraction2}
\frac{1}{m} \E_{\bsigma} \left[\sup_{\bh \in \cH} \sum_{i = 1}^m \sigma_i
\Psi_i(\bh(x_i)) \right] 
\leq \frac{ \sqrt{2}}{m} \E_{\bup \sim \cU, \be}
\left[\sup_{\bh \in \cH} \sum_{i = 1}^m \sum_{j = 1}^c \e_{ij} \, \mu_i h_{j}(x_i, \upsilon_{mj}) \right],
\end{equation}
where $\be = (\e_{ij})_{i, j}$ and $\e_{ij}$s are independent
Rademacher variables uniformly distributed over $\set{\pm 1}$ and
$\bup = (\upsilon_{i, j})_{i,j}$ is a sequence of
random variables distributed according to $\cU$. 
Note that $\upsilon_{i, j}$s themselves
do not need to be independent.
\end{lemma}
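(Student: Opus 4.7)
The plan is to mirror the induction in the proof of Lemma~\ref{lemma:contraction}, peeling off one Rademacher variable $\sigma_i$ at a time, with the essential modification that the Lipschitz step now accommodates the $\|\cdot\|_{\infty,2}$ norm. The role of $\cU$ is to record the per-coordinate argmaxes in $[d]$ that are forced by the inner $\max_{k\in[d]}$ in that norm.

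Concretely, at the step that peels $\sigma_m$, write the inner expectation as $\tfrac12\sup_{\bh}[U_{m-1}(\bh)+\Psi_m(\bh(x_m))] + \tfrac12\sup_{\bh}[U_{m-1}(\bh)-\Psi_m(\bh(x_m))]$, where $U_{m-1}(\bh)=\sum_{i<m}\sigma_i\Psi_i(\bh(x_i))$, and let $\bh_1,\bh_2\in\cH$ be the (possibly $\varepsilon$-approximate) maximizers. The $\mu_m$-Lipschitzness with respect to $\|\cdot\|_{\infty,2}$ gives
\begin{equation*}
\Psi_m(\bh_1(x_m)) - \Psi_m(\bh_2(x_m)) \le \mu_m \Big(\sum_{j=1}^c \max_{k\in[d]}\big(h_{1j}(x_m,k)-h_{2j}(x_m,k)\big)^2\Big)^{1/2}.
\end{equation*}
For each $j\in[c]$ I set $\upsilon_{mj}\in[d]$ to be an argmax of $(h_{1j}(x_m,k)-h_{2j}(x_m,k))^2$ over $k\in[d]$. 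The right-hand side then equals $\mu_m(\sum_j(h_{1j}(x_m,\upsilon_{mj})-h_{2j}(x_m,\upsilon_{mj}))^2)^{1/2}$, which is an ordinary Euclidean norm. Khintchine--Kahane applies verbatim, introducing Rademacher variables $\e_{mj}$ along with the $\sqrt{2}$ factor, and the sign-trick together with the splitting of suprema from the original proof carries over unchanged, yielding
\begin{equation*}
\frac{1}{m}\,\E_{\sigma_1,\dots,\sigma_{m-1},\be_m}\Big[\sup_{\bh\in\cH} U_{m-1}(\bh) + \mu_m\sqrt{2}\sum_{j=1}^c \e_{mj}\, h_j(x_m,\upsilon_{mj})\Big].
\end{equation*}
Iterating this procedure for $i=m-1,m-2,\dots,1$, each time selecting argmaxes $\upsilon_{ij}$ from the local maximizing pair, produces the required bound. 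The distribution $\cU$ is then defined as the joint law on $[d]^{c\times m}$ of the random vector $(\upsilon_{ij})$ obtained from the full recursive construction once all Rademachers are marginalized; because each $\upsilon_{ij}$ is a deterministic function of the Rademachers fixed when it is chosen, the components of $\bup\sim\cU$ are in general dependent, matching the statement.

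The main obstacle is an ordering/measurability subtlety: once $\sigma_m$ is peeled and $\upsilon_{mj}$ is introduced, $\upsilon_{mj}$ depends on $\sigma_1,\dots,\sigma_{m-1}$, so when one later peels some $\sigma_i$ with $i<m$ the two branches $\sigma_i=\pm1$ lead to potentially different argmax values, coupling the peelings. The cleanest way I see to handle this is to perform the peeling inside a conditional argument: at step $i$, treat the extra terms $\sum_{i'>i}\mu_{i'}\sqrt{2}\sum_j\e_{i'j}h_j(x_{i'},\upsilon_{i'j})$ as linear functions of $\bh$ that do not interact with the Lipschitz structure of $\Psi_i$ (which only sees the coordinate $\bh(x_i)$), so that the same sup-splitting and Khintchine--Kahane steps apply inside the conditional expectation. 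Fubini then recombines all iterated expectations into the single joint expectation $\E_{\bup\sim\cU,\be}$ on the right-hand side of~\eqref{eq:contraction2}, while the standard $\varepsilon$-approximation handles any non-attained suprema.
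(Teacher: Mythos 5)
Your proposal follows essentially the same route as the paper's proof: peel off one Rademacher variable at a time, invoke the $\|\cdot\|_{\infty,2}$-Lipschitz property, define $\upsilon_{mj}$ as the per-coordinate argmax over $[d]$ so that the bound reduces to a Euclidean norm, apply Khintchine--Kahane to gain the $\sqrt{2}$ and the $\e_{mj}$'s, and conclude with the sign-trick and sup-splitting, with $\cU$ taken to be the induced joint law of the (dependent) argmax variables. Your explicit discussion of the measurability/ordering of the $\upsilon_{ij}$'s across peeling steps is a point the paper leaves implicit, but it is the same argument.
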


\begin{proof}
  Fix a sample $S = (x_1, \ldots, x_m)$. Then, we can rewrite the
  left-hand side of \eqref{eq:contraction} as follows:
\begin{equation*}
\frac{1}{m} \E_{\bsigma} \left[\sup_{\bh \in \cH} \sum_{i = 1}^m \sigma_i
\Psi_i(\bh(x_i)) \right]  = 
\frac{1}{m} \E_{\sigma_1, \ldots, \sigma_{m - 1}} \Big[ \E_{\sigma_m}\Big[
\sup_{\bh \in \cH} U_{m - 1}(\bh)  + \sigma_m \Psi_m(\bh(x_m)) \Big]
\Big] \,,
\end{equation*}
where $U_{m - 1}(\bh) = \sum_{i = 1}^{m - 1} \sigma_i
\Psi_i(\bh(x_i))$.
Assume that the suprema can be attained and let $\bh_1, \bh_2\in \cH$ be the
hypotheses satisfying
\begin{flalign*}
& U_{m - 1}(\bh_1) + \Psi_m(\bh_1(x_m))  =   \sup_{\bh \in \cH} U_{m - 1}(\bh) +  \Psi_m (\bh(x_m)) \\
& U_{m - 1}(\bh_2) - \Psi_m(\bh_2(x_m))  = \sup_{\bh \in \cH} U_{m - 1}(\bh)  -  \Psi_m(\bh(x_m)).
\end{flalign*}
When the suprema are not reached, a similar argument to what follows
can be given by considering instead hypotheses that are $\e$-close to
the suprema for any $\e > 0$.  By definition of expectation, since
$\sigma_m$ is uniformly distributed over $\set{\pm 1}$, we can write
\begin{align*}
& \mspace{-40mu} \E_{\sigma_m}\Big[ \sup_{\bh \in \cH} U_{m - 1}(\bh)  + \sigma_m \Psi_m(\bh_1(x_m)) \Big] \\
& = \frac{1}{2} \sup_{\bh \in \cH} U_{m - 1}(\bh)  +  \Psi_m(\bh_1(x_m))
+ \frac{1}{2} \sup_{\bh \in \cH} U_{m - 1}(\bh)  -  \Psi_m(\bh(x_m)) \\
& = \frac{1}{2} [U_{m - 1}(\bh_1) + \Psi_m(\bh_1(x_m))]
+ \frac{1}{2} [U_{m - 1}(\bh_2) - \Psi_m(\bh_2(x_m))].
\end{align*}
Next, using the $\mu_m$-Lipschitzness of $\Psi_m$ and the
Khintchine-Kahane inequality,
we can write
\begin{align*}
& \E_{\sigma_m}\Big[ \sup_{\bh \in \cH} U_{m - 1}(\bh)  + \sigma_m (\Psi_m
\circ h)(x_m) \Big]\\
& \leq  \frac{1}{2} [U_{m - 1}(\bh_1) + U_{m - 1}(\bh_2) + \mu_m \| \bh_1(x_m) -
\bh_2(x_m) \|_{\infty,2} ]\\
& \leq  \frac{1}{2} \Bigg[ U_{m - 1}(\bh_1) + U_{m - 1}(\bh_2) + \mu_m \sqrt{2}
\E_{\be_{m1}, \ldots, \be_{mc}}
\bigg[\Big| \sum_{j = 1}^c \e_{mj} \| h_{1, j}(x_m, \cdot) -
h_{2, j}(x_m, \cdot) \|_\infty \Big| \bigg] \Bigg].
\end{align*}
Define the random variables $\upsilon_{mj} = \upsilon_{mj}(\bsigma) = \argmax_{k \in [d]} |h_{1,j}(x_m, k)-h_{2,j}(x_m, k)|$.

Now, let $\be_m$ denote
$(\be_{m1}, \ldots, \be_{mc})$ and let
$s(\be_m) \in \set{\pm 1}$ denote the sign of
$\sum_{j = 1}^c \e_{mj} \| h_{1, j}(x_m, \cdot) -
h_{2, j}(x_m, \cdot) \|_\infty $. Then,
the following holds:
\begin{align*}
& \E_{\sigma_m}\Big[ \sup_{\bh \in \cH} U_{m - 1}(\bh)  + \sigma_m (\Psi_m
\circ h)(x_m) \Big]\\
& \leq \frac{1}{2} \E_{\be_m} \bigg[U_{m - 1}(\bh_1) + U_{m - 1}(\bh_2) + \mu_m \sqrt{2}
\Big| \sum_{j = 1}^c \e_{mj} \| h_{1, j}(x_m, \cdot) -
h_{2, j}(x_m, \cdot) \|_\infty  \Big| \bigg]\\
& \leq \frac{1}{2} \E_{\be_m} \bigg[ U_{m - 1}(\bh_1) + U_{m - 1}(\bh_2)   
 \\ & \qquad\qquad + \mu_m
  \sqrt{2} \, s(\be_m) \sum_{j = 1}^c \e_{mj} |h_{1, j}(x_m, \upsilon_{mj}) -
   h_{2 , j}(x_m, \upsilon_{mj}) |
  \bigg]\\
& = \frac{1}{2} \E_{\be_m} \bigg[ U_{m - 1}(\bh_1) + U_{m - 1}(\bh_2)   
 \\ & \qquad\qquad + \mu_m
  \sqrt{2} \, s(\be_m) \sum_{j = 1}^c \e_{mj} (h_{1, j}(x_m, \upsilon_{mj}) -
   h_{2, j}(x_m, \upsilon_{mj}))
  \bigg]\\
& = \frac{1}{2} \E_{\be_m} \bigg[ U_{m - 1}(\bh_1) + \mu_m
  \sqrt{2} \, s(\be_m) \sum_{j = 1}^c \e_{mj} h_{1, j}(x_m, \upsilon_{mj})  
 \\ & \qquad\qquad +  U_{m - 1}(\bh_2) - \mu_m
  \sqrt{2} \, s(\be_m) \sum_{j = 1}^c \e_{mj} h_{2, j}(x_m, \upsilon_{mj})
  \bigg].
\end{align*}
After taking expectation over $\bup$, the rest of the proof proceeds the same way as the argument in
Lemma~\ref{lemma:contraction}:
\begin{align*}
& \frac{1}{2} \E_{\bup \sim \cU, \be_m} \bigg[ U_{m - 1}(\bh_1) + \mu_m
  \sqrt{2} \, s(\be_m) \sum_{j = 1}^c \e_{mj} h_{1, j}(x_m, \upsilon_{mj})  
 \\ & \qquad\qquad +  U_{m - 1}(\bh_2) - \mu_m
  \sqrt{2} \, s(\be_m) \sum_{j = 1}^c \e_{mj} h_{2, j}(x_m, \upsilon_{mj})
  \bigg]\\
& \leq \frac{1}{2} \E_{\bup \sim \cU, \be_m} \bigg[\sup_{\bh \in \cH} \Big( U_{m - 1}(\bh) + \mu_m
  \sqrt{2} \, s(\be_m) \sum_{j = 1}^c \e_{mj} h_{ j}(x_m, \upsilon_{mj}) \Big)
 \\ & \qquad\qquad + \sup_{\bh \in \cH} \Big( U_{m - 1}(\bh) - \mu_m
  \sqrt{2} \, s(\be_m) \sum_{j = 1}^c \e_{mj} h_{j}(x_m, \upsilon_{mj}) \Big)
  \bigg]\\
& = \E_{\bup \sim \cU, \be_m} \bigg[\E_{\sigma_m} \Big[ \sup_{\bh \in \cH} U_{m - 1}(\bh) + \mu_m
  \sqrt{2} \, \sigma_m \sum_{j = 1}^c \e_{mj} h_{ j}(x_m, \upsilon_{mj}) \Big]
  \bigg]\\
& = \E_{\bup \sim \cU, \be_m} \bigg[\sup_{\bh \in \cH} U_{m - 1}(\bh) + \mu_m
  \sqrt{2} \,  \sum_{j = 1}^c \e_{mj} h_{j}(x_m, \upsilon_{mj}) \Big]
  \bigg],
\end{align*}
Proceeding in the same way for all other $\sigma_i$s ($i < m$)
completes the proof.
\end{proof}

\subsection{General structured prediction learning bounds}
\label{app:learning_bounds}

In this section, we give the proof of several general structured
prediction bounds in terms of the notion of factor graph Rademacher
complexity. We will use the additive and multiplicative margin losses
of a hypothesis $h$, which are the population versions of the
empirical margin losses we introduced in \eqref{eq:add} and
\eqref{eq:mult} and are defined as follows:
\begin{align*}
& R^\text{add}_\rho(h) =  \mspace{-2mu} \E_{(x, y) \sim \cD}
\bigg[\F \bigg(\max_{y' \neq y} \loss(y', y)  -  \tfrac{1}{\rho}
  \big[ h(x, y)  -  h(x, y') \big] \bigg) \bigg]\\
& R^\text{mult}_\rho(h) =  \mspace{-2mu} \E_{(x, y) \sim \cD}
\bigg[ \F \bigg( \max_{y' \neq y} \loss(y', y) \Big(1  -   \tfrac{1}{\rho}
 [h(x, y)  -  h(x, y')] \Big) \bigg) \bigg].
\end{align*}
The following is our general margin bound for structured prediction.\\

\begin{reptheorem}{th:bound}
Fix $\rho > 0$.
For any $\delta > 0$, with probability at least $1-\delta$ over the
draw of a sample $S$ of size $m$, the following holds
for all $h \in \cH$,
\begin{align*}
&R(h) \leq R^\text{add}_{\rho}(h) \leq \h R^\text{add}_{S, \rho}(h) + \frac{4\sqrt{2}}{\rho} \Rad^G_m(\cH)
+ M\sqrt{\frac{\log\frac{1}{\delta}}{2m}},\\
&R(h) \leq R^\text{mult}_{\rho}(h) \leq \h R^\text{mult}_{S, \rho}(h) + \frac{4\sqrt{2}M}{\rho} \Rad^G_m(\cH)
+ M\sqrt{\frac{\log\frac{1}{\delta}}{2m}}.
\end{align*}
\end{reptheorem}

\begin{proof}
Let $\Phi_u(v) = \F(u-\frac{v}{\rho})$, where $\F(r) = \min(M, \max(0, r))$.
Observe that for any $u \in [0,M]$,
  $u 1_{v \leq 0} \leq \Phi_u(v)$ for all $v$. \ignore{Furthermore, $\Phi_u(v)$
  is an increasing function of $u$ for any given $v$.} Therefore,
by Lemma~\ref{lemma:surrogate} and monotonicity of $\F$,
\begin{align*}
R(h)
& \leq
\E_{(x, y) \sim \cD}[\max_{y' \neq y}
\Phi_{\loss(y',y)} (h(x, y) - h(x, y'))] \\
& =
\E_{(x, y) \sim \cD}\Bigg[\F\Bigg(
\max_{y' \neq y}\Big(\loss(y',y) - \frac{h(x, y) - h(x, y')}{\rho}
\Big)\Bigg)\Bigg] \\
& = R^\text{add}_\rho(h).
\end{align*} 
Define
\begin{align*}
& \cH_0 = \Bigg\{(x, y) \mapsto \F\Big(
\max_{y' \neq y}\Big(\loss(y',y) - \frac{h(x, y) - h(x, y')}{\rho}\Big)\Big)
\colon h \in \cH\Bigg\}, \\
&\cH_1 = \Bigg\{(x, y) \mapsto 
\max_{y' \neq y}\Big(\loss(y',y) - \frac{h(x, y) - h(x, y')}{\rho}\Big)
\colon h \in \cH\Bigg\}.
\end{align*}
By standard Rademacher complexity bounds 
(\citeapp{KoltchinskiiPanchenko2002}), for any $\delta > 0$, with
probability at least $1 - \delta$, the following inequality holds for all
$h \in \cH$:
\begin{align*}
R^\text{add}_\rho(h) \leq \h R^\text{add}_{S,\rho}(h) + 2 \Rad_m(\cH_0) +
M\sqrt{\frac{\log\frac{1}{\d}}{2m}},
\end{align*}
where $\Rad_m(\cH_0)$ is the Rademacher complexity of the
family $\cH_0$:
\begin{align*}
\Rad_m(\cH_0) = \frac{1}{m}\E_{S \sim \cD^m}\E_{\boldsymbol{\sigma}}
\Bigg[ \sup_{h \in \cH} \sum_{i=1}^m \sigma_i
\F\Big(\max_{y' \neq y_i}\Big(\loss(y',y_i) - \frac{h(x_i,y_i) -
h(x_i,y')}{\rho}\Big)\Big)
 \Bigg]
\end{align*}
and where $\bsigma = (\sigma_1, \ldots, \sigma_m)$ with $\sigma_i$s
independent Rademacher random variables uniformly distributed over
$\set{\pm 1}$.  
Since $\F$ is $1$-Lipschitz, by Talagrand's contraction
lemma (\citeapp{LedouxTalagrand1991,MohriRostamizadehTalwalkar2012}), we
have $\h \Rad_S(\cH_0) \leq \h \Rad_S(\cH_1)$. By taking an expectation over $S$,
  this inequality carries over to the true Rademacher complexities as well. Now, observe
that by the sub-additivity of the supremum, the following holds:
\begin{align*}
\h \Rad_S(\cH_1)
\leq &
\frac{1}{m}\E_{\boldsymbol{\sigma}}
\Bigg[ \sup_{h \in \cH} \sum_{i=1}^m \sigma_i
\max_{y' \neq y_i}\Big(\loss(y',y_i) + \frac{h(x_i,y')}{\rho}\Big)
 \Bigg]
\\ &+
\frac{1}{m}\E_{\boldsymbol{\sigma}}
\Bigg[ \sup_{h \in \cH} \sum_{i=1}^m \sigma_i
\frac{h(x_i,y_i)}{\rho}
 \Bigg],
\end{align*}
where we also used for the last term the fact that $-\sigma_i$ and
$\sigma_i$ admit the same distribution.  We use
Lemma~\ref{lemma:contraction} to bound each of the two terms appearing
on the right-hand side separately. To do so, we we first show the
Lipschitzness of
$h \mapsto \max_{y' \neq y_i}\Big(\loss(y',y_i) +
\frac{h(x_i,y')}{\rho}\Big)$.
Observe that the following chain of inequalities holds for any
$h, \tl h \in \cH$:
\begin{align*}
\Bigg|\max_{y \neq y_i}
\Bigg(\loss(y,y_i) + \frac{h(x_i, y)}{\rho} \Bigg)
&-\max_{y \neq y_i}
\Bigg(\loss(y,y_i) +\frac{\tl  h(x_i, y)}{\rho} \Bigg)
\Bigg|\\
&\leq
 \frac{1}{\rho} \max_{y \neq y_i}
\Big| h(x_i, y) - \tl h(x_i, y) \Big|  \\
&\leq
 \frac{1}{\rho} \max_{y \in \cY}
\Big| h(x_i, y) - \tl h(x_i, y) \Big|  \\
&=
 \frac{1}{\rho} \max_{y \in \cY}
\Big| \sum_{f \in F_i} (h_f(x_i, y_f) - \tl h_f(x_i, y_f)) \Big|  \\
&\leq
 \frac{1}{\rho} \sum_{f \in F_i} \max_{y \in \cY}
\Big| (h_f(x_i, y_f) - \tl h_f(x_i, y_f)) \Big|  \\
&=
 \frac{1}{\rho} \sum_{f \in F_i} \max_{y \in \cY_f}
\Big| (h_f(x_i, y) - \tl h_f(x_i, y)) \Big|  \\
&\leq
 \frac{\sqrt{|F_i|}}{\rho}  \sqrt{ \sum_{f \in F_i} \Big[\max_{y \in \cY_f}
| (h_f(x_i, y) - \tl h_f(x_i, y)) |\Big]^2 }  \\
&=
 \frac{\sqrt{|F_i|}}{\rho}  \sqrt{ \sum_{f \in F_i} \max_{y \in \cY_f}
| (h_f(x_i, y) - \tl h_f(x_i, y)) |^2 }  \\
&\leq
 \frac{\sqrt{|F_i|}}{\rho}  \sqrt{ \sum_{f \in F_i} \sum_{y \in \cY_f}
| (h_f(x_i, y) - \tl h_f(x_i, y)) |^2 }.
\end{align*}
We can therefore apply Lemma~\ref{lemma:contraction}, which yields
\begin{align*}
& \mspace{-40mu} \frac{1}{m}\E_{\boldsymbol{\sigma}}
\Bigg[ \sup_{h \in \cH} \sum_{i=1}^m \sigma_i
\max_{y' \neq y_i}\Big(\loss(y',y_i) + \frac{h(x_i,y')}{\rho}\Big)
 \Bigg] \\
& \leq
\frac{\sqrt{2}}{m}
\E_{\boldsymbol{\epsilon}}
\Bigg[ \sup_{h \in \cH} \sum_{i=1}^m \sum_{f \in F_i} \sum_{y \in \cY_f}
\e_{i,f,y} \frac{\sqrt{|F_i|}}{\rho} h_f(x_i, y) \Bigg] 
= \frac{\sqrt{2}}{\rho} \h \Rad^G_S(\cH).
\end{align*}
Similarly, for the second term, observe that the following Lipschitz
property holds:
\begin{align*}
\Big|\frac{h(x_i, y_i)}{\rho} - \frac{\tl h(x_i, y_i)}{\rho} \Big|
&\leq
\frac{1}{\rho} \max_{y \in \cY}
\Big| h(x_i, y) - \tl h(x_i, y) \Big| \\
&\leq
 \frac{\sqrt{|F_i|}}{\rho}  \sqrt{ \sum_{f \in F_i} \sum_{y \in \cY}
| (h_f(x_i, y) - \tl h_f(x_i, y)) |^2 }.
\end{align*}
We can therefore apply Lemma~\ref{lemma:contraction} and obtain the following:
\begin{align*}
 \frac{1}{m}\E_{\boldsymbol{\sigma}}
\Bigg[ \sup_{h \in \cH} \sum_{i=1}^m \sigma_i
\frac{h(x_i,y_i)}{\rho}
 \Bigg] 
& \leq
\frac{\sqrt{2}}{m}
\E_{\boldsymbol{\epsilon}}
\Bigg[ \sup_{h \in \cH} \sum_{i=1}^m \sum_{f \in F_i} \sum_{y \in \cY_f}
\e_{i,f,y} \frac{\sqrt{|F_i|}}{\rho} h_f(x_i, y) \Bigg] 
= \frac{\sqrt{2}}{\rho} \h \Rad^G_S(\cH).
\end{align*}
Taking the expectation over $S$ of the two inequalities
shows that $\Rad_m(\cH_1) \leq \frac{2\sqrt{2}}{\rho} \Rad^G_m(\cH)$,
which completes the proof of the first statement.

The second statement can be proven in a similar way with
$\Phi_u(v) = \F(u(1-\frac{v}{\rho}))$. In particular, by standard
Rademacher complexity bounds, McDiarmid's inequality, and Talagrand's contraction lemma, we can
write
\begin{align*}
R^\text{mult}_\rho(h) \leq \h R^\text{mult}_{S,\rho}(h) + 2 \Rad_m(\tl \cH_1) +
M\sqrt{\frac{\log\frac{1}{\d}}{2m}},
\end{align*}
where
\begin{align*}
\tl \cH_1 = \Bigg\{(x, y) \mapsto 
\max_{y' \neq y}\loss(y',y)\Big(1 - \frac{h(x, y) - h(x, y')}{\rho}\Big)
\colon h \in \cH\Bigg\}.
\end{align*}
We observe that the following inequality holds:
\begin{align*}
\Bigg|\max_{y \neq y_i}
\loss(y,y_i)\Bigg(1 - \frac{h(x_i, y_i) - h(x_i, y)}{\rho} \Bigg)
&- \max_{y \neq y_i}
\loss(y,y_i)\Bigg( 1 - \frac{\tl  h(x_i, y_i) - \tl  h(x_i, y)}{\rho} \Bigg)
\Bigg|\\
&\leq
 \frac{2M}{\rho} \max_{y \in \cY}
\Big| h(x_i, y) - \tl h(x_i, y) \Big|.
\end{align*}
Then, the rest of the proof follows from Lemma~\ref{lemma:contraction}
as in the previous argument.
\end{proof}

In the proof above, we could have applied McDiarmid's inequality
to bound the Rademacher complexity of $\cH_0$ by its empirical counterpart
at the cost of slightly increasing the exponential concentration term:
\begin{align*}
R^\text{add}_\rho(h) \leq \h R^\text{add}_{S,\rho}(h) + 2 \h \Rad_S(\cH_0) +
3M\sqrt{\frac{\log\frac{1}{\d}}{2m}}.
\end{align*}
Since Talagrand's contraction lemma holds for empirical Rademacher complexities and 
the remainder of the proof involves bounding the empirical Rademacher
complexity of $\cH_1$ before taking an expectation over the sample at the end, we can
apply the same arguments without the final expectation to arrive at the 
following analogue of Theorem~\ref{th:bound} in terms of
empirical complexities: 
\begin{theorem}
  \label{th:bound_emp_complexity}
Fix $\rho > 0$.
For any $\delta > 0$, with probability at least $1-\delta$ over the
draw of a sample $S$ of size $m$, the following holds
for all $h \in \cH$,
\begin{align*}
&R(h) \leq R^\text{add}_{\rho}(h) \leq \h R^\text{add}_{S, \rho}(h) + \frac{4\sqrt{2}}{\rho} \h \Rad^G_S(\cH)
+ 3M\sqrt{\frac{\log\frac{1}{\delta}}{2m}},\\
&R(h) \leq R^\text{mult}_{\rho}(h) \leq \h R^\text{mult}_{S, \rho}(h) + \frac{4\sqrt{2}M}{\rho} \h \Rad^G_S(\cH)
+ 3M\sqrt{\frac{\log\frac{1}{\delta}}{2m}}.
\end{align*}
\end{theorem}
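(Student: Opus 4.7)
The plan is to mirror the proof of Theorem~\ref{th:bound} given above, but to replace the final step, where the expectation over $S$ converts $\h\Rad^G_S(\cH)$ to $\Rad^G_m(\cH)$, with a direct concentration argument on the empirical Rademacher complexity. Concretely, I would start from the same surrogate $\Phi_u(v) = \F(u - v/\rho)$ (resp.\ $\F(u(1 - v/\rho))$ for the multiplicative statement), and use Lemma~\ref{lemma:surrogate} together with the monotonicity of $\F$ to obtain $R(h) \leq R^{\text{add}}_\rho(h)$ (resp.\ $R(h) \leq R^{\text{mult}}_\rho(h)$), exactly as in the proof of Theorem~\ref{th:bound}.

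For the generalization step, instead of bounding $R^{\text{add}}_\rho(h) - \h R^{\text{add}}_{S,\rho}(h)$ by $2\Rad_m(\cH_0) + M\sqrt{\log(1/\delta)/(2m)}$, I would use the empirical-Rademacher version of the standard symmetrization/McDiarmid argument to get, with probability at least $1 - \delta/2$, the bound $2\h\Rad_S(\cH_0) + 3M\sqrt{\log(2/\delta)/(2m)}$ (the $3M$ is the familiar price of switching from $\Rad_m$ to $\h\Rad_S$, since $\h\Rad_S$ itself satisfies a bounded-differences inequality with range $M$, and applying McDiarmid twice through a union bound yields this constant). Then I would apply Talagrand's classical contraction lemma to $\h\Rad_S(\cH_0) \leq \h\Rad_S(\cH_1)$ (which, unlike the usual $\Rad_m$ statement, holds \emph{pointwise} in $S$ so no expectation is needed), and then sub-additivity of the supremum to split $\h\Rad_S(\cH_1)$ into the two pieces handled in the proof of Theorem~\ref{th:bound}.

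The key observation, which the authors already make in the remark preceding the statement, is that Lemma~\ref{lemma:contraction} also applies at the level of empirical Rademacher complexities: its proof is entirely pointwise in the sample $x_1, \ldots, x_m$ and introduces no expectation over $S$. Using the same Lipschitz calculation with constant $\sqrt{|F_i|}/\rho$ carried out in the proof of Theorem~\ref{th:bound}, each of the two terms in the decomposition is bounded by $\frac{\sqrt 2}{\rho}\h\Rad^G_S(\cH)$, yielding $\Rad_m(\cH_1)$ replaced by $\h\Rad_S(\cH_1) \leq \frac{2\sqrt 2}{\rho}\h\Rad^G_S(\cH)$, and hence the claimed additive bound $\frac{4\sqrt 2}{\rho}\h\Rad^G_S(\cH)$. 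The multiplicative version is handled identically, using the $2M/\rho$-Lipschitz property of the $\F(u(1 - v/\rho))$-based class established in the proof of Theorem~\ref{th:bound}.

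The main obstacle is really just bookkeeping: ensuring that the two McDiarmid applications (one for $R^{\text{add}}_\rho - \h R^{\text{add}}_{S,\rho}$ and one for $\Rad_m(\cH_0) - \h\Rad_S(\cH_0)$) are combined via a union bound with appropriate halving of the confidence, so that the resulting constant in front of $\sqrt{\log(1/\delta)/(2m)}$ is exactly $3M$ rather than $M$, and that the bounded-difference constant $M$ is correctly inherited from the range of $\F$. Everything else is a direct re-run of the argument for Theorem~\ref{th:bound} with the final expectation over $S$ omitted.
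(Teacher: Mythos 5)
Your proposal is correct and follows essentially the same route as the paper, which obtains Theorem~\ref{th:bound_emp_complexity} by re-running the proof of Theorem~\ref{th:bound} with the empirical-Rademacher form of the standard bound (McDiarmid applied also to $\h\Rad_S(\cH_0)$, giving the $3M$ term) and observing that Talagrand's contraction and Lemma~\ref{lemma:contraction} hold pointwise in the sample, so the final expectation over $S$ is simply omitted. Your bookkeeping yields $3M\sqrt{\log(2/\delta)/(2m)}$, which is in fact the constant the paper itself uses in Corollary~\ref{cor:guarantee_linear}, so the minor discrepancy with the $\log\frac{1}{\delta}$ in the theorem statement is on the paper's side, not yours.
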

This theorem will be useful for many of our applications, which are
based on bounding the empirical factor graph Rademacher complexity for
different hypothesis classes.

\ignore{
Let $H$ be a family of function mapping from $\cX$ to $\Rset^c$.  We
will denote by $h_{j}$, $j \in [1, c]$ the components of an element
$\bh \in \H$.  Here, we will say that a function
$\Phi\colon \Rset^c \to \Rset$ is $\mu$-Lipschitz for some $\mu > 0$
if it is $\mu$-Lipschitz for $\Rset^c$ equipped with the norm-2
metric:
\begin{equation*}
\forall (\bx, \bx') \in (\Rset^c)^2, 
|\Phi(\bx') - \Phi(\bx) | 
\leq \| \bx' -  \bx \|_2.
\end{equation*}}

\subsection{Concentration of the empirical factor graph Rademacher
  complexity}

In this section, we show that, as with the standard notion of
Rademacher complexity, the empirical factor graph Rademacher
complexity also concentrates around its mean.
\begin{lemma}
\label{lemma:empFGRCconcentration}
Let $\cH$ be a family of scoring functions mapping
$\cX \times \cY \to \Rset$ bounded by a constant $C$.  Let $S$ be a training sample of size $m$
drawn i.i.d.\ according to some distribution $\cD$ on
$\cX \times \cY$, and let $\cD_\cX$ be the marginal distribution on
$\cX$. For any point $x \in \cX$, let $F_x$ denote its associated set
of factor nodes.  Then, with probability at least $1 - \delta$ over
the draw of sample $S \sim \cD^m$,
\begin{equation*}
\left| \h \Rad^G_S(\cH) - \Rad^G_m(\cH) \right| \leq 2 C \sup_{x \in
  \text{supp}(\cD_\cX)} \sum_{f \in F_x} |\cY_f| \sqrt{|F_x|}
\sqrt{\frac{\log \frac{2}{\delta}}{2m}}.
\end{equation*}
\end{lemma}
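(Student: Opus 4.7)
\textbf{Proof plan for Lemma~\ref{lemma:empFGRCconcentration}.} My plan is to apply McDiarmid's bounded differences inequality to the function $S \mapsto \h\Rad^G_S(\cH)$, viewed as a function of the $m$ i.i.d.\ draws $x_1,\ldots,x_m$. The key step is to estimate, for each index $i$, how much $\h\Rad^G_S(\cH)$ can change when a single $x_i$ is replaced by some $x_i' \in \operatorname{supp}(\cD_\cX)$; call this modified sample $S^{(i)}$. Once we have a uniform bound $c_i$ on this change, McDiarmid's inequality immediately yields a deviation bound of order $\sqrt{\sum_i c_i^2 \log(2/\delta)/2}$, and routine algebra matches the claimed bound.

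To estimate the bounded differences constant, I would use the standard inequality $\sup_h A(h) - \sup_h B(h) \leq \sup_h (A(h) - B(h))$ after pulling the expectation over the Rademacher variables out. Writing $F_i$ and $F_i'$ for the factor nodes associated with $x_i$ and $x_i'$ respectively, this gives
\begin{equation*}
\h\Rad^G_S(\cH) - \h\Rad^G_{S^{(i)}}(\cH) \leq \frac{1}{m}\,\E_{\be}\Bigl[\sup_{h \in \cH} \Bigl(T_i(h;x_i) - T_i(h;x_i')\Bigr)\Bigr],
\end{equation*}
where $T_i(h;x) = \sum_{f \in F_x}\sum_{y \in \cY_f} \sqrt{|F_x|}\,\e_{i,f,y}\,h_f(x,y)$. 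Since the hypothesis $h$ is bounded by $C$, and interpreting this bound so that each factor $h_f$ satisfies $|h_f(x,y_f)| \leq C$ (the natural reading given that the conclusion depends on $\sum_f |\cY_f|\sqrt{|F_x|}$), each summand $T_i(h;x)$ is in absolute value at most $C \sum_{f \in F_x} |\cY_f|\sqrt{|F_x|}$. Using the triangle inequality on the two terms and then passing to the supremum over $x \in \operatorname{supp}(\cD_\cX)$, the change is bounded by
\begin{equation*}
c_i \;=\; \frac{2C}{m}\,\sup_{x \in \operatorname{supp}(\cD_\cX)} \sum_{f \in F_x} |\cY_f|\sqrt{|F_x|}.
\end{equation*}
The same bound holds for the reverse difference by symmetry.

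Plugging into McDiarmid's inequality with $\sum_{i=1}^m c_i^2 = 4C^2 K^2 / m$, where $K$ denotes the displayed supremum, gives with probability at least $1-\delta$ the bound $|\h\Rad^G_S(\cH) - \Rad^G_m(\cH)| \leq 2CK\sqrt{\log(2/\delta)/(2m)}$, as claimed.

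\textbf{Main obstacle.} The cleanest step is invoking McDiarmid; the real care is in the bounded differences argument, because the factor graph $F_i$ itself depends on the data point and can change when $x_i$ is swapped, so we cannot simply compare term-by-term with a common factor structure. This is why the worst-case combinatorial quantity $\sup_x \sum_{f \in F_x} |\cY_f|\sqrt{|F_x|}$ naturally appears in the bound, and why the $\sup$--$\sup \leq \sup(\cdot-\cdot)$ reduction (followed by splitting into the two pieces and bounding each separately) is the right technical device rather than a direct difference-of-sums comparison.
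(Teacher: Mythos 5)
Your proposal is correct and follows essentially the same route as the paper's proof: a bounded-differences estimate obtained by swapping a single sample point, using the sup-difference reduction and the bound $|h_f(x,y)|\leq C$ to get $c_i = \frac{2C}{m}\sup_x \sum_{f\in F_x}|\cY_f|\sqrt{|F_x|}$, followed by McDiarmid's inequality. Your remark about interpreting the boundedness at the level of the factor components $h_f$ matches how the paper itself uses the constant $C$.
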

\begin{proof}
 Let $S = (x_1 ,x_2 ,\ldots, x_m)$ and $S' = (x_1',x_2', \ldots, x_m')$ be two samples differing by one point $x_j$ and $x_j'$ (i.e. $x_i = x_i'$ for $i \neq j$).  Then
\begin{align*}
\h \Rad^G_S(\cH) - \h \Rad^G_{S'}(\cH) 
& \leq  \frac{1}{m} \E_{\be}\Bigg[\sup_{h \in \cH}
\sum_{i = 1}^m \sum_{f \in F_i} \sum_{y \in \cY_f} \sqrt{|F_i|} \,
  \e_{i, f, y} \, h_f(x_i, y) \Bigg]\\
& \quad - \frac{1}{m}
 \E_{\be}\Bigg[\sup_{h \in \cH}
\sum_{i = 1}^m \sum_{f \in F_i} \sum_{y \in \cY_f} \sqrt{|F_i|} \,
  \e_{i, f, y} \, h_f(x_i', y) \Bigg] \\
& = \frac{1}{m}
 \E_{\be}\Bigg[\sup_{h \in \cH}
\sum_{f \in F_{x_j}} \sum_{y \in \cY_f} \sqrt{|F_j|} \,
  \e_{j, f, y} \, h_f(x_j, y) \\
& \qquad \qquad - 
\sum_{f' \in F_{x_j'}} \sum_{y \in \cY_f'} \sqrt{|F_{x_j'}|} \,
  \e_{j, f', y} \, h_f(x_j', y) \Bigg] \\
& \leq \frac{2}{m} \sup_{x \in \text{supp}(\cD_\cX)} \sup_{h \in \cH} \sum_{f \in F_x} \sum_{y \in \cY_f} \sqrt{|F_x|} |h_f(x,y)|.  
\end{align*}
The same upper bound also holds for
$\h \Rad^G_{S'}(\cH) - \h \Rad^G_S(\cH)$. The result now follows from
McDiarmid's inequality.
\end{proof}

\subsection{Bounds on the factor graph Rademacher complexity}
\label{app:rademacher_bound}

The following lemma is a standard bound on the expectation of the
maximum of $n$ zero-mean bounded random variables, which will be used
in the proof of our bounds on factor graph Rademacher complexity.\\

\begin{lemma}
\label{lemma:MaxConcentration}
Let $X_1 \ldots X_n$ be $n \geq 1$ real-valued random variables such
that for all $j \in [1, n]$, $X_j = \sum_{i = 1}^{m_j} Y_{ij}$ where, for
each fixed $j \in [1, n]$, $Y_{ij}$ are independent zero mean random
variables with $|Y_{ij}| \leq t_{ij}$. Then,
the following inequality holds:
\begin{equation*}
\E\Big[\max_{j \in [1, n]} X_j \Big] \leq  t \sqrt{2\log n},
\end{equation*}
with $t =  \sqrt{\max_{j \in [1,n]} \sum_{i = 1}^{m_j} t_{ij}^2}$.
\end{lemma}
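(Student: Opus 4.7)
The plan is to apply the standard maximal inequality via the exponential moment method. First, I would fix $s > 0$ and use the fact that $\log$ is concave, so by Jensen's inequality,
\begin{align*}
\E\Big[\max_{j \in [1,n]} X_j\Big] = \frac{1}{s}\E\Big[\log \exp\Big(s \max_{j} X_j\Big)\Big] \leq \frac{1}{s}\log \E\Big[\exp\Big(s \max_{j} X_j\Big)\Big].
\end{align*}
Then I would upper bound the exponential of the max by a sum, writing $\exp(s\max_j X_j) = \max_j \exp(sX_j) \leq \sum_{j=1}^n \exp(sX_j)$, which gives
\begin{align*}
\E\Big[\max_{j} X_j\Big] \leq \frac{1}{s}\log \sum_{j=1}^n \E[\exp(sX_j)].
\end{align*}

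Next I would bound each moment generating function using the independence of the $Y_{ij}$ for fixed $j$ together with Hoeffding's lemma. Since $|Y_{ij}| \leq t_{ij}$ and $\E[Y_{ij}]=0$, Hoeffding's lemma gives $\E[\exp(sY_{ij})] \leq \exp(s^2 t_{ij}^2/2)$, so by independence
\begin{align*}
\E[\exp(sX_j)] = \prod_{i=1}^{m_j} \E[\exp(sY_{ij})] \leq \exp\Big(\tfrac{s^2}{2}\sum_{i=1}^{m_j} t_{ij}^2\Big) \leq \exp(s^2 t^2 /2),
\end{align*}
where $t^2 = \max_j \sum_i t_{ij}^2$. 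Plugging this back yields $\E[\max_j X_j] \leq \frac{\log n}{s} + \frac{s t^2}{2}$.

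Finally, I would optimize over $s > 0$. Differentiating shows the minimum is attained at $s = \sqrt{2\log n}/t$, and substituting back gives exactly $\E[\max_j X_j] \leq t\sqrt{2\log n}$, as claimed. There is no real obstacle here — the argument is entirely standard; the only minor care is the use of the $[-t_{ij}, t_{ij}]$ form of Hoeffding's lemma (with range $2t_{ij}$, yielding the $t_{ij}^2/2$ factor rather than $t_{ij}^2/8$) and confirming $n \geq 1$ so that $\log n \geq 0$ makes the optimization meaningful (for $n=1$, the bound is trivially $0 \leq 0$ since a single zero-mean variable has mean zero).
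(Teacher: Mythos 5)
Your proof is correct and is precisely the standard maximal-inequality argument (Jensen plus the Hoeffding moment-generating-function bound, then optimizing over $s$) that the paper itself invokes without proof when it calls this ``a standard bound'' and cites it interchangeably with Massart's lemma. The Hoeffding constant $t_{ij}^2/2$ from the range $[-t_{ij}, t_{ij}]$ and the choice $s = \sqrt{2\log n}/t$ are handled correctly, so there is nothing to add.
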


The following are upper bounds on the factor graph Rademacher
complexity for $\cH_1$ and $\cH_2$, as defined in Section~\ref{sec:bounds}. Similar guarantees can be given
for other hypothesis sets $\cH_p$ with $p > 1$.\\

\begin{reptheorem}{th:bound_linear}
For any sample $S = (x_1, \ldots, x_m)$, the following upper bounds
hold for the empirical factor graph complexity of $\cH_1$ and $\cH_2$:
\begin{align*}
\h \Rad^G_S(\cH_1) \leq \frac{\Lambda_1 r_\infty}{m} \sqrt{s \log(2N)}, \quad\quad \h \Rad^G_S(\cH_2) \leq \frac{\Lambda_2 r_2}{m}
\sqrt{\textstyle \sum_{i = 1}^m \sum_{f \in F_i} \sum_{y \in \cY_f} |F_i|},
\end{align*}
where $r_\infty = \max_{i, f, y} \| \Psi_f(x_i, y) \|_\infty$,
$r_2 = \max_{i, f, y} \| \Psi_f(x_i, y) \|_2$ and where $s$ is a
sparsity factor defined by
$s = \max_{j \in [1, N]} \sum_{i = 1}^m \sum_{f \in F_i} \sum_{y \in \cY_f}
|F_i| 1_{\Psi_j(x_i, y) \neq 0}$.
\end{reptheorem}

\begin{proof}
  By definition of the dual norm and
  Lemma~\ref{lemma:MaxConcentration} (or Massart's lemma), the
  following holds:
\begin{align*}
m \h \Rad^G_S(\cH_1)
& = 
\E_\be \bigg[\sup_{\|\bw\|_1 \leq \Lambda_1} \bw \cdot   
\sum_{i = 1}^m \sum_{f \in F_i} \sum_{y \in \cY_f} \sqrt{|F_i|}
\e_{i, f, y}  \bPsi_f(x_i, y) \bigg] \\
& =
\Lambda_1 \E_\be \bigg[\bigg \|
\sum_{i = 1}^m \sum_{f \in F_i} \sum_{y \in \cY_f} \sqrt{|F_i|}
\e_{i, f, y}  \bPsi_f(x_i, y)
\bigg\|_\infty \bigg] \\
& =
\Lambda_1 \E_\be \left[ \max_{j\in[1,N], \sigma \in \set{-1, +1}}  
  \sigma \sum_{i = 1}^m \sum_{f \in F_i} \sum_{y \in \cY_f} \sqrt{|F_i|}
\e_{i, f, y} \Psi_{f,j}(x_i, y)  \right]\\
& = \Lambda_1 \E_\be \left[ \max_{j\in[1,N], \sigma \in \set{-1, +1}}  
  \sigma  \sum_{i = 1}^m \sum_{f \in F_i} \sum_{y \in \cY_f} \sqrt{|F_i|}
\e_{i, f, y} \Psi_{f,j}(x_i, y) 1_{\Psi_{f,j}(x_i, y) \neq 0}  \right]\\
& \leq \Lambda_1 \sqrt{2 \Big(\max_{j \in [1, N]}
\sum_{i = 1}^m \sum_{f \in F_i} \sum_{y \in \cY_f} |F_i| 
  1_{\Psi_j(x_i, y) \neq 0} \Big) r_\infty^2 \log (2 N)}\\
& = \Lambda_1 r_\infty \sqrt{2 s \log (2 N)},
\end{align*}
which completes the proof of the first statement.  The second
statement can be proven in a similar way using the the definition of
the dual norm and Jensen's inequality:
\begin{align*}
m \h \Rad^G_S(\cH_2)
& = 
\E_\be \bigg[\sup_{\|\bw\|_2 \leq \Lambda_2} \bw \cdot   
\sum_{i = 1}^m \sum_{f \in F_i} \sum_{y \in \cY_f} \sqrt{|F_i|}
\e_{i, f, y}  \bPsi_f(x_i, y) \bigg] \\
& =
\Lambda_2 \E_\be \bigg[\bigg \|
\sum_{i = 1}^m \sum_{f \in F_i} \sum_{y \in \cY_f} \sqrt{|F_i|}
\e_{i, f, y}  \bPsi_f(x_i, y)
\bigg\|_2 \bigg] \\
& =
\Lambda_2 \Bigg(\E_\be \bigg[\bigg \|
\sum_{i = 1}^m \sum_{f \in F_i} \sum_{y \in \cY_f} \sqrt{|F_i|}
\e_{i, f, y}  \bPsi_f(x_i, y)
\bigg\|_2^2 \bigg] \Bigg)^{\frac{1}{2}} \\
& =
\Lambda_2 \Bigg(
\sum_{i = 1}^m \sum_{f \in F_i} \sum_{y \in \cY_f} |F_i|
 \| \bPsi_f(x_i, y) \|_2^2  \Bigg)^{\frac{1}{2}} \\
& \leq \Lambda_2 r_2
\sqrt{\sum_{i = 1}^m \sum_{f \in F_i} \sum_{y \in \cY_f} |F_i|},
\end{align*}
which concludes the proof.
\end{proof}

\subsection{Learning guarantees for structured prediction with linear hypotheses}

The following result is a direct consequence of 
Theorem~\ref{th:bound_emp_complexity} and Theorem~\ref{th:bound_linear}.\\

\begin{corollary}
\label{cor:guarantee_linear}
Fix $\rho > 0$.  For any $\delta > 0$, with probability at least
$1 - \delta$ over the draw of a sample $S$ of size $m$, the following
holds for all $h \in \cH_1$,
\begin{align*}
& R(h) \leq \h R^\text{add}_{S, \rho}(h) + \frac{4\sqrt{2}}{\rho m}
\Lambda_1 r_\infty \sqrt{s \log(2N)}
+ 3M\sqrt{\frac{\log\frac{2}{\delta}}{2m}},\\
& R(h) \leq \h R^\text{mult}_{S, \rho}(h) + \frac{4\sqrt{2}M}{\rho m}
\Lambda_1 r_\infty \sqrt{s \log(2N)}
+ 3M\sqrt{\frac{\log\frac{2}{\delta}}{2m}}.
\end{align*}
Similarly, for any $\delta > 0$, with probability at least
$1 - \delta$ over the draw of a sample $S$ of size $m$, the following
holds for all $h \in \cH_2$,
\begin{align*}
& R(h) \leq \h R^\text{add}_{S, \rho}(h) + \frac{4\sqrt{2}}{\rho m}
\Lambda_2 r_2
\sqrt{\textstyle \sum_{i = 1}^m \sum_{f \in F_i} \sum_{y \in \cY_f} |F_i|}
+ 3M\sqrt{\frac{\log\frac{2}{\delta}}{2m}},\\
& R(h) \leq \h R^\text{mult}_{S, \rho}(h) + \frac{4\sqrt{2}M}{\rho m}
\Lambda_2 r_2
\sqrt{\textstyle \sum_{i = 1}^m \sum_{f \in F_i} \sum_{y \in \cY_f} |F_i|}
+ 3M\sqrt{\frac{\log\frac{2}{\delta}}{2m}}.
\end{align*}
\end{corollary}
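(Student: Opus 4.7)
The corollary is essentially a direct composition of two results already established in the excerpt, so the proof plan is short. I would begin by invoking Theorem~\ref{th:bound_emp_complexity}, which states that, with probability at least $1-\delta$ over the draw of $S \sim \cD^m$, every $h$ in a generic hypothesis set $\cH$ satisfies
\begin{align*}
R(h) &\leq \h R^{\text{add}}_{S,\rho}(h) + \frac{4\sqrt{2}}{\rho}\,\h \Rad^G_S(\cH) + 3M\sqrt{\frac{\log(1/\delta)}{2m}},
\end{align*}
and an analogous inequality with the multiplicative empirical margin loss and the extra factor $M$ in front of the complexity term.

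I would then specialize $\cH$ to $\cH_1$ and, separately, to $\cH_2$, and substitute in the explicit deterministic upper bounds on $\h \Rad^G_S(\cH_1)$ and $\h \Rad^G_S(\cH_2)$ supplied by Theorem~\ref{th:bound_linear}. Since those bounds on the empirical factor graph Rademacher complexity hold pointwise for every sample $S$, no additional probability budget needs to be spent on them: the single high-probability event from Theorem~\ref{th:bound_emp_complexity} suffices, and the bounds transfer verbatim into the statement of the corollary.

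There is no real obstacle here; the only bookkeeping point is the $\log(2/\delta)$ that appears in the corollary versus the $\log(1/\delta)$ in Theorem~\ref{th:bound_emp_complexity}, which is a strictly weaker (hence valid) form of the same inequality. If one preferred, a trivial union bound over the additive and multiplicative statements (or a direct use of McDiarmid together with a second application to control the two margin quantities simultaneously) would yield the $\log(2/\delta)$ factor cleanly; either route produces exactly the four displayed inequalities and completes the proof.
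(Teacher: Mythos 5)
Your proposal is correct and is exactly the paper's own route: the corollary is stated there as a direct consequence of Theorem~\ref{th:bound_emp_complexity} combined with the deterministic empirical complexity bounds of Theorem~\ref{th:bound_linear}, with no extra probability budget needed for the latter. Your remark about $\log\frac{2}{\delta}$ versus $\log\frac{1}{\delta}$ is also fine, since the stated form is only weaker (or follows from a trivial union bound over the two margin-loss statements).
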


\subsection{Learning guarantees for multi-class classification with linear hypotheses}

The following result is a direct consequence of
Corollary~\ref{cor:guarantee_linear} and the observation that for
multi-class classification $|F_i| = 1$ and
$d_i = \max_{f \in F_i} |\cY_f| = c$.  Note that our multi-class
learning guarantees hold for arbitrary bounded losses\ignore{ such as
  precision and recall}. To the best of our knowledge this is a novel
result in this setting. In particular, these guarantees apply to the
special case of the standard multi-class zero-one loss
$L(y, y') = 1_{\{y \neq y'\}}$ which is bounded by $M = 1$.\\

\begin{corollary}
\label{cor:guarantee_linear_multiclass}
Fix $\rho > 0$.  For any $\delta > 0$, with probability at least
$1 - \delta$ over the draw of a sample $S$ of size $m$, the following
holds for all $h \in \cH_1$,
\begin{align*}
& R(h) \leq \h R^\text{add}_{S, \rho}(h) +
\frac{4\sqrt{2} \Lambda_1 r_\infty}{\rho }  \sqrt{\frac{c \log(2N)}{m}}
+ 3M\sqrt{\frac{\log\frac{2}{\delta}}{2m}},\\
& R(h) \leq \h R^\text{mult}_{S, \rho}(h) +
\frac{4\sqrt{2}\Lambda_1 r_\infty}{\rho} \sqrt{\frac{c \log(2N)}{m}}
+ 3M\sqrt{\frac{\log\frac{2}{\delta}}{2m}}.
\end{align*}
Similarly, for any $\delta > 0$, with probability at least
$1 - \delta$ over the draw of a sample $S$ of size $m$, the following
holds for all $h \in \cH_2$,
\begin{align*}
& R(h) \leq \h R^\text{add}_{S, \rho}(h) + 
\frac{4\sqrt{2} \Lambda_2 r_2}{\rho }  \sqrt{\frac{c}{m}}
+ 3M\sqrt{\frac{\log\frac{2}{\delta}}{2m}},\\
& R(h) \leq \h R^\text{mult}_{S, \rho}(h) + 
\frac{4\sqrt{2} \Lambda_2 r_2}{\rho }  \sqrt{\frac{c}{m}}
+ 3M\sqrt{\frac{\log\frac{2}{\delta}}{2m}}.
\end{align*}
\end{corollary}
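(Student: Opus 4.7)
The plan is to obtain Corollary~\ref{cor:guarantee_linear_multiclass} simply by specializing Corollary~\ref{cor:guarantee_linear} to the multi-class case. Multi-class classification corresponds to the degenerate factor graph in which each example has a single factor node covering the whole output, so $|F_i| = 1$ and $\cY_f = \cY$ with $|\cY_f| = c$, for every $i$. Since Corollary~\ref{cor:guarantee_linear} already holds for arbitrary factor graphs and bounded definite losses (with $M = \max_{y,y'} L(y,y')$), I just need to substitute these quantities into its two bounds and simplify.

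For the $\cH_2$ bound in Corollary~\ref{cor:guarantee_linear}, the complexity term contains the factor $\sqrt{\sum_{i=1}^m \sum_{f \in F_i} \sum_{y \in \cY_f} |F_i|}$. With $|F_i| = 1$ and $|\cY_f| = c$, the double sum collapses to $\sum_{i=1}^m c = mc$, so the factor becomes $\sqrt{mc}$. Dividing by the $m$ already present in the bound of Corollary~\ref{cor:guarantee_linear} yields the announced $\sqrt{c/m}$ term, once multiplied by $\Lambda_2 r_2$ and the $4\sqrt{2}/\rho$ (resp.\ $4\sqrt{2}M/\rho$) prefactor. Both the additive and multiplicative statements then follow verbatim.

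For the $\cH_1$ bound, the only step that is not purely arithmetic is controlling the sparsity factor
\[
s = \max_{j \in [1,N]} \sum_{i=1}^m \sum_{f \in F_i} \sum_{y \in \cY_f} |F_i|\, 1_{\Psi_{f,j}(x_i, y) \neq 0}.
\]
With $|F_i| = 1$ and $|\cY_f| = c$, I upper-bound each indicator by $1$ to obtain $s \leq mc$ in the worst case (as noted right after Theorem~\ref{th:bound_linear}). Plugging this into $\sqrt{s \log(2N)}/m$ from Corollary~\ref{cor:guarantee_linear} gives $\sqrt{c \log(2N)/m}$, and multiplication by $\Lambda_1 r_\infty$ and the relevant $\rho$-dependent prefactor produces exactly the announced additive and multiplicative $\cH_1$ bounds.

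There is essentially no serious obstacle here: the result is a direct instantiation, and both new bounds inherit their high-probability form from Corollary~\ref{cor:guarantee_linear}. The one point worth stating carefully in the written proof is the sparsity bound $s \leq mc$, since this is the only place where an inequality (rather than an equality) is used; sharper bounds on $s$ for sparse features would propagate to even better guarantees, as discussed in Section~\ref{sec:bounds}.
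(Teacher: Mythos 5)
Your proposal is correct and follows essentially the same route as the paper, which obtains this corollary as a direct instantiation of Corollary~\ref{cor:guarantee_linear} with $|F_i| = 1$ and $|\cY_f| = c$ (so that $\sum_{i}\sum_{f}\sum_{y}|F_i| = mc$ for $\cH_2$ and $s \leq mc$ for $\cH_1$), exactly as you argue. The only minor remark is that your derivation retains the factor $M$ in the multiplicative-margin bounds, which is what Corollary~\ref{cor:guarantee_linear} actually yields, whereas the corollary as stated omits it.
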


Consider the following set of linear hypothesis:
\begin{align*}
\cH_{2,1} = \set{x \mapsto \bw \cdot \bPsi(x,y) \colon \|\bw\|_{2,1} \leq \Lambda_{2,1}, y \in [c]},
\end{align*}
where $\bPsi(x,y) = (0, \ldots 0, \bPsi_y(x), 0, \ldots, 0)^T \in \Rset^{N_1 \times \ldots, N_c}$ and $\bw = (\bw_1, \ldots, \bw_c)$ with $\|\bw\|_{2,1} = \sum_{y=1}^c \|\bw_y\|_2$. 
In this case,   $\bw \cdot \bPsi(x,y) =  \bw_y \cdot \bPsi_y(x)$. The standard scenario in multi-class classification is when
$\bPsi_y(x) = \bPsi(x)$ is the same for all $y$.\\

\begin{corollary}
\label{cor:guarantee_linear_multiclass_norm12}
Fix $\rho > 0$.  For any $\delta > 0$, with probability at least
$1 - \delta$ over the draw of a sample $S$ of size $m$, the following
holds for all $h \in \cH_{2,1}$,
\begin{align*}
& R(h) \leq \h R^\text{add}_{S, \rho}(h) +
\frac{16 \Lambda_{2,1} r_{2,\infty} (\log(c))^{1/4}}{\rho \sqrt{m} }  
+ 3M\sqrt{\frac{\log\frac{2}{\delta}}{2m}},\\
& R(h) \leq \h R^\text{mult}_{S, \rho}(h) +
\frac{16 \Lambda_{2,1} r_{2,\infty} (\log(c))^{1/4}}{\rho \sqrt{m} }  
+ 3M\sqrt{\frac{\log\frac{2}{\delta}}{2m}},
\end{align*}
where $r_{2,\infty} = \max_{i,y} \| \Psi_y(x_i)\|_2$.
\end{corollary}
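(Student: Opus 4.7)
The plan is to specialize the empirical-complexity generalization bound of Theorem~\ref{th:bound_emp_complexity} to $\cH_{2,1}$ and then bound the empirical factor graph Rademacher complexity of this class. For standard multi-class classification each example has a trivial factor graph with $|F_i|=1$ and $\cY_{F_i}=[c]$, so the definition specializes to
\[
  \h\Rad^G_S(\cH_{2,1})
    = \frac{1}{m}\E_\be\Bigg[\sup_{\|\bw\|_{2,1}\leq\Lambda_{2,1}}\sum_{i=1}^m\sum_{y=1}^c \e_{i,y}\,\bw_y\cdot\bPsi_y(x_i)\Bigg].
\]
The goal is to establish $\h\Rad^G_S(\cH_{2,1}) \leq 4\Lambda_{2,1}\,r_{2,\infty}(\log c)^{1/4}/\sqrt{m}$; plugging this into Theorem~\ref{th:bound_emp_complexity} with the $4\sqrt{2}/\rho$ or $4\sqrt{2}M/\rho$ prefactor then yields the claimed constant $16$.

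First I would exploit the block structure of $\bPsi(x,y)$: only the $y$-th block is nonzero, so $\bw\cdot\bPsi(x,y) = \bw_y\cdot\bPsi_y(x)$. Writing $\bz_y := \sum_{i=1}^m \e_{i,y}\bPsi_y(x_i)$, the inner supremum becomes $\sup_{\|\bw\|_{2,1}\leq\Lambda_{2,1}}\sum_y \bw_y\cdot\bz_y$, which by the dual-norm relation $(\|\cdot\|_{2,1})^* = \|\cdot\|_{2,\infty}$ collapses to $\Lambda_{2,1}\max_{y\in[c]}\|\bz_y\|_2$. Crucially, because the Rademacher variables $\e_{i,y}$ are indexed by disjoint pairs $(i,y)$, the vectors $\bz_1,\dots,\bz_c$ are independent, which is what will keep the dependence on $c$ sub-polynomial rather than incurring a full $\sqrt{c}$ factor.

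Next I would control $\E_\be\max_{y\in[c]}\|\bz_y\|_2$ by combining Jensen's inequality $\E\max_y\|\bz_y\|_2 \leq (\E\max_y\|\bz_y\|_2^2)^{1/2}$ with a moment estimate for the squared norm: applying a vector-valued Khintchine--Kahane bound $(\E\|\bz_y\|_2^{2p})^{1/(2p)} \leq \sqrt{2p}\,r_{2,\infty}\sqrt{m}$, together with $\E\max_y\|\bz_y\|_2^{2} \leq (c \max_y \E\|\bz_y\|_2^{2p})^{1/p}$, and optimizing $p$. An equivalent and perhaps cleaner route is to invoke Lemma~\ref{lemma:contraction2} (the $\|\cdot\|_{\infty,2}$-Lipschitz contraction) in place of Lemma~\ref{lemma:contraction} in the proof of Theorem~\ref{th:bound}: as noted in Section~\ref{sec:special}, this modification is specifically designed to remove $\sqrt{c}$-type dependencies on the number of classes.

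The main obstacle will be extracting the sharp $(\log c)^{1/4}$ exponent rather than the $\sqrt{\log c}$ that a direct moment optimization produces, and in carefully tracking absolute constants to recover the exact factor $16$. The refinement requires obtaining $\sqrt{\log c}$ inside the outer square root (rather than the $\log c$ one gets from a naive Khintchine--Kahane plus $\E\max\leq (c\max\E^{p})^{1/p}$), and likely leverages either a tighter concentration inequality exploiting the sub-Gaussian Lipschitz behavior of $\|\bz_y\|_2$ in its own Rademacher coordinates, or the improved contraction step provided by Lemma~\ref{lemma:contraction2} which couples the $y$-dependence through a common random choice $\upsilon$ rather than summing independently across classes.
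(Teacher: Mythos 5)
Your setup matches the paper's proof up to a point: specializing Theorem~\ref{th:bound_emp_complexity}, using the block structure so that $\bw\cdot\bPsi(x,y)=\bw_y\cdot\bPsi_y(x)$, passing by duality $(\|\cdot\|_{2,1})^*=\|\cdot\|_{2,\infty}$ to $\Lambda_{2,1}\,\E_\be\max_y\|\bz_y\|_2$ with $\bz_y=\sum_i\e_{i,y}\bPsi_y(x_i)$, and then applying Jensen to pass to $\big(\E_\be\max_y\|\bz_y\|_2^2\big)^{1/2}$ is exactly how the paper proceeds. But the decisive step --- getting only $\sqrt{\log c}$ \emph{inside} that outer square root --- is precisely what you leave open, and your primary route cannot deliver it: the Khintchine--Kahane moment bound combined with $\E\max_y\|\bz_y\|_2^{2}\leq\big(c\,\max_y\E\|\bz_y\|_2^{2p}\big)^{1/p}$ and optimization over $p$ gives $\E\max_y\|\bz_y\|_2^2\lesssim m\,r_{2,\infty}^2\log c$, hence a final dependence of order $(\log c)^{1/2}$, not $(\log c)^{1/4}$, as you yourself acknowledge. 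The paper's missing idea is elementary but essential: expand
\begin{equation*}
\Big\|\sum_{i=1}^m\e_{i,y}\bPsi_y(x_i)\Big\|_2^2
=\sum_{i=1}^m\|\bPsi_y(x_i)\|_2^2+\sum_{i\neq j}\e_{i,y}\e_{j,y}\,\bPsi_y(x_i)\cdot\bPsi_y(x_j),
\end{equation*}
bound the diagonal term deterministically by $m\,r_{2,\infty}^2$, and control $\E_\be\max_y$ of the off-diagonal Rademacher chaos by the maximal inequality of Lemma~\ref{lemma:MaxConcentration} (Massart-type), which costs only a factor $\sqrt{\log c}$ at scale $m\,r_{2,\infty}^2$. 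The sum under the square root is then of order $m\,r_{2,\infty}^2\sqrt{\log c}$, and taking the square root produces the $(\log c)^{1/4}$ rate; plugging the resulting complexity bound into the theorem gives the stated constants.

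Your two suggested repairs are not carried out and neither is what the paper does here. The route via Lemma~\ref{lemma:contraction2} is used elsewhere to soften class-size dependence but is not the mechanism behind this corollary, and invoking it would require redoing the margin-bound proof rather than bounding $\h\Rad^G_S(\cH_{2,1})$ directly. The idea of exploiting sub-Gaussian concentration of the Lipschitz map $\be_{\cdot,y}\mapsto\|\bz_y\|_2$ could in principle give an even better additive bound of the form $r_{2,\infty}(\sqrt{m}+O(\sqrt{\log c}))$, but it relies on a concentration inequality for Lipschitz convex functions that the paper neither states nor needs, and you do not supply it. As written, the proposal establishes the reduction to $\E_\be\max_y\|\bz_y\|_2$ correctly but does not prove the claimed $(\log c)^{1/4}$ bound, so the argument is incomplete at its central step.
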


\begin{proof}
  By definition of the dual norm and
  $\cH_{2,1}$, the
  following holds:
\begin{align*}
m \h \Rad^G_S(\cH_{2,1})
& = 
\E_\be \bigg[\sup_{\|\bw\|_{2,1} \leq \Lambda} \bw \cdot   
\sum_{i = 1}^m \sum_{y \in [c]} 
\e_{i, y}  \bPsi(x_i, y) \bigg] \\
& =
\Lambda \E_\be \bigg[\bigg \|
\sum_{i = 1}^m \sum_{y \in [c]}
\e_{i, y}  \bPsi(x_i, y)
\bigg\|_{2,\infty} \bigg] \\
& =
\Lambda \E_\be \left[ \max_{y}  
  \bigg\|\sum_{i = 1}^m
\e_{i, y} \Psi_y(x_i)\bigg\|_2  \right]\\
& \leq
\Lambda \Bigg(\E_\be \left[ \max_{y}  
  \bigg\|\sum_{i = 1}^m
\e_{i, y} \Psi_y(x_i)\bigg\|_2^2  \right]\Bigg)^{1/2}\\
&=
\Lambda \Bigg(\E_\be \bigg[ \max_{y}  
  \sum_{i = 1}^m \bigg\|\Psi_y(x_i)\bigg\|_2^2
 + \sum_{i \neq j}  \e_{i, y} \e_{j, y} \Psi_y(x_i) \cdot \Psi_y(x_j)  \bigg]\Bigg)^{1/2}\\
& \leq
\Lambda \Bigg( \max_{y}  
  \sum_{i = 1}^m \bigg\|\Psi_y(x_i)\bigg\|_2^2
 + \E_\be \bigg[ \max_{y} \sum_{i \neq j}  \e_{i, y} \e_{j, y} \Psi_y(x_i) \cdot \Psi_y(x_j)  \bigg]\Bigg)^{1/2}
\end{align*}
By Lemma~\ref{lemma:MaxConcentration} (or Massart's lemma),
the following bound holds:
\begin{align*}
\E_\be \bigg[ \max_{y} \sum_{i \neq j}  \e_{i, y} \e_{j, y} \Psi_y(x_i) \cdot \Psi_y(x_j)  \bigg] \leq m r_{2, \infty} \sqrt{\log(c)}. 
\end{align*}
Since, $\max_{y} \sum_{i = 1}^m \|\Psi_y(x_i)\|_2^2 \leq m r^2_{2, \infty} $,
we obtain that the following result holds:
\begin{align*}
\h \Rad^G_S(\cH_{2,1}) \leq \frac{\sqrt{2} \Lambda r_{2, \infty} (\log(c))^{1/4}}{\sqrt{m}},
\end{align*}
and applying Theorem~\ref{th:bound} completes the proof.
\end{proof}

\subsection{VRM structured prediction learning bounds}
\label{app:VRM_bounds}

Here, we give the proof of our structured prediction learning
guarantees in the setting of Voted Risk Minimization. We will
use the following lemma.\\

\begin{lemma}
\label{lemma:F}
The function $\F$ is sub-additive: $\F(x + y) \leq \F(x) + \F(y)$, for
all $x, y \in \Rset$.
\end{lemma}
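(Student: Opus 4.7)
The plan is to reduce the claim to a short case analysis based on whether the upper clipping at $M$ is active on the right-hand side. Recall that by definition $\F(r) = \min(M,\max(0,r))$, so $0 \leq \F(r) \leq M$ and, crucially, $\F(r) \leq \max(0,r)$ for every $r$.

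First I would dispose of the easy case where $\F(x) + \F(y) \geq M$. Since $\F(x+y) \leq M$ always, this immediately yields $\F(x+y) \leq M \leq \F(x) + \F(y)$, and there is nothing more to do. This handles in particular all situations where either argument already lies above the saturation threshold $M$.

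The remaining case is $\F(x) + \F(y) < M$. Here I would observe that, because each $\F(\cdot)$ is nonnegative, this strict inequality forces both $\F(x) < M$ and $\F(y) < M$ individually, so the upper clip is inactive on each term and $\F(x) = \max(0,x)$, $\F(y) = \max(0,y)$. It then suffices to invoke the well-known subadditivity of the ReLU function, $\max(0,x+y) \leq \max(0,x) + \max(0,y)$, together with $\F(x+y) \leq \max(0,x+y)$, to conclude
\[
\F(x+y) \;\leq\; \max(0,x+y) \;\leq\; \max(0,x) + \max(0,y) \;=\; \F(x) + \F(y).
\]

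There is no real obstacle in this proof; the only mildly delicate point is noticing that the inequality $\F(x) + \F(y) < M$ not merely bounds the sum but also forces each individual term to be below $M$, which is what unlocks the identity $\F(\cdot) = \max(0,\cdot)$ on each summand. Verifying subadditivity of ReLU itself is a one-line case split on the signs of $x$ and $y$, which I would either perform inline or cite as standard.
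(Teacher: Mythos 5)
Your proof is correct and essentially the same as the paper's: both hinge on the subadditivity of $r \mapsto \max(0,r)$, with the clipping at $M$ handled by your explicit case split on whether $\F(x)+\F(y) \geq M$, whereas the paper absorbs that step into the one-line inequality $\min(M, a+b) \leq \min(M,a) + \min(M,b)$ for $a, b \geq 0$.
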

\begin{proof} 
  By the sub-additivity of the maximum function, for any
  $x, y \in \Rset$, the following upper bound holds for $\F(x + y)$:
\begin{align*}
\F(x + y) = \min(M, \max(0, x + y)) 
& \leq \min(M, \max(0, x) + \max(0, y)) \\
& \leq \min(M, \max(0, x)) + \min(M, \max(0, y)) \\
& = \F(x) + \F(y),
\end{align*}
which completes the proof.
\end{proof}
For the following proof, for any $\tau \geq 0$, the margin losses
$R^\text{add}_{\rho, \tau}(h)$ and $R^\text{mult}_{\rho, \tau}(h)$ are
defined as the population counterparts of the empirical losses define
by \eqref{eq:add_tau} and \eqref{eq:mult_tau}.\\

\begin{reptheorem}{th:main}
  Fix $\rho > 0$. For any $\delta > 0$, with probability at least
  $1 - \delta$ over the draw of a sample $S$ of size $m$, each of
  the following inequalities holds for all $f \in \cF$:
\begin{align*}
 &R(f) - \h R^\text{add}_{S, \rho, 1}(f)
\leq \frac{4\sqrt{2}}{\rho}
\sum_{t= 1}^T \alpha_t \Rad^G_m(H_{k_t}) + C(\rho, M, c, m, p), \\
 &R(f) - \h R^\text{mult}_{S, \rho, 1}(f)
\leq \frac{4\sqrt{2}M}{\rho}
\sum_{t= 1}^T \alpha_t \Rad^G_m(H_{k_t}) + C(\rho, M, c, m, p).
\end{align*}
where
\begin{align*}
C(\rho, M, c, m, p) = \frac{2M}{\rho} \sqrt{\frac{\log p}{m}}
+ 3 M \sqrt{\Big\lceil \tfrac{4}{\rho^2} \log \big(\tfrac{c^2 \rho^2 m}{4 \log p}\big)
\Big\rceil \frac{\log p}{m} + \frac{\log \frac{2}{\delta}}{2m}}.
\end{align*}
\ignore{Thus, $R(f) \leq \h R_{S, \rho}(f) + 8 \frac{M}{\rho} \sqrt{\frac{ \pi \log(2pc \ov N)}{m}} \sum_{k = 1}^p
\|\bw^k\|_k r_k + O\left(\sqrt{\dfrac{\log
    p}{\rho^2 m} \log \Big[ \frac{\rho^2 M^2c^2 m}{4 \log p} \Big] }\right)$.}
\end{reptheorem}

\begin{proof}
  The proof makes use of Theorem~\ref{th:bound} and the proof
  techniques of \citeapp{KuznetsovMohriSyed2014}[Theorem~1] but requires
  a finer analysis both because of the general loss functions used
  here and because of the more complex structure of the hypothesis
  set.

For a fixed $\bh = (h_1, \ldots, h_T)$, any $\Alpha$ in the probability
simplex $\Delta$ defines a distribution over
$\set{h_1, \ldots, h_T}$. Sampling from
$\set{h_1, \ldots, h_T}$ according to $\Alpha$ and averaging leads to
functions $g$ of the form $g = \frac{1}{n} \sum_{i = 1}^T n_t h_t$ for
some $\n = (n_1, \ldots, n_T) \in \Nset^T$, with $\sum_{t = 1}^T n_t = n$, and
$h_t \in \cH_{k_t}$.

For any $\N = (N_1, \ldots, N_p)$ with $| \N | = 
n$, we consider the family of functions
\begin{equation*}
G_{\cF, \N} = \bigg\{\frac{1}{n} \sum_{k = 1}^p \sum_{j = 1}^{N_k}
  h_{k,j} \mid \forall (k, j) \in [p] \times [N_k], h_{k,j} \in
  H_k \bigg\},
\end{equation*}
and the union of all such families $G_{\cF, n} = \bigcup_{| \N | = n}
G_{\cF, \N}$.  Fix $\rho > 0$. For a fixed $\N$, the empirical factor graph
Rademacher complexity of $G_{\cF, \N}$ can be bounded as follows for any
$m \geq 1$:
\begin{align*}
\h \Rad^G_S(G_{\cF, \N}) \leq \frac{1}{n} \sum_{k = 1}^p
N_k \, \h \Rad^G_S(H_k),
\end{align*}
which also implies the result for the true factor graph Rademacher complexities.

Thus, by Theorem~\ref{th:bound}, the following learning bound holds: for any
$\delta > 0$, with probability at least $1 - \delta$, for all
$g \in G_{\cF, \N}$,
\begin{equation*}
R^\text{add}_{\rho, \frac{1}{2}}(g) -
\h R^\text{add}_{S, \rho, \frac{1}{2}}(g)
\leq \frac{1}{n} \frac{4\sqrt{2}}{\rho} 
\sum_{k = 1}^p N_k \,
\Rad^G_m(H_k) + M\sqrt{\frac{\log \frac{1}{\delta}}{2m}}.
\end{equation*}
Since there are at most $p^n$ possible $p$-tuples $\N$
with $| \N | = n$,\footnote{
The number $S(p, n)$ of $p$-tuples $\N$
with $| \N |=n$ is known to be precisely $\tbinom{p + n - 1}{p - 1}$.\ignore{ If $n \geq p$,
$S(p,n) = \sum_{i=1}^n S(p-1,n-i) \leq n S(p-1, n-1)$. Iterating this
inequality we obtain that $S(n,p) \leq n(n-1)(n-2) \cdots (n-p+1) \leq p^n$.
If $n < p$ the same argument shows that $S(n,p) \leq n^{n-1}p \leq p^n$. }}
by the union bound, for any $\delta > 0$, with probability at
least $1 - \delta$, for all $g \in G_{\cF, n}$, we can write
\begin{equation*}
R^\text{add}_{\rho, \frac{1}{2}}(g) -
 \h R^\text{add}_{S, \rho, \frac{1}{2}}(g)
\leq  \frac{1}{n} \frac{4 \sqrt{2}}{\rho} \sum_{k = 1}^p N_k \,
\Rad^G_m(H_k) + M\sqrt{\frac{\log \frac{p^n}{\delta}}{2m}}.
\end{equation*}
Thus, with probability at least $1 - \delta$, for all functions $g =
\frac{1}{n} \sum_{i = 1}^T n_t h_t$ with $h_t \in \cH_{k_t}$, the following
inequality holds
\begin{equation*}
R^\text{add}_{\rho, \frac{1}{2}}(g) -
 \h R^\text{add}_{S, \rho, \frac{1}{2}}(g)
\leq  \frac{1}{n} \frac{4\sqrt{2 }}{\rho} \sum_{k = 1}^p \sum_{t : k_t = k} n_t \,
\Rad^G_m(H_{k_t}) + M\sqrt{\frac{\log \frac{p^n}{\delta}}{2m}}.
\end{equation*}
Taking the expectation with respect to $\Alpha$ and using $\E_\Alpha[n_t/n] =
\alpha_t$, we obtain that for any $\delta > 0$, with probability at
least $1 - \delta$, for all $g$, we can write
\begin{equation*}
\E_\Alpha[R^\text{add}_{\rho, \frac{1}{2}}(g) -
 \h R^\text{add}_{S, \rho, \frac{1}{2}}(g)]
\leq \frac{4\sqrt{2}}{\rho}
\sum_{t= 1}^T \alpha_t \Rad^G_m(H_{k_t}) +
M\sqrt{\frac{\log \frac{p^n}{\delta}}{2m}}.
\end{equation*}
Fix $n \geq 1$. Then, for any $\delta_n > 0$, with probability at
least $1 - \delta_n$,
\begin{equation*}
\E_\Alpha[R^\text{add}_{\rho, \frac{1}{2}}(g) -
 \h R^\text{add}_{S, \rho, \frac{1}{2}}(g)]
\leq \frac{4 \sqrt{2}}{\rho}
\sum_{t = 1}^T \alpha_t \Rad^G_m(H_{k_t}) +
M\sqrt{\frac{\log \frac{p^n}{\delta_n}}{2m}}.
\end{equation*}
Choose $\delta_n = \frac{\delta}{2 p^{n - 1}}$ for some $\delta > 0$,
then for $p \geq 2$, $\sum_{n \geq 1} \delta_n = \frac{\delta}{2 (1 -
  1/p)} \leq \delta$. Thus, for any $\delta > 0$ and any $n \geq 1$, with
probability at least $1 - \delta$, the following holds for all $g$:
\begin{equation}
\label{eq:gbound_multi}
\E_{\Alpha}[R^\text{add}_{\rho, \frac{1}{2}}(g) -
 \h R^\text{add}_{S, \rho, \frac{1}{2}}(g)]
\leq \frac{4\sqrt{2}}{\rho}
\sum_{t= 1}^T \alpha_t \Rad^G_m(H_{k_t}) +
M\sqrt{\frac{\log \frac{2 p^{2n - 1}}{\delta}}{2m}}.
\end{equation}
Now, for any $f = \sum_{t = 1}^T \alpha_t h_t \in \cF$ and any
$g = \frac{1}{n} \sum_{i = 1}^T n_t h_t$, using \eqref{eq:lossrho}, we
can upper bound $R(f)$, the generalization error of $f$, as follows:
\begin{align}
\label{eq:vrm_aux_bound_1}
R(f) 
& = \E \Big[ L(\sf f(x), y) 1_{\rho_f(x, y) \leq 0} \Big]\\
& \leq \E \Big[ L(\sf f(x), y) 1_{\rho_f(x, y) - (g(x, y) - g(x, y_f))
 < -\rho/2} \Big]
 + \E \Big[ L(\sf f(x), y) 1_{g(x, y) - g(x, y_f) \leq \rho/2} \Big] \nonumber \\
& \leq M  \Pr \Big[ \rho_f(x, y) - (g(x, y) - g(x, y_f)) <
  -\rho/2 \Big]
 + \E \Big[ L(\sf f(x), y) 1_{g(x, y) - g(x, y_f) \leq \rho/2} \Big], \nonumber
\end{align}
where for any function $\varphi\colon \cX \times \cY \to [0, 1]$, we define
$y_\varphi$ as follows:
$y_\varphi = \argmax_{y' \neq y} \varphi(x, y)$.  Using the
same arguments as in the proof of Lemma~\ref{lemma:surrogate}, one can
show that
\begin{align*}
\E \Big[ L(\sf f(x), y)) 1_{g(x, y) - g(x, y_f) < \rho/2} \Big] \leq
R^\text{add}_{\rho, \frac{1}{2}}(g).
\end{align*}

We now give a lower-bound on $\h R^\text{add}_{S, \rho, 1}(f)$ in
terms of $R^\text{add}_{S, \rho, \frac{1}{2}}(g)$. To do so, we start
with the expression of $\h R^\text{add}_{S, \rho, \frac{1}{2}}(g)$:
\begin{align*}
\h R^\text{add}_{S, \rho, \frac{1}{2}}(g) = \E_{(x, y) \sim S}
\Big[ \F \big( \max_{y' \neq y} \loss(y', y) + \tfrac{1}{2} \! - \! \tfrac{1}{\rho}
[g(x, y) \! - \! g(x, y')] \big) \Big]
\end{align*}
By the sub-additivity of $\max$, we can write
\begin{align*}
& \max_{y' \neq y} \loss(y', y) + \tfrac{1}{2} \! - \! \tfrac{1}{\rho}
[g(x, y) \! - \! g(x, y')] \\
& \leq \max_{y' \neq y}
\Bigg\{L(y, y') + 1 - \frac{f(x, y) - f(x, y')}{\rho}\Bigg\} \\
& \quad + \max_{y' \neq y}
\Bigg\{-\frac{1}{2} + \frac{f(x, y) - f(x, y')}{\rho} -
\frac{g(x, y) - g(x, y')}{\rho}\Bigg\}
= X + Y,
\end{align*}
where $X$ and $Y$ are defined by
\begin{align*}
& X = \max_{y' \neq y}
\Bigg(L(y, y') + 1 - \frac{f(x, y) - f(x, y')}{\rho}\Bigg),\\
& Y = -\frac{1}{2} + \max_{y' \neq y}
\Bigg(\frac{f(x, y) - f(x, y')}{\rho} -
\frac{g(x, y) - g(x, y')}{\rho}\Bigg).
\end{align*}
In view of that, since $\F$ is non-decreasing and sub-additive
(Lemma~\ref{lemma:F}), we can write
\begin{align}
\label{eq:vrm_aux_bound_2}
\h R^\text{add}_{S, \rho, \frac{1}{2}}(g) 
& \leq \E_{(x, y) \sim S}[\F(X + Y)]\\
& \leq \E_{(x, y) \sim S}[\F(X) + \F(Y)]
= \E_{(x, y) \sim S}[\F(X)] + \E_{(x, y) \sim S}[\F(Y)]\nonumber \\
& = \h R^\text{add}_{S, \rho, 1}(f) + \E_{(x, y) \sim S}[\F(Y)]\nonumber \\
& \leq \h R^\text{add}_{S, \rho, 1}(f) + M \E_{(x, y) \sim S}[1_{Y
  > 0}]\nonumber \\
& = \h R^\text{add}_{S, \rho, 1}(f) + M \Pr_{(x, y) \sim S}\Big[ \max_{y' \neq y} \big\{
f(x, y) - g(x, y) + (g(x, y') - f(x, y')) \big\} > \rho/2 \Big]. \nonumber 
\end{align}
Combining \eqref{eq:vrm_aux_bound_1} and \eqref{eq:vrm_aux_bound_2}
shows that $R(f) - \h R^\text{add}_{S, \rho, 1}(f)$ is bounded by
\begin{align*}
R^\text{add}_{\rho, \frac{1}{2}}(g) -
 \h R^\text{add}_{S, \rho, \frac{1}{2}}(g)
&+ M  \Pr \Big[ \rho_f(x, y) - (g(x, y) - g(x, y_f)) < -\rho/2 \Big] \\
&+ M \Pr_{(x, y) \sim S}\Big[ \max_{y' \neq y} \{
f(x, y) - g(x, y) + (g(x, y') - f(x, y'))\} > \rho/2 \Big].
\end{align*}
Taking the expectation with respect to $\Alpha$ shows that
$R(f) - \h R^\text{add}_{S, \rho, 1}(f)$ is bounded by
\begin{align}
\label{eq:difference_bound}
\E_{\Alpha}\Big[ R^\text{add}_{\rho, \frac{1}{2}}(g) -
 \h R^\text{add}_{S, \rho, \frac{1}{2}}(g) \Big]
&+ M  \E_{(x, y) \sim \cD, \Alpha}
 \Big[ 1_{\rho_f(x, y) - (g(x, y) - g(x, y_f)) < -\rho/2} \Big] \nonumber \\
&+ M \E_{(x, y) \sim S, \Alpha} \Big[ 1_{\max_{y' \neq y} \{
f(x, y) - g(x, y) + (g(x, y') - f(x, y'))\} > \rho/2} \Big].
\end{align}
By Hoeffding's bound, the following holds:
\begin{align*}
\E_{\Alpha}
 \Big[ 1_{\rho_f(x, y) - (g(x, y) - g(x, y_f)) < -\rho/2} \Big]
& =
\Pr_{\Alpha}
 \Big[ (f(x, y) - f(x, y_f)) - (g(x, y) - g(x, y_f)) < -\rho/2 \Big] \\
& \leq e^{-n\rho^2/8}.
\end{align*}
Similarly, using the union bound and Hoeffding's bound,
the third expectation term appearing in
\eqref{eq:difference_bound} can be bounded as follows:
\begin{align*}
& \mspace{-40mu} \E_{\Alpha} \Big[ 1_{\max_{y' \neq y} \{
f(x, y) - g(x, y) + (g(x, y') - f(x, y'))\} > \rho/2} \Big] \\
& =
\Pr_{\Alpha} \Big[ \max_{y' \neq y} \{
f(x, y) - g(x, y) + (g(x, y') - f(x, y'))\} > \rho/2 \Big] \\
& \leq \sum_{y' \neq y}
\Pr_{ \Alpha} \Big[
f(x, y) - g(x, y) + (g(x, y') - f(x, y')) > \rho/2 \Big] \\
& \leq (c - 1) e^{-n\rho^2/8}.
\end{align*}
Thus, for any fixed $f$, we can write
\begin{align*}
R(f) - \h R^\text{add}_{S, \rho, 1}(f)
& \leq c M e^{-n\rho^2/8} + \E_{\Alpha}\Big[ R^\text{add}_{\rho, \frac{1}{2}}(g) -
 \h R^\text{add}_{S, \rho, \frac{1}{2}}(g) \Big].
\end{align*}
Therefore, the following quantity upper bounds $\sup_{f } R(f) - \h R^\text{add}_{S, \rho, 1}(f)$:
\begin{align*}
c M e^{-n\rho^2/8} + \sup_{g} \E_{\Alpha}\Big[ R^\text{add}_{\rho, \frac{1}{2}}(g) -
 \h R^\text{add}_{S, \rho, \frac{1}{2}}(g) \Big],
\end{align*}
and, in view of \eqref{eq:gbound_multi}, for any $\delta > 0$ and any
$n \geq 1$, with probability at least $1 - \delta$, the following
holds for all $f$:
\begin{align*}
 R(f) - \h R^\text{add}_{S, \rho, 1}(f) 
\leq c M e^{-n\rho^2/8} +  \frac{4\sqrt{2}}{\rho}
\sum_{t= 1}^T \alpha_t \Rad^G_m(H_{k_t}) +
M\sqrt{\frac{\log \frac{2 p^{2n - 1}}{\delta}}{2m}}.
\end{align*}
Choosing $n =  \Big\lceil \frac{4}{\rho^2} \log \big(\frac{ c^2 \rho^2 m}{4 \log p}\big)
\Big\rceil$ yields the following inequality:\footnote{To select $n$ we
  consider $f(n) = c e^{-nu} + \sqrt{n v}$, where $u = \rho^2 / 8$
  and $v = \log p / m$. Taking the derivative of $f$, setting it to zero
  and solving for $n$, we obtain $n = -\frac{1}{2u}
  W_{-1}(-\frac{v}{2 c^2 u})$ where $W_{-1}$ is the second
  branch of the Lambert function (inverse of $x \mapsto x e^x$). Using
  the bound $-\log x \leq -W_{-1}(-x) \leq 2 \log x$ leads to the
  following choice of $n$: $n = \big\lceil -\frac{1}{2u}
  \log(\frac{v}{2 c^2 u})\big\rceil$.}
\begin{align*}
 R(f) - \h R^\text{add}_{S, \rho, 1}(f)
\leq \frac{4\sqrt{2}}{\rho}
\sum_{t= 1}^T \alpha_t \Rad^G_m(H_{k_t}) &+
\frac{2M}{\rho}
\sqrt{\frac{\log p}{m}} \\
&+ 3 M \sqrt{\Big\lceil \tfrac{4}{\rho^2} \log \big(\tfrac{c^2 \rho^2 m}{4 \log p}\big)
\Big\rceil \frac{\log p}{m} + \frac{\log \frac{2}{\delta}}{2m}},
\end{align*}
and concludes the proof.
\end{proof}

By applying Theorem~\ref{th:bound_emp_complexity} instead of Theorem~\ref{th:bound} 
and keeping track of the slightly increased exponential concentration terms
in the proof above, we arrive at the following analogue of Theorem~\ref{th:main} 
in terms of empirical complexities:
\begin{theorem}
  \label{th:main_emp_complexity}
  Fix $\rho > 0$. For any $\delta > 0$, with probability at least
  $1 - \delta$ over the draw of a sample $S$ of size $m$, each of
  the following inequalities holds for all $f \in \cF$:
\begin{align*}
 &R(f) - \h R^\text{add}_{S, \rho, 1}(f)
\leq \frac{4\sqrt{2}}{\rho}
\sum_{t= 1}^T \alpha_t \hat \Rad^G_m(H_{k_t}) + C(\rho, M, c, m, p), \\
 &R(f) - \h R^\text{mult}_{S, \rho, 1}(f)
\leq \frac{4\sqrt{2}M}{\rho}
\sum_{t= 1}^T \alpha_t \hat \Rad^G_m(H_{k_t}) + C(\rho, M, c, m, p).
\end{align*}
where
\begin{align*}
C(\rho, M, c, m, p) = \frac{2M}{\rho} \sqrt{\frac{\log p}{m}}
+ 9 M \sqrt{\Big\lceil \tfrac{4}{\rho^2} \log \big(\tfrac{c^2 \rho^2 m}{4 \log p}\big)
\Big\rceil \frac{\log p}{m} + \frac{\log \frac{2}{\delta}}{2m}}.
\end{align*}
\ignore{Thus, $R(f) \leq \h R_{S, \rho}(f) + 8 \frac{M}{\rho} \sqrt{\frac{ \pi \log(2pc \ov N)}{m}} \sum_{k = 1}^p
\|\bw^k\|_k r_k + O\left(\sqrt{\dfrac{\log
    p}{\rho^2 m} \log \Big[ \frac{\rho^2 M^2c^2 m}{4 \log p} \Big] }\right)$.}
\end{theorem}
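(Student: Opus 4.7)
The plan is to replay the proof of Theorem~\ref{th:main} essentially line by line, substituting Theorem~\ref{th:bound_emp_complexity} for Theorem~\ref{th:bound} at the single place where the factor graph Rademacher complexity bound is invoked, and then carefully tracking how the resulting triplication of the concentration constant (from $M$ to $3M$) propagates through the remaining union bound and randomized sampling steps.

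The critical observation is that the subadditivity $\h\Rad^G_S(G_{\cF,\N}) \leq \frac{1}{n}\sum_{k=1}^p N_k\, \h\Rad^G_S(H_k)$ used in the proof of Theorem~\ref{th:main} actually holds pointwise in the sample $S$ (by subadditivity of the supremum inside the expectation over the Rademacher variables), not merely in expectation. This makes Theorem~\ref{th:bound_emp_complexity} directly applicable to each family $G_{\cF,\N}$, yielding, with probability at least $1-\delta$, for all $g \in G_{\cF,\N}$,
\begin{equation*}
R^\text{add}_{\rho, 1/2}(g) - \h R^\text{add}_{S, \rho, 1/2}(g) \leq \frac{4\sqrt{2}}{n\rho} \sum_{k=1}^p N_k\, \h\Rad^G_S(H_k) + 3M\sqrt{\frac{\log(1/\delta)}{2m}}.
\end{equation*}
This is the exact empirical analog of the corresponding step in the Theorem~\ref{th:main} proof, with the population $\Rad^G_m(H_k)$ replaced by $\h\Rad^G_S(H_k)$ and with an extra factor $3$ on the concentration term. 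A union bound over the at most $p^n$ tuples $\N$ with $|\N|=n$, combined with the geometric union bound over $n \geq 1$ using $\delta_n = \delta/(2p^{n-1})$, then delivers the empirical analog of \eqref{eq:gbound_multi} with $3M\sqrt{\log(2p^{2n-1}/\delta)/(2m)}$ in place of $M\sqrt{\log(2p^{2n-1}/\delta)/(2m)}$.

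The remaining slab arguments, which bound $R(f) - \h R^\text{add}_{S,\rho,1}(f)$ by $\E_\Alpha[R^\text{add}_{\rho,1/2}(g) - \h R^\text{add}_{S,\rho,1/2}(g)]$ plus two Hoeffding-type tail terms of order $cMe^{-n\rho^2/8}$, involve only the randomness of the sampling over $\Alpha$ and are therefore unaffected by whether we use population or empirical complexities; they carry over verbatim. Plugging in the original choice $n = \lceil (4/\rho^2)\log(c^2\rho^2 m/(4\log p))\rceil$ preserves both the bound $cMe^{-n\rho^2/8} \leq \frac{2M}{\rho}\sqrt{\log p/m}$ and the leading term $\frac{4\sqrt{2}}{\rho}\sum_t \alpha_t\, \h\Rad^G_S(H_{k_t})$; the only change is that the concentration coefficient triples from $3M$ (in Theorem~\ref{th:main}) to $9M$, matching the stated $C(\rho,M,c,m,p)$. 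The multiplicative case is handled identically by invoking the multiplicative branch of Theorem~\ref{th:bound_emp_complexity}. There is no substantive technical obstacle beyond the bookkeeping; the one point worth double-checking is the pointwise-in-$S$ version of the subadditivity of $\h\Rad^G_S$ for $G_{\cF,\N}$, which follows immediately and is what makes the whole substitution work.
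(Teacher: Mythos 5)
Your proposal is correct and follows essentially the same route as the paper, which proves this theorem simply by rerunning the proof of Theorem~\ref{th:main} with Theorem~\ref{th:bound_emp_complexity} substituted for Theorem~\ref{th:bound} and tracking the enlarged concentration constant (the factor $3M$ becoming $9M$). Your explicit remark that the subadditivity $\h\Rad^G_S(G_{\cF,\N}) \leq \frac{1}{n}\sum_{k=1}^p N_k\, \h\Rad^G_S(H_k)$ holds pointwise in $S$ is precisely the (implicit) step that makes this substitution legitimate, and the remaining union-bound, sampling, and choice-of-$n$ arguments carry over unchanged as you describe.
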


\subsection{General upper bound on the loss based on convex
  surrogates}
\label{app:convex_surrogates}

Here, we present the proof of a general upper bound on a loss function
in terms of convex surrogates.\\

\begin{replemma}{lemma:surrogate}
For any $u \in \Rset_+$, let $\Phi_u\colon \Rset \to \Rset$ be an
upper bound on $v \mapsto u \I_{v \leq 0}$\ignore{ such that
$u \mapsto \Phi_u(v)$ is increasing for a fixed $v$}. Then, the
following upper bound holds for any $h \in \cH$ and
$(x, y) \in \cX \times \cY$,
\begin{equation}
\loss(\hh(x), y)
\leq \max_{y' \neq y} \Phi_{\loss( y', y)} (h(x, y) - h(x, y')).
\end{equation}
\end{replemma}
\begin{proof}
  If $\hh(x) = y$, then $\loss(\hh(x), y) = 0$ and the result follows.
  Otherwise, $\hh(x) \neq y$ and the following bound holds:
\begin{align*}
\loss(\hh(x), y)
& = \loss(\hh(x), y) \I_{\rho_h(x, y) \leq 0} \\
& \leq \Phi_{\loss(\hh(x), y)}(\rho_h(x, y))\\
& = \Phi_{\loss(\hh(x),y)} (h(x, y) - \max_{ y' \neq y} h(x,  y')) \\
& = \Phi_{\loss(\hh(x),y)} (h(x, y) -  h(x,  \hh(x))) \\
&\leq \max_{y' \neq y}\Phi_{\loss(y',y)} (h(x, y) -  h(x,  y')),
\end{align*}
which concludes the proof.   
\end{proof}

\ignore{
\section{Algorithms}

In this section, we present a more detailed derivation of the VCRF optimization problem.
We begin with the first generalization bound presented in Theorem~\ref{th:main}:
 \begin{align*}
 &R(f) - \h R^\text{add}_{S, \rho, 1}(f)
\leq \frac{4\sqrt{2}}{\rho}
\sum_{t= 1}^T \alpha_t \Rad^G_m(H_{k_t}) + C(\rho, M, c, m, p), 
\end{align*}
as the other one can be treated similarly.
We will consider the linear hypotheses as defined by $\cH_1$ in Section~\ref{sec:bounds}. Each classifier
has the associated scoring function $h(x,y) = w \cdot \bpsi(x,y)$, and an ensemble of them associated to $p$ different factor graphs 
can be written as 
$\sum_{t=1}^T \alpha_t \bw_t \cdot \bpsi_t(x,y),$ where $\sum_{t=1}^T \alpha_t = 1$, $\|\bw_t\|_1 \leq \Lambda_1$, and $\bpsi_t(x,y) =  \sum_{f \in \cF_{k_t}} \bpsi_{t,f}(x,y)$.
In the last expression, $k_t \in [p]$ is the index of the factor graph associated to $\bpsi_t$ 

Theorem~\ref{th:main} implies that the generalization error of any such ensemble classifier can be bounded by
\begin{align*}
\sum_{i=1}^m \F \bigg( \max_{y' \neq y_i} \loss(y', y_i) \Big(1  -   \tfrac{1}{\rho}
 [ \sum_{t=1}^T \alpha_t \bw_t \cdot \bpsi(x_i, y_i)   -  \alpha_t \bw_t \cdot \bpsi(x_i, y') ] \Big) \bigg) + 
\frac{4 \sqrt{2}}{\rho} \sum_{t=1}^T \alpha_t \Rad^G_m(\cH_{k_t})
\end{align*}
plus some term that is independent of the model $\{\alpha_t, \bw_t\}_{t=1}^T$.\\

Notice that the generalization error of the scoring function $\sum_{t=1}^T \alpha_t \bw_t \bpsi(x,y)$ does not change under positive rescaling. Thus, we can 
rewrite the above problem as :
\begin{align*}
\sum_{i=1}^m \F \bigg( \max_{y' \neq y_i} \loss(y', y_i) \Big(1  -   \tfrac{1}{\rho}
 [ \sum_{t=1}^T \alpha_t \bw_t \cdot \bpsi_t(x_i, y_i)   -  \alpha_t \bw_t \cdot \bpsi_t(x_i, y') ] \Big) \bigg) + 4 \sqrt{2} \sum_{t=1}^T \alpha_t \Rad^G_m(\cH_{k_t}),
\end{align*}
where we now impose the constraint $\sum_{t=1}^T \alpha_t = \frac{1}{\rho}.$

Theorem~\ref{th:bound} suggests that we can control the Rademacher complexity using the following expression:
$$C \sum_{t=1}^T \alpha_t \|\bw_t\|_1 r_{\infty,t} \sqrt{\log(N_t)} |\cF_t|,$$
where $C$ is a constant independent of $\{\alpha_t, \bw_t, N_t, \cF_t, r_{\infty,t} \}_{t=1}^T$. 

This gives us a bound of the form:
\begin{align*}
\sum_{i=1}^m \F \bigg( \max_{y' \neq y_i} \loss(y', y_i) \Big(1  -   \tfrac{1}{\rho}
 [ \sum_{t=1}^T \alpha_t \bw_t \cdot \bpsi_t(x_i, y_i)   -  \alpha_t \bw_t \cdot \bpsi_t(x_i, y') ] \Big) \bigg) 
+ C \sum_{t=1}^T \alpha_t \|\bw_t\|_1 r_{\infty,t} \sqrt{\log(N_t)} |\cF_t|
\end{align*}

Since our base classifiers are linear functions, we can actually simplify this expression by writing it in terms of a decomposition across the $p$ families:
$$\sum_{t=1}^T \alpha_t \bw_t \cdot \psi_t(x,y) = \sum_{k=1}^p \sum_{t: k_t = k} \alpha_t \bw_t \cdot \psi_t(x,y) = \sum_{k=1}^p \tilde{\bw}_k \cdot \psi_t(x,y),$$
leading to the following expression:
\begin{align*}
 \begin{align*}
\sum_{i=1}^m \F \bigg( \max_{y' \neq y_i} \loss(y', y_i) \Big(1  -   \tfrac{1}{\rho}
 [ \sum_{k=1}^p \bw_k \cdot \bpsi_k(x_i, y_i)   -  \bw_k \cdot \bpsi_k(x_i, y') ] \Big) \bigg) 
+ C \sum_{k=1}^p \|\bw_k\|_1 r_k, 
\end{align*}
  where $r_k = r_{\infty,k} \sqrt{\log(N_k)} |\cF_k|$. Furthermore, we can rewrite the constraints 
  $\sum_{t=1}^T \alpha_t = \frac{1}{\rho}$ and $\|w_t\|_1 \leq \Lambda_{k_t}$ into a constraint on $\sum_{t=1}^T \alpha_t \bw_t = \sum_{k=1}^p \tilde{\bw}_k$.

Finally, since $\F$ is non-convex, we can apply the convex upper bound

}

\section{Experiments}
\label{app:experiments}

\subsection{Datasets}
\label{app:datasets}

\begin{table}[t]
\caption{Description of datasets.}
\label{table:data_desc}
\scriptsize
\vskip .1in
\centering
\setlength{\tabcolsep}{0.15 cm}
\begin{tabular}{ l  c  c  c  c  c  c }
  \hline
  Dataset            & Full name               & Sentences & Tokens & Unique tokens & Labels \\ \hline
  {\tt Basque}       & Basque UD Treebank      & 8993   & 121443  & 26679 & 16 \\ 
  {\tt Chinese}      & Chinese Treebank 6.0    & 28295  & 782901  & 47570 & 37 \\ 
  {\tt Dutch}        & UD Dutch Treebank       & 13735  & 200654  & 29123 & 16 \\ 
  {\tt English}      & UD English Web Treebank & 16622  & 254830  & 23016 & 17 \\ 
  {\tt Finnish}      & Finnish UD Treebank     & 13581  & 181018  & 53104 & 12 \\ 
  {\tt Finnish-FTB}  & UD\_Finnish-FTB         & 18792  & 160127  & 46756 & 15 \\ 
  {\tt Hindi}        & UD Hindi Treebank             & 16647  & 351704  & 19232 & 16 \\ 
  {\tt Tamil}        & UD Tamil Treebank             & 600    & 9581    & 3583  & 14 \\ 
  {\tt Turkish}      & METU-Sabanci Turkish Treebank & 5635   & 67803   & 19125 & 32 \\ 
  {\tt Twitter}      & Tweebank                      & 929    & 12318   & 4479  & 25 \\ \hline 
\end{tabular}
\end{table}

This section reports the results of preliminary experiments with
the \VCRF\ algorithm.  The experiments in this section are meant to
serve as a proof of concept of the benefits of VRM-type regularization
as suggested by the theory developed in this paper.  We leave an
extensive experimental study of other aspects of our theory, including
general loss functions, convex surrogates and $p$-norms, to
future work.

For our experiments, we chose the part-of-speech task
(POS) that consists of labeling each word of a sentence with its
correct part-of-speech tag.  We used 10 POS datasets: {\tt Basque},
{\tt Chinese}, {\tt Dutch}, {\tt English}, {\tt Finnish}, {\tt
  Finnish-FTB}, {\tt Hindi}, {\tt Tamil}, {\tt Turkish} and {\tt
  Twitter}.  The detailed description of these datasets is in
Appendix~\ref{app:datasets}.
Our \VCRF\ algorithm can be applied with a variety of different
families of feature functions $H_k$ mapping $\cX \times \cY$ to
$\Rset$. Details concerning features and complexity penalties $r_k$s
are provided in Appendix~\ref{app:features}, while an outline of our
hyperparameter selection and cross-validation procedure is given in
Appendix~\ref{app:tuning}.

The average error and the standard deviation of the errors are
reported in Table~\ref{table:results} for each data set.  Our results
show that \VCRF\ provides a statistically significant improvement over
$L_1$-CRF on every dataset, with the exception of {\tt English} and
{\tt Dutch}.  One-sided paired $t$-test at $5\%$ level was used to
assess the significance of the results.  It should be noted that for
all of the significant results, \VCRF\ outperformed $L_1$-CRF on every
fold.  Furthermore, our results indicate that \VCRF\ tends to produce
models that are sparser than those of $L_1$-CRF. This is highlighted
in Table~\ref{table:ftr_cnt} of Appendix~\ref{app:features}.
As can be seen, \VCRF\ tends to produce
models that are much more sparse due to its heavy penalization on the
large number of higher-order features.
In a separate set of experiments, we have also tested the robustness
of our algorithm to erroneous annotations and noise.  The details and
the results of these experiments are given in
Appendix~\ref{app:results_noise}.

Further details on the datasets and the specific features as well as more experimental
results are provided below. 

Table~\ref{table:data_desc} provides some statistics for each of the datasets that we use.
These datasets span a variety of sizes, in terms of sentence count, token count, and unique
token count. Most are annotated under the Universal Dependencies (UD) annotation system\ignore{ \cite{NivreEtAl2015}},
with the exception of the Chinese (\citeapp{PalmerEtAl2007}), Turkish (\citeapp{OflazerEtAl2003, AtalayEtAl2003}), and Twitter (\citeapp{GimpelEtAl2011, OwoputiEtAl2013}) datasets.

\subsection{Features and complexities}
\label{app:features}

The standard features that are used in POS tagging are usually binary
indicators that signal the occurrence of certain words, tags or other
linguistic constructs such as suffixes, prefixes, punctuation,
capitalization or numbers in a window around a given position in the
sequence.  In our experiments, we use the union of a broad family of
products of such indicator functions.  Let $V$ denote the input
vocabulary over alphabet $\Sigma$. For $x \in V$ and $t \geq 0$, let
$\text{suff}(x,t)$ be the suffix of length $t$ for the word $x$ and
$\text{pref}(x,t)$ the prefix. Then for $k_1, k_2, k_3 \geq 0$, we can
define the following three families of base features:
\begin{align*}
  &H^\text{w}_{k_1}(s) = \Big\{ x \mapsto \I_{x^{s+r}_{s-t+1} = x'} 
  \colon t,r \in \Nset , r+t=k_1, x' \in V^{k_1}\Big\}, \\
  &H^\text{tag}_{k_2}(s) = \set{ y \mapsto \I_{y^{s}_{s-k_2+1} = y'} \colon y' \in \Delta^{k_2}}, \\
 &H^\text{sp}_{k_3}(s) = \Big\{ x \mapsto \I_{\text{suff}(x_s, t) = S} \I_{\text{pref}(x_s, r) = P}
  \colon t,r \in \Nset, t + r = k_3, S \in \Sigma^t, P \in \Sigma^r\Big\}.
\end{align*}
We can then define a family of features $H_{k_1,k_2,k_3}$
that consists of functions of the form
\begin{align*}
\Psi(x, y) = \sum_{s=1}^l \psi(x, y, s),
\end{align*}
where $\psi(x, y, s) = h_1(x)h_2(y)h_3(x)$,
for some 
$h_1 \in \cH^\text{w}_{k_1}(s)$, $h_2 \in \cH^\text{tag}_{k_2}(s)$,
$h_3 \in \cH^\text{sp}_{k_3}(s)$.

As an example, consider the following sentence:
\begin{table}[h]
\begin{center}
\begin{tabular}{c c c c c}
  DET & NN  & VBD & RB          & JJ \\
  The & cat & was & surprisingly & agile \\ 
\end{tabular}
\end{center}
\end{table}

\begin{figure}[t]
\centering
\includegraphics[scale=0.25]{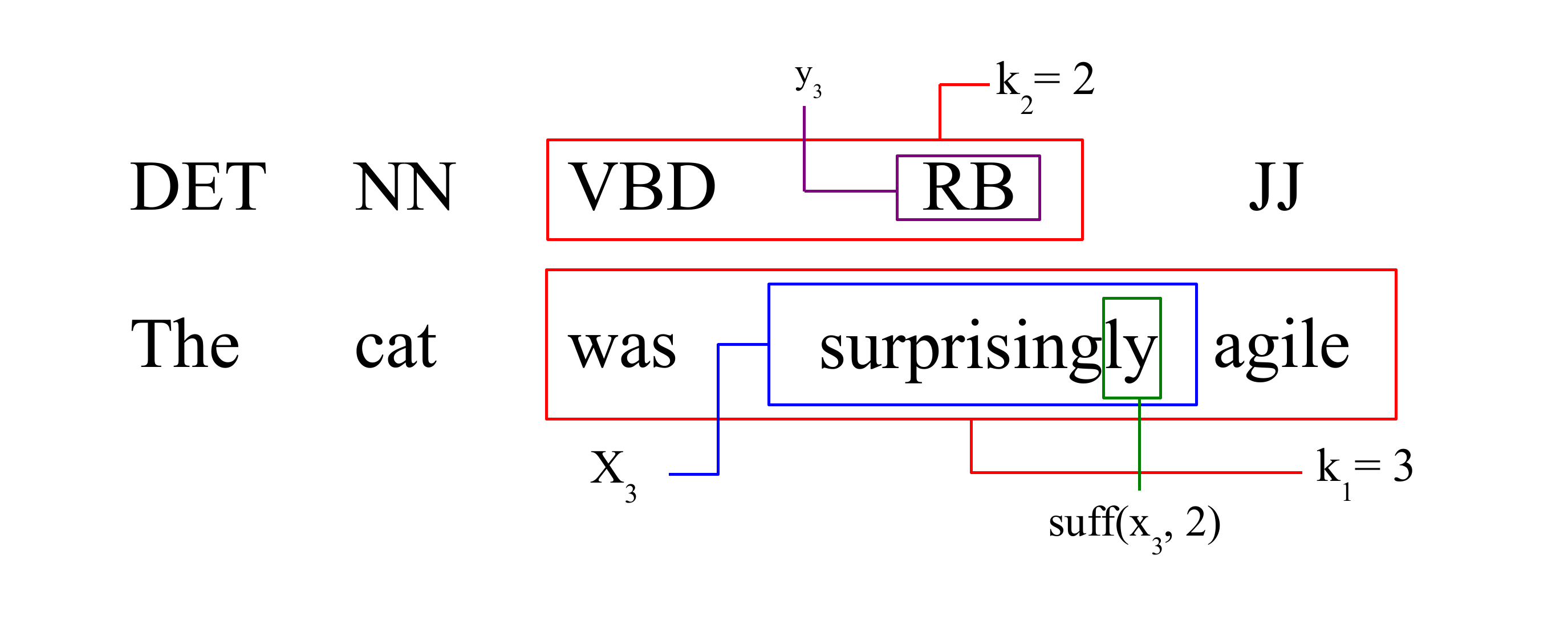}
\vskip -.15in
\caption{Example of features for a POS task.}
\label{fig:pos_example}
\vskip -.15in
\end{figure}

Then, at position $s=3$, the following features 
$h_1 \in \cH_3^\text{w}(3)$, $h_2 \in \cH_2^\text{tag}(3)$, $h_3 \in \cH_1^\text{sp}(3)$ would activate:
\begin{align*}
 &h_1(x) = \I_{x_2 = \text{`was'},\, x_3 = \text{`surprisingly'},\, x_4 = \text{`agile'}}(x) \\
  &h_2(y) = \I_{y_2=\text{'VBD'},\, y_3 = \text{`RB'}}(y) \\
  &h_3(x) = \I_{\text{suff}(x_3, 2) = \text{`ly'}}(x). 
\end{align*}

See Figure~\ref{fig:pos_example} for an illustration.

Now, recall that the \VCRF\ algorithm requires knowledge of 
complexities $r(H_{k_1,k_2,k_3})$.
By definition of the hypothesis set and $r_k$s
\begin{align}
\label{eq:rademacher-bound}
r(H_{k_1,k_2,k_3}) \leq
\sqrt{\frac{2 (k_1 \log |V| + k_2 \log |\Delta| + k_3 \log |\Sigma|}{m}},
\end{align}
which is precisely the complexity penalty used in our experiments.
\ignore{
This can be seen as a consequence of Massart's lemma, which states
that for $A \subset \Rset^n$,
$r = \max_{{\bf x} \in A} \|{\bf x}\|_2$, and $\sigma_i$'s Rademacher
random variables, we have
\begin{align*}
\frac{1}{m} \E_{{\bf \sigma}}\Big[\sup_{{\bf x} \in A} \sum_{i=1}^m \sigma_i x_i\Big] \leq \frac{r \sqrt{2\log |A|}}{m}.
\end{align*}

In our case, we take $A$ to be the image of the sample data under
the function family $H_{k_1, k_2, k_3}$, so that 
$|A| \leq |H_{k_1,k_2,k_3}| \leq |V|^{k_1} |\Delta|^{k_2} |\Sigma|^{l_3}$.
Moreover, since our features are indicator functions, it suffices to set
$r = \sqrt{m}$. Applying Massart's lemma leads
to the desired bound.}

The impact of this added penalization can be seen in Table~\ref{table:ftr_cnt}, where
it is seen that the number of non-zero features for \VCRF~can be
dramatically smaller than the number for $L_1$-regularized CRF.

\subsection{Hyperparameter tuning and cross-validation}
\label{app:tuning}

Recall that the \VCRF\ algorithm admits two hyperparameters
$\lambda$ and $\beta$.  In our experiments, we optimized over
$\lambda, \beta \in \set{1, 0.5, 10^{-1}, \ldots, 10^{-5}, 0}$.  We
compared \VCRF\ against $L_1$-regularized CRF, which is the special
case of \VCRF\ with $\lambda = 0$.  For gradient computation, we used
the procedure in Section~\ref{sec:GradNoloss}, which is agnostic to
the choice of the underlying loss function.  While our algorithms can
be used with very general families of loss functions this choice allows
an easy direct comparison with the CRF algorithm.  We ran each
algorithm for 50 full passes over the entire training set or until
convergence.

\begin{table}[t]
\caption{Experimental results for both \VCRF\ and CRF.
\VCRF\ refers to the conditional random
field objective with both VRM-style regularization and $L_1$ regularization
while CRF refers to the objective with only $L_1$ regularization. 
Boldfaced results are statistically significant at a 5\% confidence level.}
\label{table:results}
\vskip .1in
\scriptsize
\centering
\setlength{\tabcolsep}{0.15 cm}
\begin{tabular}{ l c  c  c  c }
                    & \multicolumn{2}{c } {\textbf{\VCRF\ error (\%)}}             & \multicolumn{2}{c } {\textbf{CRF error(\%)}}  \\ \hline
  Dataset           & Token                       & Sentence                      & Token                       & Sentence \\ \hline
  {\tt Basque}      & {\bf 7.26 $\pm$ 0.13}  & {\bf 57.67 $\pm$ 0.82}   &  \phantom{1}7.68 $\pm$ 0.20  &  59.78 $\pm$ 1.39 \\ 
  {\tt Chinese}     & {\bf 7.38 $\pm$ 0.15}  &  {\bf 67.73 $\pm$ 0.46}   &  \phantom{1}7.67 $\pm$ 0.12  &  68.88 $\pm$ 0.49 \\ 
  {\tt Dutch}       & 5.97 $\pm$ 0.08  & 49.27 $\pm$ 0.71   & \phantom{1}6.01 $\pm$ 0.92  &  49.48 $\pm$ 1.02 \\ 
  {\tt English}     & 5.51 $\pm$ 0.04           & 44.40 $\pm$ 1.30            & \phantom{1}5.51 $\pm$ 0.06            & 44.32 $\pm$ 1.31 \\ 
  {\tt Finnish}     & {\bf 7.48 $\pm$ 0.05}  & {\bf 55.96 $\pm$ 0.64}   &  \phantom{1}7.86 $\pm$ 0.13  &  57.17 $\pm$ 1.36 \\ 
  {\tt Finnish-FTB} & {\bf 9.79 $\pm$ 0.22}  & {\bf 51.23 $\pm$ 1.21}   &  10.55 $\pm$ 0.22 &  52.98 $\pm$ 0.75 \\ 
  {\tt Hindi}       & {\bf 4.84 $\pm$ 0.10}  & {\bf 51.69 $\pm$ 1.07}   &  \phantom{1}4.93 $\pm$ 0.08 & 53.18 $\pm$ 0.75 \\ 
  {\tt Tamil}       & {\bf 19.82 $\pm$ 0.69} & {\bf 89.83 $\pm$ 2.13}  & 22.50 $\pm$ 1.57 & 92.00 $\pm$ 1.54 \\ 
  {\tt Turkish}     & {\bf 11.28 $\pm$ 0.40} & {\bf 59.63 $\pm$ 1.55}   &  11.69 $\pm$ 0.37 &  61.15 $\pm$ 1.01 \\ 
  {\tt Twitter}     & {\bf 17.98 $\pm$ 1.25} & {\bf 75.57 $\pm$ 1.25}   &  19.81 $\pm$ 1.09 &  76.96 $\pm$ 1.37 \\ \hline
\end{tabular}
\end{table} 

In each of the experiments, we used 5-fold cross-validation
for model selection and performance evaluation.
Each dataset was randomly partitioned
into $5$ folds, and each algorithm was run $5$ times, with a
different assignment of folds to the training set, validation
set and test set for each run. For each run
$i \in \{0,\ldots,4\}$, fold $i$ was used for validation, 
fold $i+1 (\text{ mod } 5)$ was used for testing,
and the remaining folds were used
for training. In each run, we selected the parameters that
had the lowest token error on the validation set and then measured
the token and sentence error of those parameters on the test set. The average
error and the standard deviation of the errors 
are reported in Table~\ref{table:results} for each data set.

\subsection{More experiments}
\label{app:results_noise}

In this section, we present our results for a POS tagging task when
noise is artificially injected into the labels. Specifically, for
tokens corresponding to features that commonly appear in the dataset
(at least five times in our experiments), we flip their associated POS
label to some other arbitrary label with 20\% probability.

The results of these experiments are given in 
Table~\ref{table:results_noise20}. They
demonstrate that \VCRF\ outperforms $L_1$-CRF in the majority of cases.
Moreover, these differences can be magnified from the original
scenario, as can be seen on the {\tt English} and {\tt Twitter}
datasets.

\begin{table}[t]
\vskip .1in
\caption{
Average number of features for \VCRF\ and $L_1$-CRF.}
\scriptsize
\centering
\setlength{\tabcolsep}{0.15 cm}
\begin{tabular}{ l  l  l l } \hline
  Dataset           & VCRF & CRF & Ratio \\ \hline
  {\tt Basque}      & 7028 & 94712653 & 0.00007 \\
  {\tt Chinese}     & 219736 & 552918817 & 0.00040\\
  {\tt Dutch}       & 2646231  & 2646231 & 1.00000  \\
  {\tt English}     & 4378177 & 357011992 & 0.01226\\
  {\tt Finnish}     & 32316 & 89333413 & 0.00036 \\
  {\tt Finnish-FTB} & 53337 & 5735210 & 0.00930 \\
  {\tt Hindi}       & 108800 & 448714379 &  0.00024 \\
  {\tt Tamil}       & 1583  & 668545 & 0.00237\\
  {\tt Turkish}     & 498796  & 3314941 & 0.15047 \\
  {\tt Twitter}     & 18371 & 26660216 & 0.000689 \\ \hline
\end{tabular}
\label{table:ftr_cnt}
\end{table}

\begin{table}[t]
\caption{
Experimental results of both VCRF and CRF with 20\% random noise added to the
training set. Labels of tokens are flipped uniformly at random with 
20\% probability. Boldfaced results are statistically significant at a 5\% confidence level.}
\label{table:results_noise20}
\vskip .1in
\scriptsize
\centering
\setlength{\tabcolsep}{0.15 cm}
\begin{tabular}{ l  c  c   c  c }
  
                     & \multicolumn{2}{c } {\textbf{VCRF error (\%)}}           & \multicolumn{2}{c } {\textbf{CRF error(\%)}}  \\ \hline   
  Dataset            & Token                        & Sentence                   & Token               & Sentence \\ \hline
  {\tt Basque}       & {\bf 9.13 $\pm$ 0.18}          & {\bf 67.43 $\pm$ 0.93}         & 9.42 $\pm$ 0.31  & 68.61 $\pm$ 1.08 \\ 
  {\tt Chinese}      & {\bf 96.43 $\pm$ 0.33}      & {\bf 100.00 $\pm$ 0.01}  & 96.81 $\pm$ 0.43  & 100.00 $\pm$ 0.01 \\ 
  {\tt Dutch}        & {\bf 8.16 $\pm$ 0.52}     & {\bf 62.15 $\pm$ 1.77}   & 8.57 $\pm$ 0.30  & 63.55 $\pm$ 0.87 \\ 
  {\tt English}      & {\bf 8.79 $\pm$ 0.23}      & {\bf 61.27 $\pm$ 1.21}   & 9.20 $\pm$ 0.11   & 63.60 $\pm$ 1.18 \\ 
  {\tt Finnish}      & {\bf 9.38 $\pm$ 0.27}      & {\bf 64.96 $\pm$ 0.89}   & 9.62 $\pm$ 0.18   & 65.91 $\pm$ 0.93 \\ 
  {\tt Finnish-FTB}  & {\bf 11.39 $\pm$ 0.29}     & {\bf 72.56 $\pm$ 1.30}   & 11.76 $\pm$ 0.25  & 73.63 $\pm$ 1.19 \\ 
  {\tt Hindi}        & {\bf 6.63 $\pm$ 0.51}            & {\bf 63.84 $\pm$ 2.86}            & 7.85 $\pm$ 0.33  & 71.93 $\pm$ 1.20 \\ 
  {\tt Tamil}        & 20.77 $\pm$ 0.70      & 93.00 $\pm$ 1.35      & 21.36 $\pm$ 0.86  & 93.50 $\pm$ 1.78 \\ 
  {\tt Turkish}      & 14.28 $\pm$ 0.46           & 69.72 $\pm$ 1.51         & 14.31 $\pm$ 0.53  & 69.62 $\pm$ 2.04	 \\ 
  {\tt Twitter}      & 90.92 $\pm$ 1.67      & 100.00 $\pm$ 0.00   & 92.27 $\pm$ 0.71   & 100.00 $\pm$ 0.00 \\ \hline
\end{tabular}
\vskip -.2in
\end{table}

\section{Voted Structured Boosting (\VStructBoost)}
\label{sec:VStructBoost}

In this section, we consider algorithms based on the \StructBoost\ surrogate
loss, where we choose $\Phi_u(v) = u e^{-v}$. Let
$\d \bPsi(x, y, y') = \bPsi(x, y) - \bPsi(x, y')$. This then
leads to the following optimization problem:
\begin{align}
\label{eq:strboost-opt1}
  \min_{\bw} \frac{1}{m} \sum_{i = 1}^m \max_{y \neq y_i} \loss(y, y_i)
e^{-\bw \cdot \d \bPsi(x_i, y_i, y) }
 + \sum_{k = 1}^p (\lambda r_k + \beta) \| \bw_k \|_1. 
\end{align}
One disadvantage of this formulation is that the first term of the
objective is not differentiable. Upper bounding the maximum by a
sum leads to the following optimization problem:
\begin{align}
\label{eq:strboost-opt2} 
\min_{\bw} \frac{1}{m} \sum_{i = 1}^m \sum_{y \neq y_i} \loss(y, y_i)
e^{-\bw \cdot \d \bPsi(x_i, y_i, y) }
 + \sum_{k = 1}^p (\lambda r_k + \beta) \| \bw_k \|_1. 
\end{align}
We refer to the learning algorithm based on the optimization
problem~\eqref{eq:strboost-opt2} as \VStructBoost. To the best of our
knowledge, the formulations \eqref{eq:strboost-opt1} and
\eqref{eq:strboost-opt2} are new, even with the standard $L_1$- or
$L_2$-regularization.

\section{Optimization solutions}
\label{app:opt}

Here, we show how the optimization problems in \eqref{eq:crf-opt1} and
\eqref{eq:strboost-opt2} can be solved efficiently when the feature
vectors admit a particular factor graph decomposition that we refer
to as Markov property. 
\ignore{
Our algorithms work with several broad families of loss
functions including Markovian losses (which include the Hamming loss),
rational losses (which include the $n$-gram loss that has been used to
approximate the BLEU score), and tropical losses (which include the
edit-distance).
}

\subsection{Markovian features}
\label{sec:features}

\ignore{
Learning and inference for structured prediction algorithms is in
general intractable with arbitrary feature mappings $\bPsi$. In
practice, however, the feature mappings used often admit some
favorable properties that, combined with the structure of $\cY$, can
be exploited to derive efficient algorithms.}

We will consider in what follows the common case where $\cY$ is a set
of sequences of length $l$ over a finite alphabet $\Delta$ of size
$r$. Other structured problems can be treated in similar ways. We will
denote by $\ve$ the empty string and for any sequence
$y = (y_1, \ldots, y_l) \in \cY$, we will denote by
$y_{s}^{s'} = (y_s, \ldots, y_{s'})$ the substring of $y$ starting
at index $s$ and ending at $s'$. For convenience, for $s \leq 0$, we
define $y_s$ by $y_s = \ve$.

One common assumption that we shall adopt here is that the feature
vector $\bPsi$ admits a \emph{Markovian property of order $q$}. By this,
we mean that it can be decomposed as follows for
any $(x, y) \in \cX \times \cY$:
\begin{equation}
\bPsi(x, y) = \sum_{s = 1}^l \bpsi(x, y_{s - q + 1}^s, s).
\end{equation}
for some position-dependent feature vector function $\bpsi$ defined
over $\cX \times \Delta^q \times [l]$.  This also suggests a natural
decomposition of the family of feature vectors
$\bPsi = (\bPsi_1, \ldots, \bPsi_p)$ for the application of VRM
principle where $\bPsi_k$ is a Markovian feature vector of order $k$.
Thus, $\cF_k$ then consists of the family of Markovian feature
functions of order $k$.  We note that we can write
$\bPsi = \sum_{k = 1}^p \tilde \bPsi_k$ with
$\tilde \bPsi_k = (0, \ldots, \bPsi_k, \ldots, 0)$. In the following,
abusing the notation, we will simply write $\bPsi_k$ instead of
$\tilde \bPsi_k$. Thus, for any $x \in \cX$ and
$y \in \cY$,\footnote{Our results can be straightforwardly
  generalized to more complex decompositions of the form
  $\bPsi(\bx, y) = \sum_{q = 1}^Q \sum_{k = 1}^p \bPsi_{q, k}(x,
  y)$.}
\begin{equation}
\bPsi(\bx, y) = \sum_{k = 1}^p \bPsi_k(x, y). 
\end{equation}
For any $k \in [1, p]$, let $\bpsi_k$ denote the position-dependent
feature vector function corresponding to $\bPsi_k$. Also, for any
$x \in \cX$ and $y \in \Delta^l$, define $\tl \bpsi$ by
$\tl \bpsi(x, y_{s - p + 1}^s, s) = \sum_{k = 1}^p \bpsi_k(x, y_{s
  - k + 1}^s, s)$. Observe then that we can write
\begin{align}
\bPsi(x, y)
= \sum_{k = 1}^p
  \bPsi_k(x, y)
& = \sum_{k = 1}^p \sum_{s = 1}^l \nonumber
  \bpsi_k(x, y_{s - k + 1}^s, s)\\\nonumber
& = \sum_{s = 1}^l \sum_{k = 1}^p 
  \bpsi_k(x, y_{s - k + 1}^s, s)\\
\label{eqn:features}
& = \sum_{s = 1}^l \tl \bpsi(x_i, y_{s - p + 1}^s, s).
\end{align}
In Sections~\ref{sec:Grad_vcrf} and \ref{sec:Grad_vsb}, we describe
algorithms for efficiently computing the gradient by leveraging the
underlying graph structure of the problem.

\subsection{Efficient gradient computation for \VCRF}
\label{sec:Grad_vcrf}

In this section, we show how Gradient Descent (GD) and Stochastic
Gradient Descent (SGD) can be used to solve the optimization problem
of \VCRF. To do so, we will show how the
subgradient of the contribution to the objective function of a given point
$x_i$ can be computed efficiently. Since the computation of the
subgradient of the regularization term presents no difficulty, it
suffices to show that the gradient of $F_i$, the contribution of point
$x_i$ to the empirical loss term for an arbitrary $i \in [m]$, can be
computed efficiently. In the special case of the Hamming loss or
when loss is omitted from the objective altogether, this coincides
with the standard CRF training procedure. We extend this to more general
families of loss function.

Fix $i \in [m]$. For the \VCRF\ objective, $F_i$ can be rewritten as follows:
\begin{equation*}
F_i(\bw) 
= \frac{1}{m} \log\bigg(\sum_{y \in \cY}
             e^{\loss(y, y_i) - \bw \cdot \d \bPsi(x_i, y_i, y) }
             \bigg)
= \frac{1}{m} \log\bigg(\sum_{y \in \cY}
             e^{\loss(y, y_i) + \bw \cdot \bPsi(x_i, y) }
             \bigg) 
- \frac{\bw \cdot \bPsi(x_i, y_i)}{m}.
\end{equation*}
The following lemma gives the expression of the gradient of $F_i$
and helps identify the key computationally challenging terms $\qq_\bw$. 
\begin{lemma}
\label{lemma:vcrf_grad}
The gradient of $F_i$ at any $\bw$ can be expressed as follows:
\begin{equation*}
\nabla F_i(\bw) 
= \frac{1}{m} \sum_{s = 1}^l \sum_{\bz \in \Delta^p} \Bigg[ \sum_{y\colon y_{s - p +
  1}^s = \bz} \qq_\bw (y) \Bigg] \tl \bpsi(x_i, \bz, s) 
  - \frac{\bPsi(x_i, y_i)}{m},
\end{equation*}
where, for all $y \in \cY$,
\begin{align*}
  &\qq_\bw (y) = \frac{e^{\loss(y, y_i) + \bw \cdot \bPsi(x_i,
    y)}}{Z_\bw},  \\
    &Z_\bw = \sum_{y
    \in \cY} e^{\loss(y, y_i) + \bw \cdot \bPsi(x_i, y)}.
\end{align*}
\end{lemma}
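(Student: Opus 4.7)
The approach is direct differentiation of the log-sum-exp expression for $F_i$, followed by substitution of the Markovian decomposition of $\bPsi$ and a reordering of sums. No concentration or probabilistic machinery is needed—the content of the lemma is that, after the two manipulations, the gradient is expressed in a form whose apparently exponential-size sum $\sum_{y: y_{s-p+1}^s = \bz} \qq_\bw(y)$ is the marginal of $\qq_\bw$ at positions $(s-p+1, \ldots, s)$, which can be computed by dynamic programming (the actual efficiency claim the lemma is pointing at).

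First, I would differentiate $F_i$ term-by-term. The second term $-\frac{\bw\cdot\bPsi(x_i,y_i)}{m}$ contributes $-\frac{\bPsi(x_i,y_i)}{m}$, matching the last piece of the claimed identity. For the first term, applying the chain rule to $\log Z_\bw$ and using $\nabla_\bw [\loss(y,y_i) + \bw\cdot\bPsi(x_i,y)] = \bPsi(x_i,y)$ yields
\begin{equation*}
\nabla \left[\tfrac{1}{m} \log Z_\bw\right] = \frac{1}{m} \sum_{y \in \cY} \frac{e^{\loss(y,y_i) + \bw\cdot\bPsi(x_i,y)}}{Z_\bw}\, \bPsi(x_i,y) = \frac{1}{m} \sum_{y \in \cY} \qq_\bw(y)\, \bPsi(x_i,y).
\end{equation*}

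Second, I would invoke the Markovian decomposition \eqref{eqn:features}, namely $\bPsi(x_i,y) = \sum_{s=1}^l \tl\bpsi(x_i, y_{s-p+1}^s, s)$, and substitute it into the above. Swapping the (finite) sums over $y$ and $s$ gives $\frac{1}{m}\sum_{s=1}^l \sum_{y\in\cY} \qq_\bw(y)\, \tl\bpsi(x_i, y_{s-p+1}^s, s)$. The key observation is that the summand depends on $y$ only through the length-$p$ window $y_{s-p+1}^s$, so I would partition the outer sum over $y$ by the value $\bz \in \Delta^p$ of this window, pulling $\tl\bpsi(x_i, \bz, s)$ outside the inner sum:
\begin{equation*}
\sum_{y\in\cY} \qq_\bw(y)\, \tl\bpsi(x_i, y_{s-p+1}^s, s) = \sum_{\bz\in\Delta^p} \Bigg[\sum_{y: y_{s-p+1}^s=\bz} \qq_\bw(y)\Bigg]\, \tl\bpsi(x_i, \bz, s).
\end{equation*}
Combining the two gradient pieces yields the claimed formula.

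There is no real obstacle: the differentiation is textbook log-sum-exp, the decomposition is an assumption, and the grouping is bookkeeping. The only subtle convention to track is the boundary behavior for $s \leq p-1$, where the substring $y_{s-p+1}^s$ includes the padding symbol $\ve$ introduced earlier; this affects only the meaning of the elements of $\Delta^p$ indexed, not the validity of the identity. The substantive computational content—that the bracketed marginals can be evaluated by forward–backward in time $O(l r^p)$ rather than enumerating $\cY$—is what makes the lemma useful for VCRF training, but it is not part of the statement itself and need not appear in the proof.
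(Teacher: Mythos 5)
Your proposal is correct and follows essentially the same route as the paper's proof: differentiate the log-sum-exp term to obtain $\frac{1}{m}\E_{y \sim \qq_\bw}[\bPsi(x_i, y)] - \frac{\bPsi(x_i, y_i)}{m}$, then apply the Markovian decomposition \eqref{eqn:features} and regroup the sum over $y$ by the value of the window $y_{s-p+1}^s = \bz$. The only difference is cosmetic (the paper writes the intermediate step as an expectation under $\qq_\bw$), so there is nothing to add.
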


\begin{proof}
  In view of the expression of $F_i$ given above, the gradient of
  $F_i$ at any $\bw$ is given by
\begin{align*}
\nabla F_i(\bw) 
& = \frac{1}{m} \sum_{y \in \cY} \frac{e^{\loss(y, y_i) + \bw
  \cdot \bPsi(x_i, y)} }{\sum_{\tilde{y} \in \cY}
  e^{\loss(\tilde{y}, y_i) + \bw \cdot \bPsi(x_i, \tilde{y})}}
  \bPsi(x_i, y) - \frac{\bPsi(x_i, y_i)}{m} \\
  & = \frac{1}{m} \E_{y \sim \qq_\bw} [\bPsi(x_i, y)] - \frac{\bPsi(x_i, y_i)}{m}.
\end{align*}
By \eqref{eqn:features}, we can write
\begin{align*}
  &\E_{y \sim \qq_\bw} [\bPsi(x_i, y)]  = \sum_{y \in \Delta^l} \qq_\bw (y) \sum_{s = 1}^l \tl
  \bpsi(x_i, y_{s - p + 1}^s, s) 
 = \sum_{s = 1}^l \sum_{\bz \in \Delta^p} \Bigg[ \sum_{y\colon y_{s - p +
  1}^s = \bz} \qq_\bw (y) \Bigg] \tl \bpsi(x_i, \bz, s),
\end{align*}
which completes the proof.
\end{proof}

The lemma implies that the key computation in the gradient is
\begin{align}
\label{eq:marginal-dist}
\QQ_\bw(\bz, s) 
 = \sum_{y\colon y_{s - p + 1}^s = \bz} \qq_\bw (y)
 = \sum_{y\colon y_{s - p + 1}^s = \bz}
\frac{e^{\loss(y, y_i)} \, \prod_{t = 1}^l 
e^{\bw \cdot \tl \bpsi(x_i, y_{t - p + 1}^t, t)}}{Z_\bw},
\end{align}
for all $s \in [l]$ and $\bz \in \Delta^p$. 
The sum defining these terms is over a number of sequences $y$ that
is exponential in $|\Delta|$. However, we will show in the following
sections how to efficiently compute $\QQ_\bw(\bz, s)$ for any
$s \in [l]$ and $\bz \in \Delta^p$ in several important cases: (0) in
the absence of a loss; (1) when $\loss$ is Markovian; (2) when $\loss$
is a \emph{rational loss}; and (3) when $\loss$ is the edit-distance
or any other \emph{tropical loss}.

\subsubsection{Gradient computation in the absence of a loss}
\label{sec:GradNoloss}

In that case, it suffices to show how to compute
$Z'_\bw = \sum_{y \in \cY} e^{\bw \cdot \bPsi(x_i, y)}$ and the
following term, ignoring the loss factors:
\begin{equation}
\QQ'_\bw(\bz, s) = \sum_{y\colon y_{s - p + 1}^s = \bz}
    \prod_{t = 1}^l e^{\bw \cdot \tl \bpsi(x_i, y_{t - p + 1}^t, t)},
\end{equation}
for all $s \in [l]$ and $\bz \in \Delta^p$. 
We will show that $\QQ'_\bw(\bz, s)$ coincides with the flow through an
edge of a weighted graph we will define, which leads to an efficient
computation. We will use for any $y \in \Delta^l$, the convention
$y_s = \ve$ if $s \leq 0$.  Now, let $\scrA$ be the weighted finite
automaton (WFA) with the following set of states:
\begin{equation*}
Q_\scrA = \set[\Big]{(y_{t - p + 1}^t, t)\colon y \in \Delta^l, t = 0, \ldots, l},
\end{equation*}
with $I_\scrA = (\ve, 0)$ its single initial state,
$F_\scrA = \set{(y_{l - p + 1}^l, l)\colon y \in \Delta^l}$
its set of final states, and a transition from state
$(y_{t - p + 1}^{t - 1}, t - 1)$ to state
$(y_{t - p + 2}^{t - 1} \, b, t)$ with label $b$ and weight
$\omega(y_{t - p + 1}^{t - 1} \,
  b, t) = e^{\bw \cdot \tl \bpsi(x_i, y_{t - p + 1}^{t - 1}
  b, t)}$, that is the following set of transitions:
\begin{align*}
E_\scrA 
= \Big\{ & \Big( (y_{t - p + 1}^{t - 1}, t - 1), b, \omega(y_{t - p + 1}^{t - 1} \,
  b, t), (
  y_{t - p + 2}^{t - 1} \, b, t) \Big) \colon
 y \in \Delta^l, b \in \Delta, t \in [l] \Big\}.
\end{align*}
Figure~\ref{fig:wfa} illustrates this construction in the case
$p = 2$. The WFA $\scrA$ is deterministic by construction. The weight
of a path in $\scrA$ is obtained by multiplying the weights of its
constituent transitions. In view of that, 
$\QQ'_\bw(\bz, s)$ can be seen as the sum of the weights of all paths in
$\scrA$ going through the transition from state
$(\bz_1^{p - 1}, s - 1)$ to $(\bz_2^p, s)$ with label $z_p$.

\begin{figure}[t] 
\centering
\includegraphics[scale=.33]{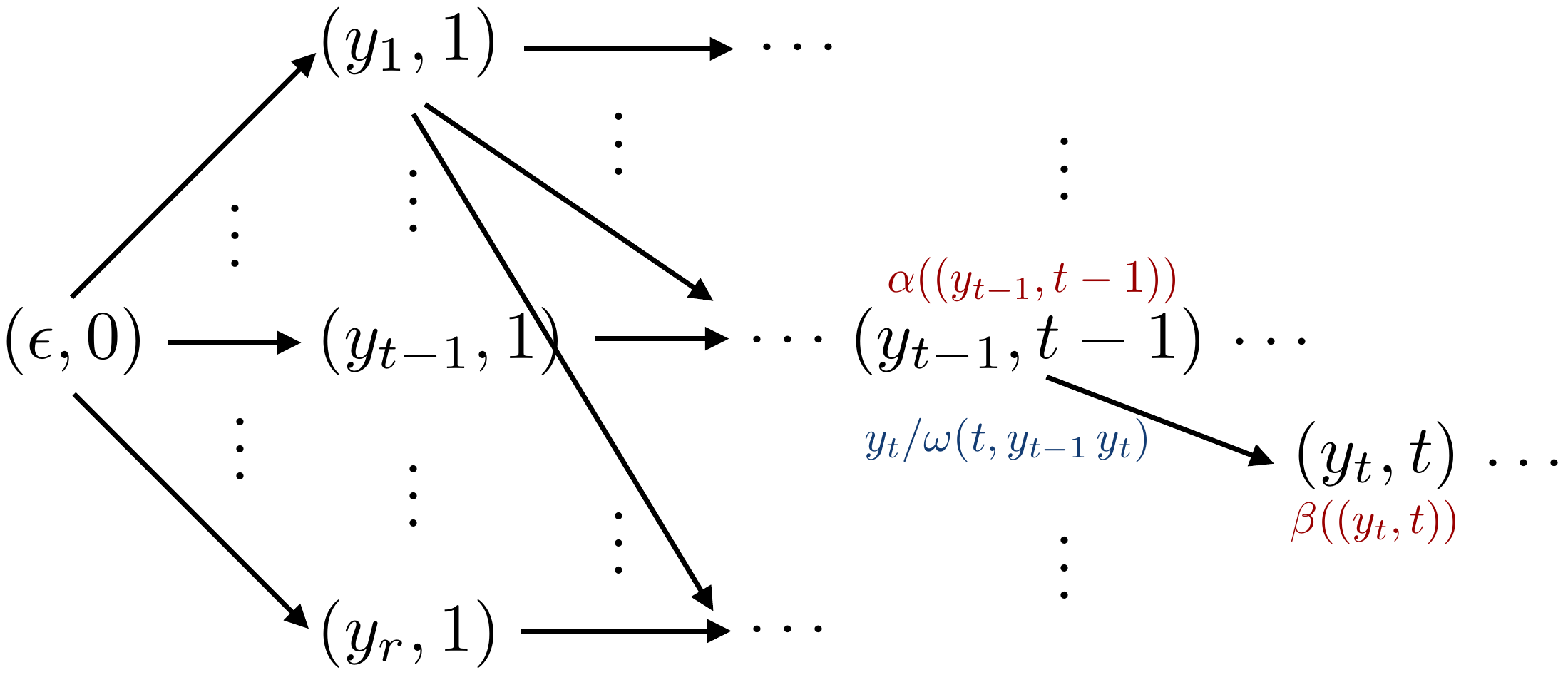}
\caption{Illustration of WFA $\scrA$ for $p = 2$.}
\label{fig:wfa}
\end{figure}

For any state $(y_{t - p + 1}^t, t) \in Q_\scrA$, let
$\alpha((y_{t - p + 1}^t, t))$ denote the sum of the weights of
all paths in $\scrA$ from $I_\scrA$ to $(y_{t - p + 1}^t, t)$
and $\beta((y_{t - p + 1}^t, t))$ the sum of the weights of all
paths from $(y_{t - p + 1}^t, t)$ to a final state. Then,
$\QQ'_\bw(\bz, s)$ is given by
\begin{equation*}
\QQ'_\bw(\bz, s) = \alpha \big((\bz_1^{p - 1}, s - 1) \big) \times
\omega(\bz, s)
\times \beta \big( (\bz_2^p, s) \big).
\end{equation*}
Note also that $Z'_\bw$ is simply the sum of the
weights of all paths in $\scrA$, that is $Z'_\bw = \beta((\ve, 0))$.

Since $\scrA$ is acyclic, $\alpha$ and $\beta$ can be computed for all
states in linear time in the size of $\scrA$ using a single-source
shortest-distance algorithm over the $(+, \times)$ semiring or the
so-called forward-backward algorithm. Thus, since $\scrA$ admits
$O(l |\Delta|^p )$ transitions, we can compute all of the quantities
$\QQ'_\bw(\bz, s)$, $s \in [l]$ and $z \in \Delta^p$ and $Z'_\bw$, in
time $O(l |\Delta|^p )$.

\subsubsection{Gradient computation with a Markovian loss}
\label{sec:GradMarkov}

We will say that a \emph{loss function $\loss$ is Markovian} if it
admits a decomposition similar to the features, that is
for all $y, y' \in \cY$,
\begin{equation*}
\loss(y, y') = \sum_{t = 1}^l \loss_t(y_{t - p + 1}^t, {y'}_{t -
  p + 1}^{t}).
\end{equation*}
In that case, we can absorb the losses in the transition
weights and define new transition weights $\omega'$ as follows:
\begin{align*}
\omega'(t, y_{t - p + 1}^{t - 1} \,
  b) = e^{L_t(y_{t - p + 1}^{t - 1} \,
  b, {(y_i)}_{t - p + 1}^{t - 1} \,
  b)}  \omega(y_{t - p + 1}^{t - 1} \,
  b, t).
\end{align*}
Using the resulting WFA $\scrA'$ and precisely the same techniques as
those described in the previous section, we can compute all
$\QQ_\bw(\bz, s)$ in time $O(l |\Delta|^p )$.  In particular, we can
compute efficiently these quantities in the case of the Hamming loss
which is a Markovian loss for $p = 1$.

\ignore{
\subsubsection{Gradient computation with a rational loss}
\label{sec:GradRat}

Let $(\Rset_+ \cup \set{+\infty}, +, \times, 0, 1)$ be the
\emph{probability semiring} and let $\scrU$ be a weighted transducer
over the probability semiring admitting $\Delta$ as both the input and
output alphabet. Then, following
\citeapp{CortesKuznetsovMohriWarmuth2015}, the \emph{rational loss}
associated to $\scrU$ is the function
$L_\scrU\colon \Delta^* \times \Delta^* \to \Rset \cup \set{-\infty,
  +\infty}$ defined for all $y, y' \in \Delta^*$ by
\begin{equation}
\label{eq:rationalloss}
L_\scrU(y, y')  = -\log \big( \scrU(y, y') \big).
\end{equation}
A common example of a rational loss is the \emph{$n$-gram loss} which
can help approximate the BLEU score used in machine translation. The
$n$-gram loss of $y$ and $y'$ is $-\log$ of the inner product of
the vectors of $n$-gram counts of $y$ and $y'$. The weighted
transducer $\scrU_{\text{$n$-gram}}$ of an $n$-gram loss is obtained
by composing a weighted transducer $\scrT_{\text{$n$-gram}}$ giving
the $n$-gram counts with its inverse $\scrT_{\text{$n$-gram}}^{-1}$,
that is the transducer derived from $\scrT_{\text{$n$-gram}}$ by
swapping input and output labels for each transition. As an example,
Figure~\ref{fig:bigram}(a) shows the weighted transducer
$\scrT_{\text{bigram}}$ for bigrams.

\begin{figure}[t] 
\centering
\begin{tabular}{cc}
\includegraphics[scale=.6]{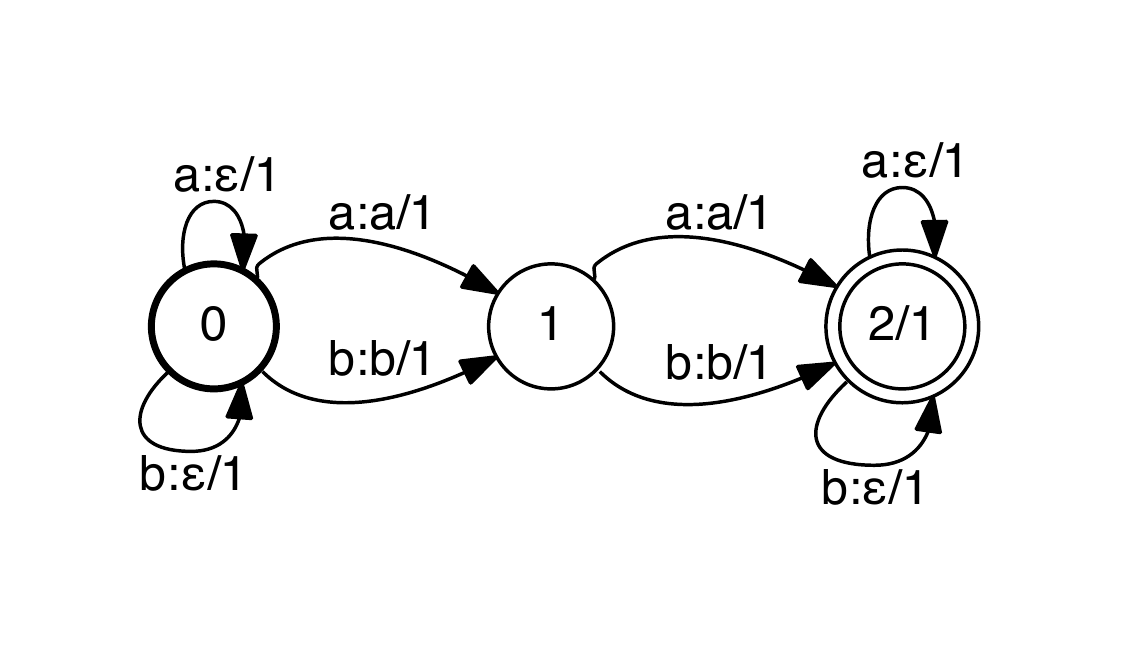}\\
(a)\\[.25cm]
\includegraphics[scale=.6]{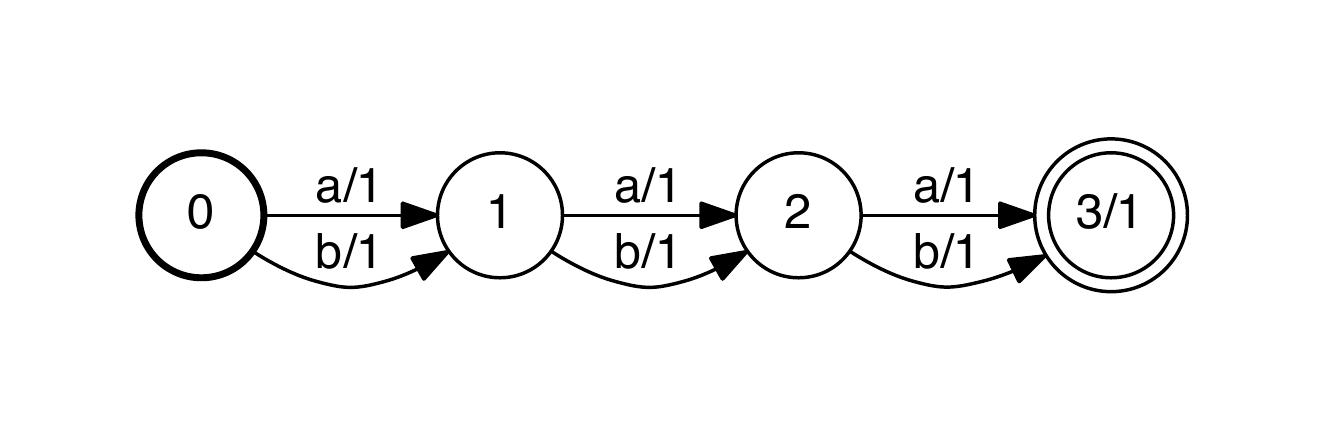}\\
(b)
\end{tabular}
\caption{(a) Bigram transducer $\scrT_{\text{bigram}}$ over the
  semiring $(\Rset_+ \cup \set{+\infty}, +, \times, 0, 1)$ for the
  alphabet $\Delta = \{a, b \}$. The weight of each transition (or
  that of a final state) is indicated after the slash separator. For
  example, for any string $y$ and bigram $\bu$,
  $\scrT_{\text{bigram}}(y, \bu)$ is equal to the number of
  occurrences of $\bu$ in $y$
  \citeapp{CortesKuznetsovMohriWarmuth2015}. (b) Illustration of the WFA
  $\ov \scrY$ for $\Delta = \set{a, b}$ and $l = 3$.}
\label{fig:bigram}
\end{figure}

Pseudocode of the  algorithm for computing the key terms $\QQ_\bw(\bz, s)$
for a rational 
loss is given in Figure~\ref{alg:GradRat}.

\begin{figure}[t]
\begin{ALGO}{Grad-\VCRF-Rational}{x_i, y_i}
\SET{\ov \scrY}{\text{WFA accepting any $y \in \Delta^l$ with weight one.}}
\SET{\scrY_i}{\text{WFA accepting $y_i$ with weight one.}}
\SET{\scrM}{\Pi_1(\ov \scrY \circ \scrU \circ \scrY_i)}
\SET{\scrM}{\Det(\scrM)}
\SET{\scrB}{\Call{InverseWeights}{\scrM}}
\SET{\scrA}{\text{WFA from 
    Sections~\ref{sec:GradNoloss}-\ref{sec:GradMarkov} given $(x_i, y_i)$.}}
\SET{\scrC}{\scrA \circ \scrB}
\SET{\alpha}{\Call{ShortestDistFromInitial}{\scrC, (+, \times)}}
\SET{\beta}{\Call{ShortestDistToFinal}{\scrC, (+, \times)}}
\SET{Z_\bw}{\beta(I_\scrC) \ \triangleright \ \text{$I_\scrC$ initial
    state of $\scrC$}}
\DOFOR{(\bz, s) \in \Delta^l \times [l]}
\SET{\mspace{-35mu}\QQ_\bw(\bz, s)}{\mspace{-15mu}\displaystyle \sum_{e \in E_{\bz,
      s}} \mspace{-10mu} \alpha
  (\orig(e)) 
\times
  \weight(e)
  \times \beta (\dest(e))}
\SET{\mspace{-35mu}\QQ_\bw(\bz, s)}{\frac{\QQ_\bw(\bz, s)}{Z_\bw}}
\OD
\end{ALGO}
\vskip -.5cm
\caption{Computation of the key term of the gradient for \VCRF\ using a
  rational loss.}
\label{alg:GradRat}
\end{figure}

To compute $\QQ_\bw(\bz, s)$ for a rational loss, we first compute
a deterministic WFA $\scrB$ which associates to each sequence
$y \in \Delta^l$ the exponential of the loss
\begin{equation*}
e^{L_\scrU(y, y_i)}  = e^{-\log ( \scrU(y, y_i) )} = \frac{1}{\scrU(y, y_i)}.
\end{equation*}
Let $\ov \scrY$ denote a WFA over the probability semiring accepting
the set of all strings of length $l$ with weight one and let $\scrY_i$
denote the simple WFA accepting only $y_i$ with weight
one. Figure~\ref{fig:bigram}(b) illustrates the construction of
$\ov \scrY$ in a simple case. \footnote{Note that we do not need to
  explicitly construct $\ov \scrY$, which could be costly when the
  alphabet size $\Delta$ is large. Instead, we can create its
  transitions on-the-fly as demanded by the composition operation
  \citeapp{Mohri2009}.  Thus, for the rational kernels commonly used, at
  most the transitions labeled with the alphabet symbols appearing in
  $\scrY_i$ need to be created.}  We first use the composition
operation for weighted automata and transducers \citeapp{Mohri2009} and
then we use the  projection operation on the input, which we denote by $\Pi_1$, 
to compute the
following WFA:\footnote{See Appendix~\ref{app:fst_ops} for a review of
  some basic operations for weighted automata and transducers,
  including composition.}
$\scrM = \Pi_1(\ov \scrY \circ \scrU \circ \scrY_i)$.

By definition of composition, for any $y \in \Delta^l$,
\begin{align*}
\scrM(y) 
& = [\Pi_1(\ov \scrY \circ \scrU \circ \scrY_i)](y)
 = (\ov \scrY \circ \scrU \circ \scrY_i)(y, y_i)
 = \sum_{\bz = y, \bz' = y_i}  \ov \scrY(\bz) \scrU(\bz, \bz') \scrY_i(\bz')\\
& = \ov \scrY(y) \scrU(y, y_i) \scrY_i(y_i)
  = \scrU(y, y_i),
\end{align*}
where we used the equalities $\scrY(y) = \scrY_i(y_i) = 1$, which
hold by construction.  Next, we can apply weighted determinization
\citeapp{Mohri1997} to compute a deterministic WFA equivalent to
$\scrM$, denoted by $\Det(\scrM)$.  By
\citeapp{CortesKuznetsovMohriWarmuth2015}[Theorem~3], $\Det(\scrM)$ can
be computed in polynomial time.  Since $\Det(\scrM)$ is deterministic
and by construction accepts precisely the set of strings
$y \in \Delta^l$, it admits a unique accepting path labeled with
$y$ whose weight is
$\Det(\scrM)(y) = \scrM(y) = \scrU(y, y_i)$. The weight of
that accepting path is obtained by multiplying the weights of its
transitions and that of the final state. Let $\scrB$ be the WFA
derived from $\Det(\scrM)$ by replacing each transition weight or
final weight $u$ by its inverse $\frac{1}{u}$.  Then, by construction,
for any $y \in \Delta^l$,
$\scrB(y) = \frac{1}{\scrU(y, y_i)}$.

Now consider the WFA $\scrA \circ \scrB$, composition of $\scrA$ and
$\scrB$. $\scrC = \scrA \circ \scrB$ is deterministic since both $\scrA$ and
$\scrB$ 
are deterministic. $\scrC$ can be computed in time $O(|\scrA| |\scrB|)$.
By definition, for all $y \in \Delta^l$,
\begin{align*}
\scrC(y) 
 = \scrA(y) \times \scrB(y) 
 = \prod_{t = 1}^l e^{\bw \cdot \tl \bpsi(x_i, y_{t - p + 1}^t, t)}
  \times \frac{1}{\scrU(y, y_i)}
 = e^{\loss(y, y_i)} \, \prod_{t = 1}^l e^{\bw \cdot \tl \bpsi(x_i, y_{t - p + 1}^t, t)}.
\end{align*}
Since furthermore $\scrC$ admits a unique accepting path labeled with
any $y$, we can use precisely the same flow computation techniques
as those described in Sections~\ref{sec:GradNoloss}-\ref{sec:GradMarkov} to
compute $\QQ(\bz, s)$. As in those sections, 
for any state $q_\scrC$ of $\scrC$, let $\alpha(q_\scrC)$
denote the sum of the weights of all
paths from the initial state $I_\scrC$ of $\scrC$ to $q_\scrC$ and
$\beta(q_\scrC)$ the sum of the weights of all path from
$q_\scrC$ to a final state. 

The states of $\scrC$ can be identified with pairs
$(q_\scrA, q_\scrB)$ with $q_\scrA$ a state of $\scrA$ and $q_\scrB$ a
state of $\scrB$ and the transitions are obtained by matching a
transition in $\scrA$ with one with $\scrB$. Thus, for any
$\bz \in \Delta^p$ and $s \in [l]$, let $E_{\bz, s}$ be the set of
transitions of $\scrC$ constructed by pairing the transition in
$\scrA$ $((\bz_1^{p - 1}, s - 1), z_p, \omega(\bz, s), (\bz_2^p, s))$
with a transition in $\scrB$:
\begin{align}
  &E_{\bz, s}  = \set[\Big]{\big( (q_\scrA, q_\scrB), z_p, \omega,
  (q'_\scrA, q'_\scrB) \big) \in E_\scrC 
   \colon
  q_\scrA = (\bz_1^{p - 1}, s - 1)}.
\end{align}
Note that, since $\scrC$ is deterministic, there can be only one
transition leaving a state labeled with $z_p$, thus to define
$E_{\bz, s}$, we only needed to specify the origin state of the
transitions.

For any $e \in E_{\bz, s}$, let $\orig(e)$ denote the origin state
of $e$, $\dest(e)$ its destination state, and $\weight(e)$ its weight.
Then, $\QQ_\bw(\bz, s)$ can be computed as follows from $E_{\bz, s}$:
\begin{equation}
  \QQ_\bw(\bz, s) = \sum_{e \in E_{\bz, s}} \alpha (\orig(e)) \times
  \weight(e)
  \times \beta (\dest(e)).
\end{equation}

\subsubsection{Gradient computation with the edit-distance loss}
\label{sec:GradTrop}

The general edit-distance of two sequences is the minimal cost of a
series of edit operations transforming one sequence into the other,
where the edit operations are insertion, deletion, and substitution,
with non-negative costs not necessarily equal.  It is known that the
general edit-distance of two sequences $y$ and $y'$ can be computed
using a weighted transducer $\scrU_{\text{edit}}$ over the tropical
semiring $(\Rset \cup \set{-\infty, +\infty}, \min, +, +\infty, 0)$ in
time $O(|y| |y'|)$ \citeapp{Mohri2003}.  Figure~\ref{fig:losses}(a) shows
that transducer for an alphabet of size two and some particular edit
costs. The edit-distance of two sequences $y, y' \in \Delta^*$ is given
by $\scrU_{\text{edit}}(y, y')$. Note that $\scrU_{\text{edit}}$
can be computed on-the-fly as demanded by the composition
operation, thereby creating only transitions with alphabet symbols 
appearing in the strings compared.

\begin{figure}[t] 
\centering
\begin{tabular}{cc}
\includegraphics[scale=.5]{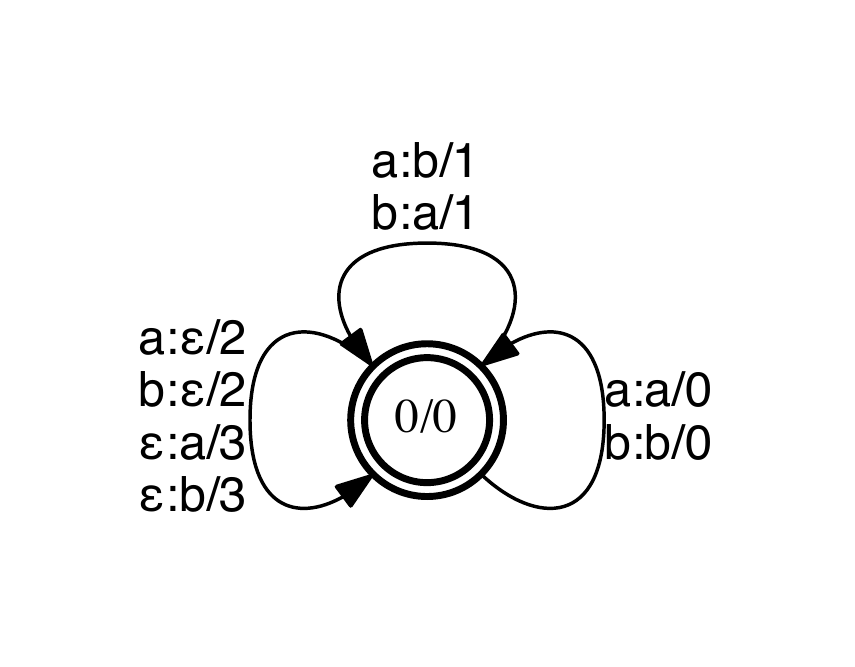} &
\includegraphics[scale=.5]{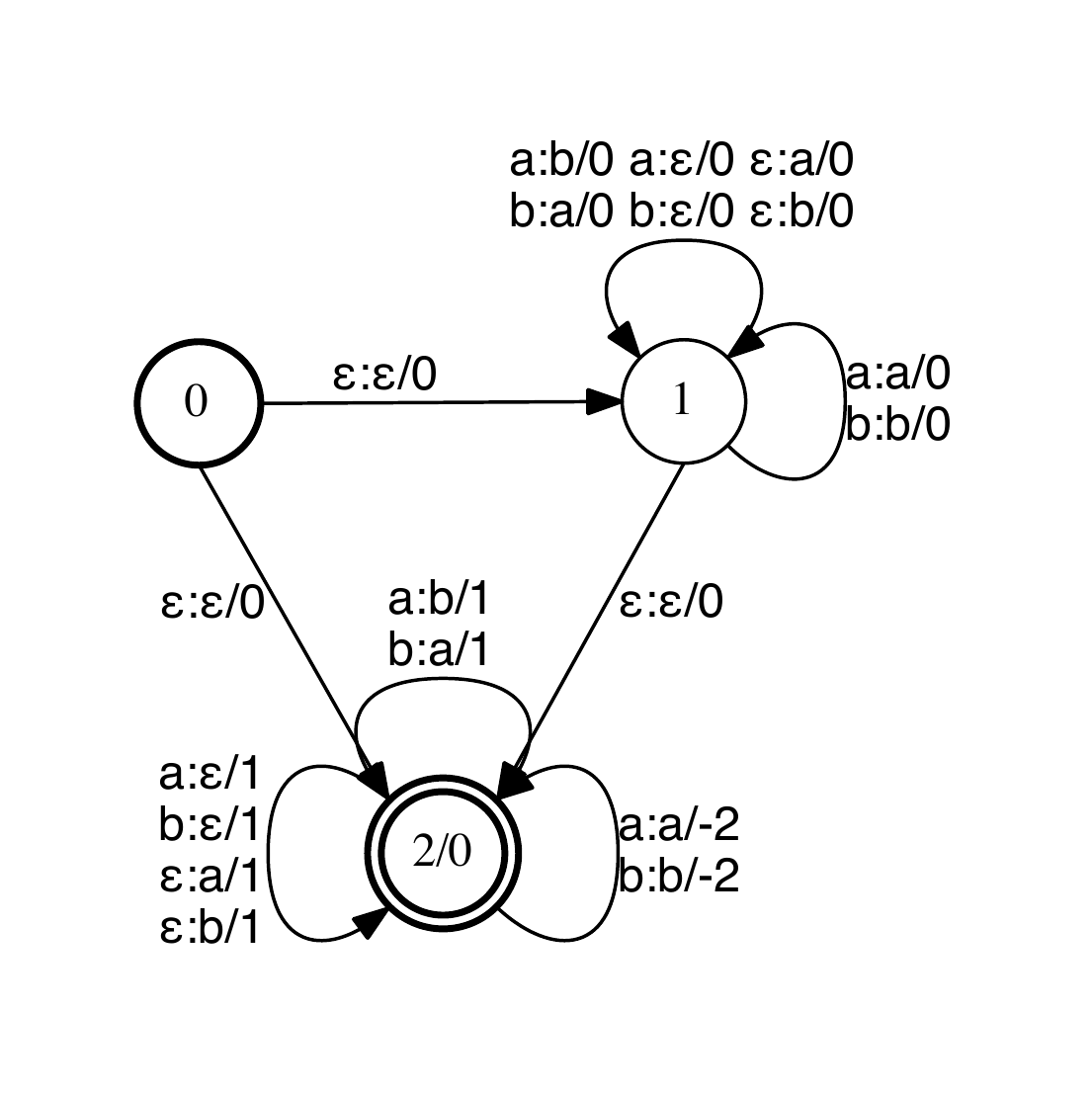}\\
(a) & (b)
\end{tabular}
\caption{(a) Edit-distance transducer $\scrU_{\text{edit}}$ over the
  tropical semiring, in the case where the substitution cost is $1$,
  the deletion cost $2$, the insertion cost $3$, and the alphabet
  $\Delta = \{a, b \}$.  (b) Smith-Waterman transducer
  $\scrU_{\text{Smith-Waterman}}$ over the tropical semiring, in the
  case where the substitution, deletion and insertion costs are $1$,
  and where the matching cost is $-2$, for the alphabet
  $\Delta = \{a, b \}$}
\label{fig:losses}
\end{figure}
More generally, following
\citeapp{CortesKuznetsovMohriWarmuth2015}, the \emph{tropical loss}
associated to a weighted transducer $\scrU$ over the tropical semiring
is the function $L_\scrU\colon \Delta^* \times \Delta^* \to \Rset$
coinciding with $\scrU$; thus, for all $y, y' \in \Delta^*$,
\begin{equation}
\label{eq:tropicalloss}
L_\scrU(y, y')  = \scrU(y, y').
\end{equation}
As an example, Figure~\ref{fig:losses}(b) shows that the tropical
transducer corresponding to the Smith-Waterman distance commonly used
in computational biology.

Pseudocode of the  algorithm for computing the key terms $\QQ_\bw(\bz, s)$ for a rational 
loss is given in Figure~\ref{alg:GradTrop}. 

Let $\ov \scrY$ denote a WFA over the tropical semiring accepting the
set of all strings of length $l$ with weight zero and let $\scrY_i$
denote the simple WFA accepting only $y_i$ with weight zero. 

We first use the composition operation for weighted
automata and transducers \citeapp{Mohri2009} and projection to compute the following WFA:
$\scrM = \Pi_1(\ov \scrY \circ \scrU \circ \scrY_i)$.

By definition of composition, for any $y \in \Delta^l$,
\begin{align*}
\scrM(y) 
& = [\Pi_1(\ov \scrY \circ \scrU \circ \scrY_i)](y)
 = (\ov \scrY \circ \scrU \circ \scrY_i)(y, y_i)
 = \min_{\bz = y, \bz' = y_i}  \ov \scrY(\bz) + \scrU(\bz, \bz')
  + \scrY_i(\bz')\\
& = \ov \scrY(y) + \scrU(y, y_i) + \scrY_i(y_i)
 = \scrU(y, y_i),
\end{align*}
where we used the equalities $\scrY(y) = \scrY_i(y_i) = 0$, which
hold by construction.  Next, we can apply weighted determinization
\citeapp{Mohri1997} to compute a deterministic WFA equivalent to
$\scrM$, denoted by $\Det(\scrM)$.  Since $\Det(\scrM)$ is
deterministic and by construction accepts precisely the set of strings
$y \in \Delta^l$, it admits a unique accepting path labeled with
$y$ whose weight is
$\Det(\scrM)(y) = \scrM(y) = \scrU(y, y_i)$. The weight of
that accepting path is obtained by adding the weights of its
transitions and that of the final state. Let $\scrB$ be the WFA over
the probability semiring derived from $\Det(\scrM)$ by replacing each
transition weight or final weight $u$ by its exponential $e^u$.  Then,
by construction, for any $y \in \Delta^l$,
$\scrB(y) = e^{\scrU(y, y_i)}$.

Now consider the WFA $\scrC = \scrA \circ \scrB$. $\scrC$ is
deterministic since both $\scrA$ and $\scrB$ are
deterministic. $\scrC$ can be computed in time $O(|\scrA| |\scrB|)$.
By definition, for all $y \in \Delta^l$,
\begin{align*}
\scrC(y) 
= \scrA(y) \times \scrB(y) 
= e^{\loss(y, y_i)} \, \prod_{t = 1}^l e^{\bw \cdot \tl \bpsi(x_i, y_{t - p + 1}^t, t)}.
\end{align*}
The rest of the computation coincides with what we described
in the previous section for rational losses. 

\begin{figure}[t]
\begin{ALGO}{Grad-\VCRF-Tropical}{x_i, y_i}
\SET{\ov \scrY}{\text{WFA accepting any $y \in \Delta^l$ with weight zero.}}
\SET{\scrY_i}{\text{WFA accepting $y_i$ with weight zero.}}
\SET{\scrM}{\Pi_1(\ov \scrY \circ \scrU \circ \scrY_i)}
\SET{\scrM}{\Det(\scrM)}
\SET{\scrB}{\Call{ExponentiateWeights}{\scrM}}
\SET{\scrA}{\text{WFA from 
    Sections~\ref{sec:GradNoloss}-\ref{sec:GradMarkov} given $(x_i, y_i)$.}}
\SET{\scrC}{\scrA \circ \scrB}
\SET{\alpha}{\Call{ShortestDistFromInitial}{\scrC, (+, \times)}}
\SET{\beta}{\Call{ShortestDistToFinal}{\scrC, (+, \times)}}
\SET{Z_\bw}{\beta(I_\scrC) \ \triangleright \ \text{$I_\scrC$ initial
    state of $\scrC$}}
\DOFOR{(\bz, s) \in \Delta^l \times [l]}
\SET{\mspace{-35mu}\QQ_\bw(\bz, s)}{\mspace{-15mu}\displaystyle
  \sum_{e \in E_{\bz, s}} \mspace{-10mu} \alpha (\orig(e)) \times
  \weight(e)
  \times \beta (\dest(e))}
\SET{\mspace{-35mu}\QQ_\bw(\bz, s)}{\frac{\QQ_\bw(\bz, s)}{Z_\bw}}
\OD
\end{ALGO}
\vskip -.5cm
\caption{Computation of the key term of the gradient for \VCRF\ using a
  tropical loss.}
\label{alg:GradTrop}
\end{figure}
}

\subsection{Efficient gradient computation for \VStructBoost}
\label{sec:Grad_vsb}

In this section, we briefly describe the gradient computation for
\VStructBoost, which follows along similar lines as the discussion for
\VCRF.

Fix $i \in [m]$ and let $F_i$ denote the contribution of point $x_i$
to the empirical loss in \VStructBoost.  Using the equality
$\loss(y_i, y_i) = 0$, $F_i$ can be rewritten as
\begin{align*}
F_i(\bw)
& = \frac{1}{m} \sum_{y \neq y_i} \loss(y, y_i)
e^{-\bw \cdot \d \bPsi(x_i, y_i, y) } 
 = \frac{1}{m} e^{-\bw \cdot \bPsi(x_i, y_i) } 
\sum_{y \in \Delta^l} \loss(y, y_i)
e^{\bw \cdot \bPsi(x_i, y) }.
\end{align*}
The gradient of $F_i$ can therefore be expressed as follows:
\begin{align}
\label{eq:vsb_grad}
\nabla F_i(\bw) 
&  = \frac{1}{m} e^{-\bw \cdot \bPsi(x_i, y_i) } 
\sum_{y \in \Delta^l} \loss(y, y_i)
e^{\bw \cdot \bPsi(x_i, y) } \bPsi(x_i, y) \\
& \quad - \frac{1}{m} e^{-\bw \cdot \bPsi(x_i, y_i) } 
\bPsi(x_i, y_i) \sum_{y \in \Delta^l} \loss(y, y_i)
e^{\bw \cdot \bPsi(x_i, y) }. \nonumber
\end{align}
Efficient computation of these terms is not straightforward, since
the sums run over exponentially many sequences $y$. However,
by leveraging the Markovian property of the features, we can reduce
the calculation to flow computations over a weighted 
directed graph, in a manner analogous to what we demonstrated for 
\VCRF. 

\begin{figure}[t] 
\centering
\includegraphics[scale=.31]{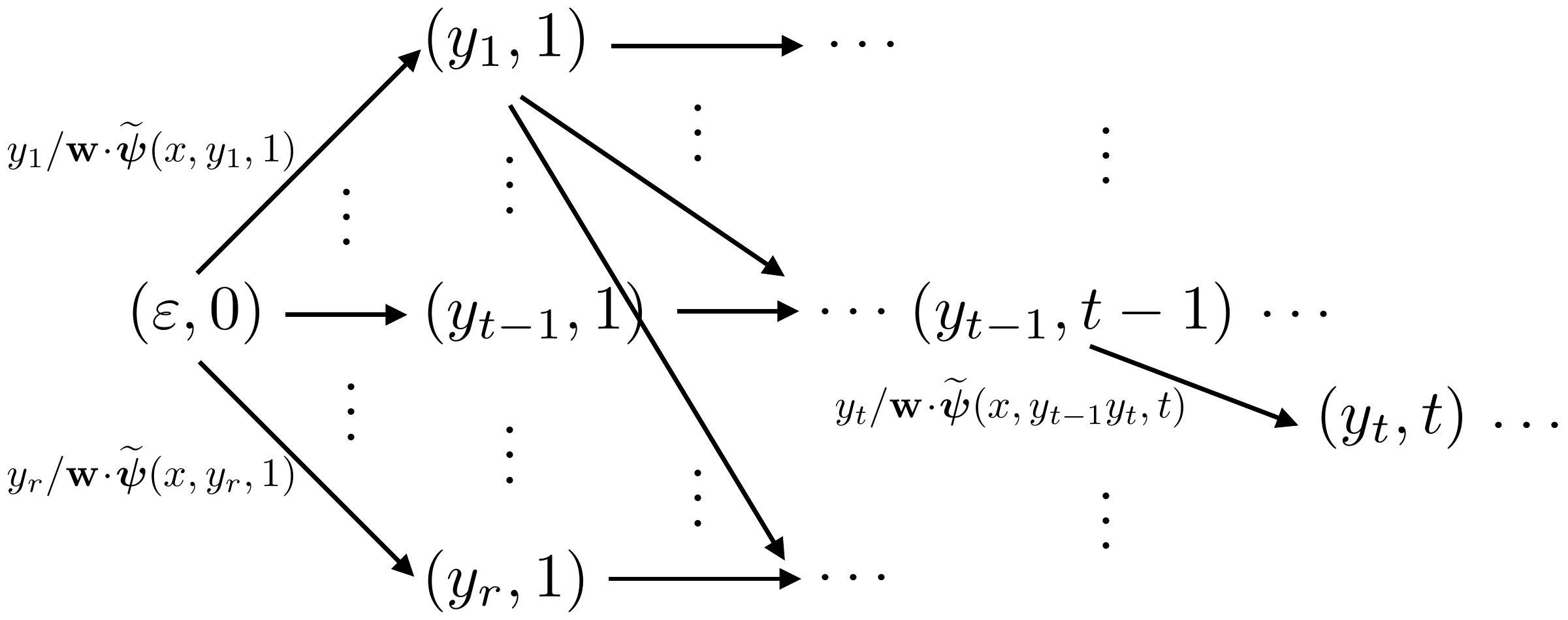}
\caption{Illustration of the WFA $\scrA'$ for $p = 2$.}
\label{fig:inference}
\end{figure}

\subsection{Inference}

In this section, we describe an efficient algorithm for inference when
using Markovian features. The algorithm consists of a standard
single-source shortest-path algorithm applied to a WFA $\scrA'$
differs from the WFA $\scrA$ only by the weight of each transition,
defined as follows:
\begin{align*}
E_{\scrA'}
= \Big\{ \Big( & (\bar y_{t - p + 1}^{t - 1}, t - 1), b, \bw \cdot \tl \bpsi(x, y_{t - p + 1}^{t - 1} b, t),
          (\bar
  y_{t - p + 2}^{t - 1} \, b, t) \Big) \colon 
y \in \Delta^l, b \in \Delta, t \in [l] \Big\}.
\end{align*}
Furthermore, here, the weight of a path is obtained by adding the
weights of its constituent transitions.  Figure~\ref{fig:inference}
shows $\scrA'$ in the special case of $p = 2$. By
construction, the weight of the unique accepting path in $\scrA'$ labeled with
$y \in \Delta^l$ is
$\sum_{t = 1}^l \bw \cdot \tl \bpsi(x, y_{t - p + 1}^{t - 1} b, t)
= \bw \cdot \bPsi(x, y)$.

Thus, the label of the single-source shortest path,
$\argmin_{y \in \Delta^l} \bw \cdot \bPsi(x, y)$, is the desired
predicted label. Since $\scrA'$ is acyclic, the running-time
complexity of the algorithm is linear in the size of $\scrA'$, that is
$O(l |\Delta|^l)$.

\ignore{
\section{Weighted transducer operations}
\label{app:fst_ops}
The following are some standard operations defined over WFSTs.

The \emph{inverse} of a WFST $\scrT$ is denoted by $\scrT^{-1}$ and
defined as the transducer obtained by swapping the input and output
labels of every transition of $\scrT$, thus
$\scrT^{-1}(x, y) = \scrT(y, x)$ for all $(x, y)$.

The \emph{projection} of a WFST $\scrT$ is the weighted automaton
denoted by $\Pi(\scrT)$ obtained from $\scrT$ by omitting the input
label of each transition and keeping only the output label.

The \emph{$\oplus$-sum} of two WFSTs $\scrT_1$ and $\scrT_2$ with the
same input and output alphabets is a weighted transducer denoted by
$\scrT_1 \oplus \scrT_2$ and defined by
$(\scrT_1 \oplus \scrT_2) (x, y) = \scrT_1(x, y) \oplus \scrT_2(x, y)$
for all $(x, y)$.  $\scrT_1 \oplus \scrT_2$ can be computed from
$\scrT_1$ and $\scrT_2$ in linear time, that is
$O(|\scrT_1| + |\scrT_2|)$ where $|\scrT_1|$ is the sum of the number
of states and transitions in the size of $\scrT_1$ and $\scrT_2$,
simply by merging the initial states of $\scrT_1$ and $\scrT_2$.  The
sum can be similarly defined for WFAs.

The \emph{composition} of two WFSTs $\scrT_1$ with output alphabet
$\Delta$ and $\scrT_2$ with a matching input alphabet $\Delta$ is a
weighted transducer defined for all $x, y$ by
\citeapp{PereiraRiley1997,MohriPereiraRiley1996}:
\begin{equation}
  (\scrT_1 \circ \scrT_2) (x, y) = \bigoplus_{z \in \Delta^*} \Big( \scrT_1(x, z)
  \otimes \scrT_2(z, y) \Big).
\end{equation}
The sum runs over all strings $z$ labeling a path of $\scrT_1$ on the
output side and a path of $\scrT_2$ on input label $z$. The worst case
complexity of computing $(\scrT_1 \circ \scrT_2)$ is quadratic, that
is $O(|\scrT_1| | \scrT_2| )$, assuming that the $\otimes$-operation
can be computed in constant time \citeapp{Mohri2009}.
The composition operation can also be used with WFAs by viewing a WFA
as a WFST with equal input and output labels at every
transition. Thus, for two WFAs $\scrA_1$ and $\scrA_2$,
$(\scrA_1 \circ \scrA_2)$ is a WFA defined for all $x$ by
\begin{equation}
  (\scrA_1 \circ \scrA_2) (x) = \scrA_1(x) \otimes \scrA_2(x) .
\end{equation}

As for (unweighted) finite automata, there exists a
\emph{determinization} algorithm for WFAs. The algorithm returns a
deterministic WFA equivalent to its input WFA
\citeapp{Mohri1997}. Unlike the unweighted case, weighted
determinization is not defined for all input WFAs 
but it can be applied in particular to any acyclic WFA,
which is the case of interest for us.  When it can be applied
to $\scrA$, we will denote by $\Det(\scrA)$ the deterministic WFA
returned by determinization.
}

\newpage
\bibliographystyleapp{abbrvnat}
\begin{small} 
\bibliographyapp{vcrf}
\end{small}
\end{document}